\documentclass[a4paper,11pt]{article}

\usepackage{amsmath,amssymb,amsfonts,amsthm}

\usepackage[preprint,nonatbib]{neurips_2024}
\usepackage[T1]{fontenc}    
\usepackage{url}            
\usepackage{booktabs}       
\usepackage{amsfonts}       
\usepackage{nicefrac}       
\usepackage{microtype}      
\usepackage{xcolor}         

\usepackage{pgfplots}
\pgfplotsset{compat=1.15} 
\usepackage{pgfplotstable}
\pgfplotsset{compat=1.15}

\usepackage{enumitem}
\usepackage{amssymb, amsmath}
\usepackage{mathrsfs}
\usepackage{amscd} \usepackage[active]{srcltx} \usepackage{verbatim}
\usepackage[colorlinks,linkcolor={blue},citecolor={blue},urlcolor={black}]{hyperref}
\usepackage[utf8]{inputenc}

\usepackage[colorinlistoftodos,prependcaption,textsize=footnotesize, color=red!20]{todonotes}

\usepackage{tikz}
\usetikzlibrary{backgrounds}
\usetikzlibrary{decorations.pathmorphing}
\usetikzlibrary{arrows}
\usetikzlibrary{shapes.geometric}
\usetikzlibrary{decorations.pathreplacing,calc}
\usetikzlibrary{calc}
\usetikzlibrary {arrows.meta,bending,positioning}

\usepackage{tikz}
\usepackage{lmodern}

\usepackage{xcolor}
\usepackage{caption}
\usepackage{wrapfig}

\theoremstyle{plain}
\newtheorem{theorem}{Theorem}
\newtheorem{lemma}{Lemma}[section]

\newtheorem{proposition}[lemma]{Proposition}

\newtheorem{definition}[lemma]{Definition}
\newtheorem{assumption}[lemma]{Assumption}

\newtheorem*{definition*}{Definition}

\theoremstyle{remark}
\newtheorem{remark}[lemma]{Remark}

\newtheorem*{claim*}{Claim}
\newtheorem*{remark*}{Remark}
\newtheorem*{example*}{Example}

\newtheorem*{notation*}{Notation}

\numberwithin{equation}{section}

\newcommand{\x}{{\mathbf x}}

\renewcommand{\phi}{\varphi}

\newcommand{\PP}{\mathbb{P}}

\DeclareMathOperator{\argmin}{argmin}

\DeclareMathOperator{\sgn}{sgn}

\newcommand{\TK}[1]{\tau^{\rm Ker}_{#1}}

\newcommand{\TPM}{\tau^{\rm Pred}_M}

\newcommand{\cA}{\mathcal{A}}

\newcommand{\RRd}{\mathbb{R}^d}

\newcommand{\RR}{\mathbb{R}}

\newcommand{\cF}{\mathcal{F}}

\renewcommand{\tilde}{\widetilde}

\newcommand{\hs}{\ensuremath{\hspace{1cm}}}
\newcommand{\h}{\ensuremath{\hspace{0.1cm}}}
\newcommand{\EE}{\ensuremath{\mathbb{E}}}

\newcommand{\W}{\ensuremath{\mathbf{W}}}
\newcommand{\p}{\ensuremath{\mathbf{p}}}

\newcommand{\NN}{\ensuremath{\mathbb{N}}}

\newcommand{\indiq}{\hbox{\rm 1}{\hskip -2.8 pt}\hbox{\rm I}}

\def\highlights{0} 
 \ifx\highlights\undefined
\newcommand{\edit}[1]{#1}
\else
  \if\highlights1
\newcommand{\edit}[1]{{\color{red}{#1}}}
  \else
\newcommand{\edit}[1]{#1}
  \fi
\fi

\newcommand\QQ{\mathbb{Q}}

\newcommand{\step}[1]{\paragraph{\textbf{Step #1}}}

\newcommand{\buildingsite}[1]{}
\newcommand{\findme}{}

\usepackage{fancyhdr}
\makeatletter

\title{Large Spikes in Stochastic Gradient Descent: A Large-Deviations View}

\author{
Benjamin Gess \\
Technische Universit\"at Berlin, Fakult\"at II \\
10623 Berlin, Germany \\
Max Planck Institute for Mathematics in the Sciences \\
04103 Leipzig, Germany \\
\texttt{benjamin.gess@mis.mpg.de}
\And
Daniel Heydecker \\ University of Oslo, Blindern\\ 0316 Oslo, Norway \\ 
\texttt{daniehey@math.uio.no}
}

\begin{document}
\maketitle

\begin{abstract}
Large loss spikes in stochastic gradient descent are studied through a rigorous large-deviations analysis for a shallow, fully connected network in the NTK scaling. In contrast to full-batch gradient descent, the catapult phase is shown to split into inflationary and deflationary regimes, determined by an explicit log-drift criterion. In both cases, large spikes are shown to be at least polynomially likely. In addition, these spikes are shown to be the dominant mechanism by which sharp minima are escaped and curvature is reduced, thereby favouring flatter solutions. Corresponding results are also obtained for certain ReLU networks, and implications for curriculum learning are derived.
\end{abstract}

\makeatother
\lhead{Spikes in SGD: an LDP View}
\rhead{Gess, Heydecker}



\parindent=0cm



\def\newintro{2} 

 \section{Introduction}


\ifx\newintro\undefined
undefd
\else 
\if\newintro2

    Modern machine learning problems often involve training neural networks which may have millions, billions, or even trillions of parameters \cite{fedus2022switch}. Remarkably, this ubiquitous over-parametrisation does not appear to produce overfitting, and first order training methods are empirically observed to produce networks which generalise well to unseen data \cite{zhang2021understanding}. Consequently, understanding why and how training methods select well-generalising minima - and how hyperparameters or training regimes may be chosen to ensure good generalisation of the solution - remains an important open problem:  {\em Given a loss function with many minimisers, which are favoured by full-batch and stochastic gradient descent algorithms?} \cite{pesme2021implicit, vasudeva2025rich,blanc2020implicit,li2021happens,shalova2024singular,marion2024deep,andreyev2024edge,arora2022understanding,damian2022self,wu2022alignment}.
    
    A cornerstone of modern optimisation is {\em stochastic gradient descent} (SGD) via mini-batching \cite{robbins1951stochastic}, as detailed in \eqref{eq: SGD}. In regimes of small learning rates or asymptotic batch sizes, SGD is effectively approximated by gradient descent (GD). One way in which gradient-based methods with discrete updates can escape from sharp minima and decrease sharpness is the {\em catapult mechanism}, where an individual update causes the updated prediction to overshoot and increase the loss \cite{lewkowycz2020large,zhu2022quadratic}, causing the dynamics to spike and ultimately settle in flatter minima. Such curvature was identified in \cite{hochreiter1997flat} as a relevant criterion for generalisation; in the deterministic setting, this catapult typically manifests as a singular, pronounced spike in the loss.
  
    Beyond this, remarkably, it has been found that the stochasticity inherent to SGD may produce convergence to {\em better} generalising minima than those found by the deterministic gradient descent \cite{frankle2020early,smith2019super,gilmer2021loss}.  This effect is particularly strong if the learning rate $\eta$ is large or the batch size $b$ is small \cite{xie2020diffusion,jastrzkebski2017three,li2019towards,leclerc2020two,lu2023benign}, that is, in settings where the GD approximation of SGD fails, and fluctuations cannot be neglected. \\
    In addition, distinctly from the singular events observed in GD, SGD is frequently punctuated by multiple transient, high-amplitude spikes in the loss $\ell(\Theta(t))$ which are well-documented in empirical practice \cite{lecun2015deep, ruder2016overview, keskar2017improving, xing2018walk} and are associated with a tendency to improve generalisation \cite{he2016deep, zagoruyko2016wide, huang2017densely}. Recent empirical evidence \cite{zhu2023catapults} suggests that this improvement in generalisation, as well as the frequent spike dynamics, may be caused by an interaction of stochasticity and catapult dynamics, leading to the following central challenge:  
  \begin{equation*}
  \begin{split} 
  &\text{\textit{How does the interplay between the catapult mechanism and stochastic noise help escape}} \\ 
  &\hspace{5cm} \text{\textit{from sharp minima?}} 
  \end{split} 
  \end{equation*} 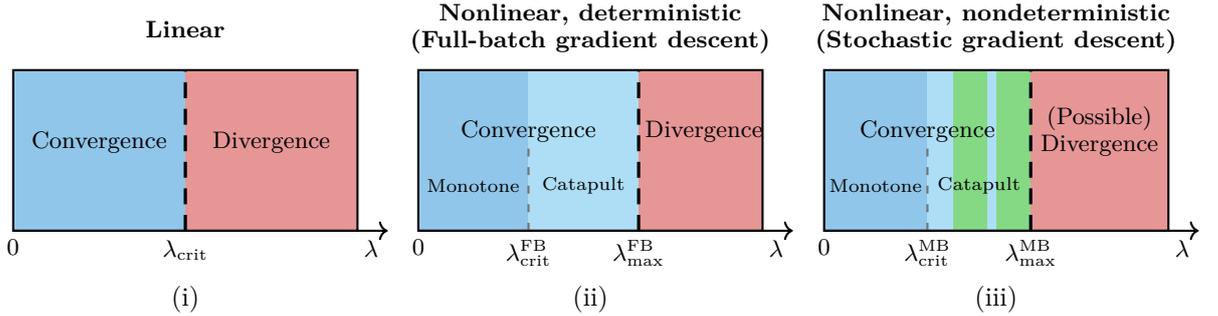
\begin{figure}[t]
\centering 
\begin{tikzpicture}[font=\footnotesize]

\definecolor{convblue}{RGB}{144,198,233}   
\definecolor{convblue2}{RGB}{173,222,245}  
\definecolor{divred}{RGB}{231,150,150}     
\definecolor{stripegreen}{RGB}{33, 186, 34}

\tikzset{
  ax/.style={->, line width=0.8pt},
  dline/.style={dash pattern=on 6pt off 4pt, line width=1.2pt},
  regionborder/.style={line width=0.8pt},
  title/.style={font=\bfseries\small},
  biglabel/.style={font=\bfseries\Large},
  midlabel/.style={font=\small},
  smalllabel/.style={font=\scriptsize},
  tinylabel/.style={font=\tiny}
}

\def\W{4.5}  
\def\H{2.13}  
\def\gap{0.8}
\def\s{10.67}


\begin{scope}[shift={(0,0)}]
  \def\xcrit{0.40*\W}

  \fill[convblue] (0,0) rectangle (1.25*\xcrit,\H);
  \fill[divred]   (1.25*\xcrit,0) rectangle (\W,\H);

  \draw[regionborder] (0,0) rectangle (\W,\H);
  \draw[dline] (1.25*\xcrit,0) -- (1.25*\xcrit,\H);

  \draw[ax] (0,0) -- (\W+0.4,0);

  \node[title] at (0.5*\W,\H+0.55) {Linear};
  \node[midlabel] at (0.25*\W,0.55*\H) {Convergence};
  \node[midlabel] at (0.75*\W,0.55*\H) {Divergence};

  \node[anchor=north] at (0,0) {$0$};
  \node[anchor=north] at (1.25*\xcrit,0) {$\lambda_{\mathrm{crit}}$};
  \node[anchor=north east] at (\W+0.4,0) {$\lambda$};

  \node[font=\normalsize] at (0.5*\W,-0.9) {(i)};
\end{scope} 
\begin{scope}[shift={(\W+\gap,0)}]
  \def\xcrit{0.32*\W}
  \def\xmax{0.64*\W}

  \fill[convblue]  (0,0) rectangle (\xcrit,\H);
  \fill[convblue2] (\xcrit,0) rectangle (\xmax,\H);
  \fill[divred]    (\xmax,0) rectangle (\W,\H);

  \draw[regionborder] (0,0) rectangle (\W,\H);
  \draw[dline] (\xmax,0) -- (\xmax,\H);
  \draw[dline, line width=0.8pt, dash pattern=on 3pt off 4pt, black!55] (\xcrit,0) -- (\xcrit,0.55*\H);

  \draw[ax] (0,0) -- (\W+0.4,0);

  \node[title, align=center] at (0.5*\W,\H+0.55)
  {Nonlinear, deterministic\\(Full-batch gradient descent)};  \node[midlabel] at (0.32*\W,0.62*\H) {Convergence};
  \node[smalllabel] at (0.16*\W,0.28*\H) {Monotone};
  \node[smalllabel, align=center] at (0.48*\W,0.28*\H) {Catapult};
  \node[midlabel] at (0.83*\W,0.62*\H) {Divergence};

  \node[anchor=north] at (0,0) {$0$};
  \node[anchor=north] at (\xcrit,0) {$\lambda^{\rm FB}_{\mathrm{crit}}$};
  \node[anchor=north] at (\xmax,0) {$\lambda^{\rm FB}_{\max}$};
  \node[anchor=north east] at (\W+0.4,0) {$\lambda$};

  \node[font=\normalsize] at (0.5*\W,-0.9) {(ii)};
\end{scope}


\begin{scope}[shift={(0.5*\W+0.5*\gap,-2.2*\H)}]
  \def\xcrit{0.30*\W}
  \def\xmax{0.60*\W}

  \fill[convblue] (0,0) rectangle (\xcrit,\H);

 \pgfmathsetmacro{\stripeW}{(\xmax-\xcrit)/12}

 \fill[convblue2] (\xcrit,0) rectangle (\xcrit+3*\stripeW,\H);

 \fill[stripegreen!55] (\xcrit+3*\stripeW,0) rectangle (\xcrit+7*\stripeW,\H); 
    
  \fill[convblue2] (\xcrit+7*\stripeW,0) rectangle (\xcrit+8*\stripeW,\H);
      
 \fill[stripegreen!55] (\xcrit+8*\stripeW,0) rectangle (\xcrit+12*\stripeW,\H); 

  \fill[divred] (\xmax,0) rectangle (\W,\H);

  \draw[regionborder] (0,0) rectangle (\W,\H);
  \draw[dline] (\xmax,0) -- (\xmax,\H);
  \draw[dline, line width=0.8pt, dash pattern=on 3pt off 4pt, black!55] (\xcrit,0) -- (\xcrit,0.55*\H);

  \draw[ax] (0,0) -- (\W+0.4,0);

  \node[title, align=center] at (0.5*\W,\H+0.55)
  {Nonlinear, nondeterministic\\(Stochastic gradient descent)};
  \node[midlabel] at (0.30*\W,0.62*\H) {Convergence};
  \node[smalllabel] at (0.15*\W,0.28*\H) {Monotone};
  \node[smalllabel, align=center] at (0.45*\W,0.28*\H) {Catapult};
  \node[midlabel, align=center] at (0.8*\W,0.62*\H)
  {(Possible)\\Divergence};
 
  \node[anchor=north] at (0,0) {$0$};
  \node[anchor=north] at (\xcrit,0) {$\lambda^{\rm MB}_{\mathrm{crit}}$};
  \node[anchor=north] at (\xmax,0) {$\lambda^{\rm MB}_{\max}$};
  \node[anchor=north east] at (\W+0.4,0) {$\lambda$};

  \node[font=\normalsize] at (0.5*\W,-0.9) {(iii)};
\end{scope}
\end{tikzpicture}
\caption{Extension of the phase diagram \cite[Figure 1]{zhu2022quadratic}, which corresponds to cases (i-ii). Theorem \ref{thrm: informal} unveils a rich internal structure of the catapult region for the nonlinear, nondeterministic dynamics \eqref{eq: SGD}, illustrated with green (`inflationary', case a of Theorem \ref{thrm: informal}) and pale blue (`deflationary', case b) stripes in (iii). In general, the critical and maximal curvatures $\lambda^{\rm MB}_{\rm crit, max}$ for mini-batching are strictly smaller than their full-batch counterparts $\lambda^{\rm FB}_{\rm crit, max}$.} \label{fig: phases intro}  
\end{figure}  
  The objective of this work is to establish a rigorous mathematical framework to elucidate these phenomena {in the context of a model example}. Specifically, we demonstrate that the phase diagram of catapults fundamentally changes in comparison to the deterministic GD case, 
  yielding a diverse range of possible configurations, see Figures \ref{fig: phases intro}-\ref{fig:GPlot}. In addition to changing the concrete values of the \textit{phase transition}, the catapult phase exhibits non-trivial internal structure. In fact, we exhibit two distinct behaviours within the catapult phase, depending on the sign of an explicit function $G(\lambda, \eta)$, each associated to distinct spiking behaviour. Both cases explain the occurrence of multiple spikes as observed in practice, and a corresponding stronger flattening effect of the dynamics.

   We further emphasise that the (stochastic) dynamical effects are different from those that have been observed in other works on large deviations in SGD. Indeed, the dynamics in the present work are driven by the catapult mechanism, a feature specific to the discrete-time setting, while previous works on the escape from minima by large deviations work in a regime of infinitesimally small learning rate, in which no catapults appear \cite{ibayashi2021quasi,azizian2024long,azizian2025global}.

   Finally, the analysis of the implicit bias dynamics developed in this work leads to principled curriculum learning strategies which may help enhance training, see Section \ref{sec: curriculum learning} below.

\else
  \if\newintro1
  Modern machine learning problems often involve training neural networks which may have millions, billions, or even trillions of parameters \cite{fedus2022switch}. Remarkably, this ubiquitous over-parametrisation does not appear to produce overfitting, and first order training methods are empirically observed to produce networks which generalise well to unseen data \cite{zhang2021understanding}. Consequently, understanding why and how training methods select well-generalising minima - and how hyperparameters or training regimes may be chosen to ensure good generalisation of the solution - remains an important open problem.

Among the most important and widely used training methods for minimising a linearly separable, but often non-convex loss $\ell$, is {\em stochastic gradient descent (SGD) by mini-batching} \cite{robbins1951stochastic}, recalled at \eqref{eq: SGD} below. Remarkably, it appears that the stochasticity produces convergence to {\em better} generalising minima than those found by the deterministic gradient descent \cite{frankle2020early,smith2019super,gilmer2021loss}. This raises the question of {implicit bias}: {\em Given a loss function with many minimisers, which do the full-batch and stochastic gradient descent algorithms tend to prefer?} \cite{pesme2021implicit, vasudeva2025rich,blanc2020implicit,li2021happens,shalova2024singular,marion2024deep,andreyev2024edge,arora2022understanding,damian2022self,wu2022alignment}. A proposed explanation for the better generalisation properties enjoyed by SGD is that the randomness introduced by mini-batching helps the trajectory escape sharp minima, as measured by the loss Hessian \cite{kleinberg2018alternative,zhou2020towards,keskar2016large,jastrzkebski2018finding,xie2020diffusion,azizian2024long}, and settle in flatter minima, where such curvature was identified in \cite{hochreiter1997flat} as a relevant criterion for generalisation.  This effect is particularly strong if the learning rate $\eta$ is large or the batch size $b$ is small \cite{xie2020diffusion,jastrzkebski2017three,li2019towards,leclerc2020two,lu2023benign}. \\\\ One way in which gradient-based methods with discrete updates can escape from sharp minima is the catapult mechanism, where an individual update causes the updated prediction to overshoot and increase the loss. This mechanism has also been proposed empirically \cite{zhu2023catapults} as an explanation for the many large, short-lived spikes in the loss $\ell(\Theta(t))$ which are well-known to practitioners \cite{lecun2015deep,ruder2016overview,keskar2017improving,xing2018walk} to punctuate the convergence of SGD, as well as the observed tendency of such spikes to improve generalisation \cite{he2016deep,zagoruyko2016wide,huang2017densely}. \\\\ This explanation, however, raises the question of what benefit - if any - towards generalisation is gained by mini-batching, as the same mechanism was shown in the deterministic setting \cite{lewkowycz2020large,zhu2022quadratic} to contribute towards finding flatter minima. Precisely, there is a phase transition where, for a range of sufficiently large learning rates, the catapult mechanism first produces a large spike, followed by a rapid curvature reduction and convergence to a minimum with smaller (scalar) curvature. The task of this work is thus to develop a mathematically precise theory in order to understand \begin{equation*}\begin{split} &\text{\em How does the interplay between the catapult mechanism and SGD noise help escape}\\ &\hspace{5cm} \text{\em from sharp minima?}  \end{split} \end{equation*}
  \else
  
Modern machine learning processes rely on various optimisation techniques to find solutions of optimisation problems \begin{equation}
	\label{eq: opt} \Theta^\star\in \argmin_{\Theta \in \RR^n} \ell(\Theta)
\end{equation} where $\ell$ is a typically non-convex loss, and the number of parameters $n$ is extremely large: At the time of writing, neural networks may contain millions, billions or trillions of parameters. Typically, to learn the parameters of a network, the loss will be of the form \begin{equation}\label{eq: prototype loss} 
  \ell(\Theta)=\frac1{m}\sum_{i=1}^{m}\ell_{i}(\Theta); \qquad \ell_i(\Theta)=L(F(\Theta, x_i), y_i)
\end{equation}
for some prediction map $F:\RR^n\times \RRd \to \RR$, and a given set of labelled training data $\{(x_i, y_i)\}$. For such losses, a simple, computationally tractable and widely used approach \cite{robbins1951stochastic}, profiting from the form of the total loss $\ell$, is {\em stochastic gradient descent (SGD) by minibatching}: given the parameters $\Theta(t)$ at a discrete time step $t\in \NN$, a subset $B(t+1)\subset [m]$ of fixed size $b\ll m$ is chosen at random, and the parameters are updated by the discrete scheme  \begin{equation} \tag{SGD} \label{eq: SGD}
  \Theta({t+1})=\Theta({t})-\frac{\eta}{b}\sum_{i\in B(t+1)}\nabla_{\Theta}\ell_{i}(\Theta({t}))
\end{equation} where $\eta>0$ is a fixed step size.\\ \\ Remarkably, it appears that the stochastic nature of \eqref{eq: SGD} produces convergence which is {\em better} than that of the deterministic gradient descent: It has been empirically observed \cite{keskar2016large} that \eqref{eq: SGD}, particularly with small batch size $b$, tends to select flatter, and better-generalising minima, particularly when the learning rate $\eta$ is large or the batch size $b$ is small \cite{li2019towards,leclerc2020two,lu2023benign}. Meanwhile, it is well-known to practitioners that the convergence of SGD is typically punctuated by many large, short-lived spikes in the loss $\ell(\Theta(t))$, each of which is followed by a rapid return to its pre-spike value, attributed empirically by \cite{zhu2023catapults} to the {\em catapult mechanism}. It has also been argued \cite{lewkowycz2020large,zhu2022quadratic} that the catapult mechanism, even in the full-batch case, may drive short-lived, finite-width effects driving the system to better-behaved minima. The task of this work is to develop a mathematically precise understanding how the structure of SGD noise interacts with the catapult mechanism, the structure of the resulting phase transition, and contribution of spikes to finding flatter minima.
	 \fi
  \fi
\fi


\subsection{Contribution}

  The contribution of this work is a rigorous understanding of the structure of the catapult phase in SGD and its contribution to finding flatter minima. We use the neural tangent kernel (NTK), which in our examples reduces to a scalar $\lambda$ or a pair of scalars, as a measure of the sharpness. We introduce and rigorously analyse a model example, following \cite{zhu2022quadratic,meltzer2025catapult}, for shallow, fully connected networks in the NTK scaling, in the sharp minimum range $\eta\lambda\sim 1$. In this setting, we are able to completely categorise when spikes are guaranteed, and how likely they are when not guaranteed. Even for such a simplified model, the catapult phase shows rich and non-trivial internal structure which is not present in the case of full-batch gradient descent. The structure is illustrated in Figure \ref{fig: phases intro}, stated precisely in Theorem \ref{thrm: informal} and exemplified in Section \ref{sec: examples}.\\ \\ We summarise as follows. Defining a {\em large spike} to be an event where the loss $\ell(\Theta(t))$ reaches the threshold ${n/\eta}$ where the network escapes the linear regime of lazy training and where the curvature is reduced by $\mathcal{O}(1)$, the results may be informally summarised as follows. The contribution is that: \begin{enumerate} \item For a certain phase of initial curvature $\lambda_0$, a large spike is guaranteed (Theorems \ref{thm:lln moderate} and \ref{thrm: LLN large});  \item In a further regime, in which they are not guaranteed, they remain {\em polynomially likely}, in that their probability decays like $(n/\eta)^{-\vartheta/2}$, for some $\vartheta=\vartheta(\lambda_0) \in (0,\infty)$ (Theorem \ref{thm: LDP spikes} and \ref{thrm: LDP large spikes}); \item Large spikes are, up to exponentially unlikely events, the only way in which the lazy training regime can be escaped and the curvature reduced (Proposition \ref{prop: slow escape}); \item In the case where the activation function of the first layer is the rectified linear unit (ReLU) and a certain asymmetric initialisation is imposed, the dynamics decouple into two copies of the linear model, to which each of the previous points applies. \item As a practical consequence of 1-4, in Section \ref{sec: curriculum learning}, the results produce two ways in which curriculum learning may be applied in order to make benign, curvature-reducing spikes more likely.\end{enumerate} Here, we follow the convention \cite{zhu2023catapults} that a `catapult' refers to a single update step in which the learning rate $\eta$ exceeds a per-sample critical rate, causing an instantaneous increase in the running loss, and distinguish between this and a {\em spike} where the overall loss has become large. \\ \\ Both points 1-2 allow direct computation from the data. The distinction between the two regimes where spikes are typical or atypical is governed by the sign of a function $G(\lambda_0)$, which is given explicitly from the data by \eqref{eq: introduce G}. The function $\vartheta(\lambda_0)$ in point 2 is characterised as the unique positive zero of a certain convex function \eqref{eq: def vartheta informal}, and hence may be found in practice from the data by interval bisection.\\\\ The points 2-3 above arise out of the machinery of large deviations. Often, the theory of large deviations leads to estimates of probabilities decaying exponentially $\sim e^{-\alpha n}, \alpha>0$, and for the values of $10^6 \le n\le 10^{12}$ used in practice, such events may be excluded from consideration. In contrast, the polynomial decay of probabilities discussed in (2) means that such events may retain appreciable probability in practice when the exponent $\vartheta$ is not too large. An example, where the probability remains on the order $(n/\eta)^{-\vartheta/2}\approx 0.25$ for $n=10^{12}$, is given in Section \ref{ex: non monotone behaviour}. In (3), we follow the maxim of large deviations in describing the `least unlikely' way in which random fluctuations via the mini-batching noise can reduce the curvature, and we will see that large kernel reductions without a spike are exponentially unlikely.  

  \subsection{The Model \& Main Results} \label{sec: linear intro} As is common in the literature \cite{williams2019gradient,savarese2019infinite} for developing a theoretical understanding, we consider a univariate network, and specialise further to the case of mini-batch size $b=1$. We consider learning the network $F:\RR^{2n}\times \mathbb{R}\to \mathbb{R}$, given by \begin{equation}\label{eq: toy model}
		F(\Theta; s):=\frac1{\sqrt{n}}\sum_{r=1}^n a_r \varphi(w_r s); \qquad \Theta=((w_r, a_r): 1\le r\le n)\in \RR^{2n}
	\end{equation} where $\varphi$ is either the linear activation $\varphi(w)=w$, or the rectified linear unit (ReLU) $\varphi(w)=\max(w,0)$, and with a quadratic loss function $L(F(\Theta, s),y)=\frac12|F(\Theta, s)-y|^2$.  For a training dataset, we consider a set $\{s_i, 1\le i\le m\}$, all equipped with label $y_i=0$, and sampled with potentially different probabilities $p_i$. For linear activation, the prediction and neural tangent kernel (NTK) are the scalar quantities \begin{equation} \label{eq: prediction linear} \mu(\Theta):=\frac1{\sqrt{n}}\sum_{r=1}^n a_r w_r = F(\Theta, 1); \qquad \lambda(\Theta):=\frac1n\sum_{r=1}^n(w_r^2+a_r^2) \end{equation} while for ReLU, there are pairs of such quantities \begin{equation} \label{eq: prediction RELU} \mu^{\pm}(\Theta):=\frac1{\sqrt{n}}\sum_{r=1}^n a_r \sigma(\pm w_r) = F(\Theta, \pm 1); \qquad \lambda^{\pm}(\Theta):=\frac1n\sum_{r=1}^n(w_r^2+a_r^2)\indiq(w_r>0). \end{equation}  

	We consider parameters $\Theta=\Theta(t)$ updated in discrete time steps $t=0,1,\dots$ as follows. Given the parameters $\Theta(t)$ at a discrete time step $t\in \NN$, a subset $B(t+1)\subset [m]$ of fixed size $b\ll m$ is chosen at random, and the parameters are updated by the discrete scheme  \begin{equation} \tag{SGD} \label{eq: SGD}
  \Theta({t+1})=\Theta({t})-\frac{\eta}{b}\sum_{i\in B(t+1)}\nabla_{\Theta}L(F(\Theta({t}),s_i),y_i)
\end{equation} where $\eta>0$ is a fixed learning rate. In the sequel, we consider only mini-batch size $b=1$, with $B(t+1)=\{i(t+1)\}$ sampled independently across time steps according to the distribution $\{p_i\}$, and write $\mu(t), \lambda(t)$ for the prediction and curvature $\mu(t)=\mu(\Theta(t)), \lambda(t)=\lambda(\Theta(t))$ given by the formul{\ae} \eqref{eq: prediction linear}. We make the following standing assumption, which ensures that the random draws $i(t)$ are the only source of randomness:  \begin{assumption}\label{hyp: assumption linear model} \begin{enumerate} \item For the model with linear activations, let $\mu_0\in \mathbb{R}$, $\lambda_0>0$, and write $\PP_{\mu_0, \lambda_0}$ for a probability measure under which the indexes $i(t)$ are independent and identically distributed draws from the distribution $\{p_i: 1\le i\le m\}$, the parameters $\Theta(t)=((w_r(t), a_r(t)):1 \le r\le n)$ are updated according  to \eqref{eq: SGD} and satisfy, almost surely, $\mu(\Theta(0))=\mu_0, \lambda(\Theta(0))=\lambda_0$. \item For the case with ReLU activations, we write similarly $\PP_{\mu_0^{\pm}, \lambda_0^\pm}$ for a probability measure where $\{i(t)\}, \Theta(t)$ are as above, and $\mu^\pm(\Theta(0))=\mu^{\pm}_0, \lambda^{\pm}(\Theta(0))=\lambda^{\pm}_0$ almost surely. \end{enumerate}	\end{assumption}  With these fixed, we summarise the results, stated precisely in Theorems \ref{thm:lln moderate}, \ref{thm: LDP spikes}, \ref{thrm: LLN large}, \ref{thrm: LDP large spikes} as follows. \begin{theorem}
			\label{thrm: informal} Let $\varphi$ be the linear activation $\varphi(w)=w$, and consider the range \begin{equation}
				\label{eq: minibatch critical values} \lambda^{\rm MB}_{\rm crit}:=\frac2{\eta s_\star^2} < \lambda_0 < \frac{4}{\eta s_\star^2}=:\lambda^{\rm MB}_{\rm max}; \qquad s_\star:=\max_i |s_i|.
			\end{equation}  Define, for any $\lambda$ in this range, \begin{equation} \label{eq: introduce G} G(\lambda)=G(\lambda; \eta, \{s_i, p_i\}):=\sum_{i=1}^m p_i\log\left(\left|1-\eta \lambda s_i^2\right|\right)\in [-\infty, \infty)\end{equation} which we allow to take the value $-\infty$ if $\eta\lambda s_i^2=1$ for some $i$, and \begin{equation}
				\label{eq: def vartheta informal} \vartheta(\lambda)=\sup\left\{\theta\ge 0: \sum_{i=1}^mp_i|1-\eta \lambda s_i^2|^\theta\le 1\right\}. 
			\end{equation} Then \begin{enumerate}[label=\alph*)]
				\item {\em [Inflationary Case]} If $G(\lambda_0)>0$, then, with high probability over the dynamics, the loss $|\mu(t)|^2$ reaches any threshold $$L\lesssim \frac{n}{\eta \log^2(n/\eta)}$$ \findme in time $\mathcal{O}(\frac{\log(L/|\mu_0|^2)}{G(\lambda_0)})$. The spike then produces a reduction in the kernel $\lambda$ reducing it to at most some explicit $\lambda_\star\in (0, \lambda_0)$ in a duration of the order $\log((\lambda_0-\lambda_\star)\log(n/\eta))+\mathcal{O}_{\mathbb{P}}(1)$. \item {\em [Deflationary Case]} If instead $\lambda^{\rm MB}_{\rm crit}<\lambda_0<\lambda^{\rm MB}_{\rm max}$ and $G(\lambda_0)<0$, then the probability of the loss reaching any threshold $$|\mu_0|^2\le L \lesssim \frac{n}{\eta \log^{2/\beta}(n/\eta)}$$ \findme decays like $$\sim \left(\frac{|\mu_0|^2}{L}\right)^{\vartheta(\lambda_0)/2+\mathfrak{o}(1)},$$ where $\vartheta(\lambda_0)>0$ is given by \eqref{eq: def vartheta informal}, and $\beta>0$ is fixed. If, for $\delta>0$, $p_{\delta}:=\PP(|1-\eta\lambda_0s_i^2|>1+\delta)>0$, then the probability of a large spike decreasing the kernel to $\lambda$ in the range $\lambda_0-C\lambda^{\rm MB}_{\rm max}\delta \le \lambda \le \lambda_0$ decays no faster than \begin{equation}
					\label{eq: large spike informal} \gtrsim \left(\frac{|\mu_0|}{\sqrt{n/\eta}}\right)^{\vartheta(\lambda_0)+\mathfrak{o}(1)}(\lambda_0-\lambda)^{\alpha}
				\end{equation} where $\alpha=\alpha(\delta)\in (0,\infty)$ may be found explicitly from $\delta, p_{3\delta}$.
			\end{enumerate} \end{theorem}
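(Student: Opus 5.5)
The plan is to reduce \eqref{eq: SGD} to a closed two-dimensional Markov chain in $(\mu(t),\lambda(t))$ and then treat $\log|\mu(t)|$ as a perturbed random walk. For $b=1$ and sample $s=s_{i(t+1)}$, a direct computation from \eqref{eq: toy model} and \eqref{eq: prediction linear} gives
\begin{equation*}
\mu(t+1)=\mu(t)\Bigl(1-\eta\lambda(t)s^2+\frac{\eta^2s^4\mu(t)^2}{n}\Bigr),\qquad \lambda(t+1)=\lambda(t)-\frac{\eta s^2\mu(t)^2}{n}\bigl(4-\eta s^2\lambda(t)\bigr).
\end{equation*}
In the range \eqref{eq: minibatch critical values}, $\lambda$ is therefore non-increasing. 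While $\mu^2\ll n/\eta$, $\lambda$ is essentially frozen at $\lambda_0$. In that regime $X_t:=\log|\mu(t)|$ has increments $\xi_t=\log|1-\eta\lambda_0 s_{i(t)}^2|$ plus an error of order $\eta\mu^2/n+|\lambda(t)-\lambda_0|$. The increments $\xi_t$ are i.i.d.\ with mean $G(\lambda_0)$, and their moment generating function is $\theta\mapsto\sum_ip_i|1-\eta\lambda_0s_i^2|^\theta$.

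\emph{Inflationary case.} If $G(\lambda_0)>0$, I apply the law of large numbers to the walk until $|\mu|^2$ first reaches $L$. To bound the accumulated kernel drift I sum $\eta\mu(t)^2/n$ along an essentially geometric trajectory, which gives $O(\eta L/n)$. With $L\lesssim n/(\eta\log^2(n/\eta))$ this stays $o(1/\log(n/\eta))$, so the drift of $X_t$ remains $G(\lambda_0)-o(1)$ by continuity of $G$. The hitting time is then $\log(L/|\mu_0|^2)/(2G)+o_{\PP}(\cdot)$. For the kernel reduction, once $\mu^2\asymp n/\eta$, each step lowers $\lambda$ by an $O(1)$ multiple of $\eta s^2\mu^2/n$. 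I would argue that $\lambda$ keeps decreasing as long as the walk stays inflationary, i.e.\ until $\lambda$ falls to $\lambda_\star$, defined as the value where $G$ (suitably modified to include the $\eta^2s^4\mu^2/n$ correction) changes sign. The remaining logarithmic duration comes from the geometric approach to that level.

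\emph{Deflationary case.} If $G(\lambda_0)<0$, the quantity $|\mu(t)|^{\vartheta}$ is a supermartingale up to perturbation, since $\sum_ip_i|1-\eta\lambda_0 s_i^2|^{\vartheta}=1$ and $\lambda$ only decreases. For the upper bound I take $\theta$ slightly above $\vartheta$ with a small correction absorbing the $O(\eta\mu^2/n)$ errors, and use optional stopping at the hitting time of level $L$; this yields $(|\mu_0|^2/L)^{\vartheta/2-\mathfrak o(1)}$. The lower bound is the Cramér--Lundberg argument: under the measure tilted by $|1-\eta\lambda_0s_i^2|^{\vartheta}$ the walk has positive drift, so level $L$ is reached in time $O(\log(L/\mu_0^2))$. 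Undoing the tilt costs $e^{-\vartheta(X_\tau-X_0)}$, with an overshoot that is $O(1)$. The restriction $L\lesssim n/(\eta\log^{2/\beta}(n/\eta))$ is what keeps the perturbation $\mathfrak o(1)$ in the exponent over these logarithmic time scales.

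For \eqref{eq: large spike informal}, I first reach $|\mu|^2\asymp n/\eta$ with the above probability. I then force a run of $k$ consecutive draws with $|1-\eta\lambda s_i^2|>1+\delta$. This has probability $p_{3\delta}^k$, using the $3\delta$ margin to absorb the change of $\lambda$ during the run. During the run $|\mu|$ does not decay, so each step reduces $\lambda$ by an amount comparable to $\delta$ up to $\lambda^{\rm MB}_{\max}$. Choosing $k\asymp\log(1/(\lambda_0-\lambda))/\log(1+\delta)$ then converts $p_{3\delta}^k$ into $(\lambda_0-\lambda)^{\alpha}$.

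The main obstacle is the near-threshold regime $\mu^2\asymp n/\eta$, where the cubic term in the $\mu$ update and the kernel decrease are no longer perturbative. There I would replace the random-walk analysis with a deterministic-error bookkeeping of the coupled recursion along the forced or typical path.
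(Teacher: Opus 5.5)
Your overall architecture (a random-walk approximation for $\log|\mu|$, Cram\'er tilting for the lower bound, forced runs of good draws for large spikes) matches the paper, but there are two genuine gaps.

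\textbf{The deflationary upper bound.} You justify the supermartingale property of $|\mu(t)|^{\theta}$ ``since $\lambda$ only decreases''. That does not hold: lowering $\lambda$ \emph{increases} $|1-\eta\lambda s_i^2|$ for every index with $\eta\lambda s_i^2<1$. Moreover $G$ is not monotone. In the second example of Figure~\ref{fig:GPlot} it changes sign several times, so a small decrease of $\lambda$ can take a deflationary $\lambda_0$ into an inflationary window, after which a spike is certain.

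The factors can therefore only be bounded by $b_i^+(\epsilon,\kappa,\lambda_0)$ up to $\tau^{\rm Ker}_{\lambda_0-\epsilon}$. Optional stopping (with $\theta$ slightly \emph{below} $\vartheta$, not above) then gives only $\PP(\tau^{\rm Pred}_M<\tau^{\rm Ker}_{\lambda_0-\epsilon})\le(|\mu_0|/M)^{\vartheta_-}$. You must still show that $\PP(\tau^{\rm Ker}_{\lambda_0-\epsilon}<\tau^{\rm Pred}_M)$ is of the same order.

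The event $\{\tau^{\rm Pred}_M<\infty\}$ has an unbounded time horizon, so neither logarithmic time scales nor summing $\eta\mu^2/n$ along a typical geometric path controls this. A priori the kernel could drift down through many sub-threshold excursions over very long times. This is exactly the key difficulty of the paper. It is resolved by Proposition~\ref{prop: slow escape}, where Khas'minskii's lemma gives $\PP(\tau^{\rm Ker}_{\lambda_0-\epsilon}\le\tau^{\rm Pred}_M)\le 2\exp(-\epsilon\gamma(n/(\eta M^2))^{\beta/2})$. This is combined with the scale decomposition $L_j=(n/\eta)^{-j/2}M$ with kernel budgets $\epsilon/2^j$. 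The decomposition is needed because $(|\mu_0|/M)^{\vartheta}$ can be much smaller than $(n/\eta)^{-\vartheta/2}$ when $|\mu_0|$ is tiny.

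The restriction $L\lesssim n/(\eta\log^{2/\beta}(n/\eta))$ originates here: it is what makes slow escape polynomially unlikely. It does not come from the time scale of the tilted lower bound. Even in your lower bound, the hitting time is of order $\log(L/|\mu_0|^2)$, which need not be logarithmic in $n/\eta$. Bounding the kernel change by time times $\eta L/n$ then fails. You need either a geometric-sum bound under the tilted measure or, as in the paper, the same scale decomposition.

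\textbf{The inflationary large-spike step.} Your claim that $\lambda$ keeps decreasing until a $\mu$-corrected $G$ changes sign misses \emph{spike collapse}. For an index with $\eta\lambda s_i^2>1$, the factor $1-\eta\lambda s_i^2+\eta^2s_i^4\mu^2/n$ passes through zero as $|\mu|$ grows. A single draw can therefore end the spike while $\lambda$ is still inside the inflationary window; the deterministic example in Section~\ref{sec: large LLN} does precisely this. The corrected $G$ also depends on $\mu$, so it does not define a level of $\lambda$.

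The missing observation is that near-collapse requires $\eta^2s_i^4\mu^2/n\gtrsim\eta\lambda s_i^2-1$. That forces the same step to lower the kernel below $\lambda_{\rm coll}$. The paper therefore:
\begin{itemize}
\item excludes the near-collapse indices $\mathcal A(t)$;
\item bounds $|\mu|$ below by a process $\widetilde\mu$ with log-drift at least $q_-/4$;
\item uses the supermartingale $\widetilde\mu^{-\vartheta}$ to rule out the prediction falling back;
\item guarantees only a reduction to $\max(\lambda,\lambda_{\rm coll})$.
\end{itemize}
The $\log((\lambda_0-\lambda)\log(n/\eta))$ duration is the time for $\widetilde\mu$ to grow from $c\sqrt{n/(\eta\log(n/\eta))}$ to $M^+=\sqrt{n(\lambda_0-\lambda_\star)/(\eta a_-)}$. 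At that level a single step lowers the kernel by $\lambda_0-\lambda_\star$; the duration is not a geometric approach of $\lambda$ to a level.

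\textbf{The deflationary large-spike bound.} The same mechanism drives this bound. The forced run multiplies $|\mu|$ by at least $1+\delta$ per step, from the moderate scale up to $\sqrt{n(\lambda_0-\lambda)/(\eta a_-)}$. It does not lower $\lambda$ by order $\delta$ per step. So $k=\lceil\log(M^+/M)/\log(1+\delta)\rceil$ is governed by $M^+/M$ and grows with $\lambda_0-\lambda$. Also, your moderate-spike estimate only reaches $|\mu|^2$ of order $n/(\eta\log^{2/\beta}(n/\eta))$, not $n/\eta$.
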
 For the case where $\varphi$ is the ReLU activation $\varphi(w)=\sigma(w)=\max(0,w)$, we first introduce some further notation. We define the positive and negative parts of each data point $s_i^{\pm}=\max(\pm s_i, 0)$, and corresponding critical values $\lambda^{\rm MB, \pm}_{\rm crit}, \lambda^{\rm MB, \pm}_{\rm max}$. Define also $G^\pm(\lambda), \vartheta^{\pm}(\lambda)$ as in (\ref{eq: minibatch critical values}-\ref{eq: def vartheta informal}), with $s_i$ replaced by $s^{\pm}_i$ respectively. With these fixed, we summarise the result for ReLU activation, presented in Theorem \ref{thrm: RELU}, as follows. \begin{theorem} \label{thrm: informal2} Let $\varphi$ be the ReLU activation $\varphi(w)=\sigma(w)=\max(0,w)$, and consider the range \begin{equation} \label{eq: ??} \lambda_0^{\mathfrak{s}}<\lambda^{\rm MB, \mathfrak{s}}_{\rm max}, \qquad \mathfrak{s}\in \{\pm\}.\end{equation} If a certain asymmetric initialisation property in Definition \ref{def: w biased} holds  $\PP_{\mu_0^{\pm}, \lambda_0^{\pm}}$-almost surely, similar to \cite{boursier2024simplicity,boursier2025early,dana2025convergence}, then \begin{enumerate}[label=\alph*)]
				\item {\em [Inflationary Case]} If at least one of $G^\mathfrak{s}(\lambda^\mathfrak{s}), \mathfrak{s}\in \{\pm\}$ is non-negative then, with high probability over the dynamics, the total loss reaches any threshold $$ L\lesssim \frac{n}{\eta \log (n/\eta)}$$ \findme in time $$\mathcal{O}\left(\min\left(\frac{\log(L/|\mu^\mathfrak{s}|^2)}{G^\mathfrak{s}(\lambda^\mathfrak{s}_0)}:  \mathfrak{s}\in \{\pm\}, G^\mathfrak{s}(\lambda^\mathfrak{s}_0)>0\right)\right). $$ \item {\em [Deflationary Case]} If, instead, both $G^{\mathfrak{s}}(\lambda^\mathfrak{s}_0)<0$ and at least one $\lambda^\mathfrak{s}_0>\lambda^{\rm MB, \mathfrak{s}}_{\rm crit}$, then the probability of the total loss reaching any threshold $$L\lesssim \frac{n}{\eta \log^{2/\beta} (n/\eta)}$$ for some fixed $\beta>0$  decays like $$\max\left(\left(\frac{|\mu^\mathfrak{s}_0|^2}{L}\right)^{\vartheta^\mathfrak{s}(\lambda^\mathfrak{s}_0)/2+\mathfrak{o}(1)}: \mathfrak{s}\in \{\pm\}\right).$$ 			\end{enumerate} In this case, it is possible for a large spike to cause a decrease of $\lambda^\mathfrak{s}(t)$ of order $\mathcal{O}(\eta^{-1})$, accompanied by a simultaneous increase of $\lambda^\mathfrak{-s}(t)$. 
		\end{theorem}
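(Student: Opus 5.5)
The plan is to reduce the ReLU dynamics to two decoupled copies of the linear model and then apply the linear-activation results (Theorem \ref{thrm: informal}, i.e.\ Theorems \ref{thm:lln moderate}, \ref{thm: LDP spikes}, \ref{thrm: LLN large}, \ref{thrm: LDP large spikes}) to each copy.

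\textbf{Step 1: decoupling.} Assume the asymmetric initialisation of Definition \ref{def: w biased}, which I take to mean that the signs of the $w_r$ are correlated with the $a_r$ so that each neuron acts on only one side. Under this assumption, a data point $s_i>0$ gives $\nabla_{w_r}\ell_i=0$ and $\nabla_{a_r}\ell_i=0$ whenever $w_r<0$, and symmetrically for $s_i<0$. I first check by induction over $t$ that the sign pattern of the $w_r$ is preserved $\PP$-almost surely, and that the property in Definition \ref{def: w biased} persists along \eqref{eq: SGD}. Given this, the pair $(\mu^+,\lambda^+)$ evolves exactly as the linear model with data $s_i^+$: a draw with $s_i<0$ acts as the identity on the $+$ part, since it enters with $s_i^+=0$. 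Likewise $(\mu^-,\lambda^-)$ evolves as the linear model with data $s_i^-$.

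The main obstacle is this sign-preservation. A large update, especially during a spike, could push some $w_r$ across zero and transfer the neuron from one side to the other. This is precisely the mechanism behind the final remark, where $\lambda^{\mathfrak s}$ decreases while $\lambda^{-\mathfrak s}$ increases. So I would only claim exact decoupling up to the spiking time, or up to the relevant thresholds $L\lesssim n/(\eta\log(n/\eta))$, where the per-step parameter changes are $O(\sqrt{\eta L/n})$ per neuron and small. I would control this with the same a priori bounds used in the linear case, possibly paying the extra logarithmic factor seen in the statement.

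\textbf{Step 2: the two cases.} Write the total loss as $\sum_i p_i|F(\Theta,s_i)|^2$, which is comparable to $|\mu^+|^2+|\mu^-|^2$ up to constants depending on $\{s_i\}$.

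\emph{Case (a).} If $G^{\mathfrak s}(\lambda_0^{\mathfrak s})>0$ for some $\mathfrak s$, the inflationary result applied to that copy gives $|\mu^{\mathfrak s}|^2$ reaching $L$ in time $O(\log(L/|\mu^{\mathfrak s}_0|^2)/G^{\mathfrak s})$ with high probability. The total loss dominates this, so taking the minimum over the admissible $\mathfrak s$ gives the claim. The copy is a linear model whose step distribution includes the null steps $s^{\mathfrak s}_i=0$, and those contribute $\log 1=0$ to $G^{\mathfrak s}$, consistently with its definition.

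\emph{Case (b).} Apply the deflationary bound to each copy with $\lambda^{\mathfrak s}_0>\lambda^{\rm MB,\mathfrak s}_{\rm crit}$. For a copy below its critical value, the linear theory gives monotone decay, so it contributes no spike at the polynomial scale. The event that the total loss reaches $L$ is contained in the union of the events that $|\mu^{\mathfrak s}|^2\ge cL$ for some $\mathfrak s$, and contains each of them with $c=1$ absorbed. A union bound then gives upper and lower bounds of order $\max_{\mathfrak s}(|\mu_0^{\mathfrak s}|^2/L)^{\vartheta^{\mathfrak s}/2+\mathfrak o(1)}$; the constant $c$ is absorbed into the $\mathfrak o(1)$ since $L\to\infty$.

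\textbf{Step 3: the closing remark.} For the statement that a large spike can lower $\lambda^{\mathfrak s}$ by $O(\eta^{-1})$ while raising $\lambda^{-\mathfrak s}$, I would use the kernel-reduction part of Theorem \ref{thrm: LDP large spikes} for the $\mathfrak s$ copy. Then, at the spike scale, some $w_r$ change sign, which moves their mass $w_r^2+a_r^2$ from $\lambda^{\mathfrak s}$ to $\lambda^{-\mathfrak s}$. It suffices to exhibit one such event with positive probability.
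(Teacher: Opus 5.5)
\textbf{The gap is in Step 1, which carries the whole reduction.} Your reading of Definition \ref{def: w biased} is not the paper's. The definition is the magnitude condition $|w_r(0)|\ge |a_r(0)|$ for every $r$, not a sign correlation. For scalar inputs each ReLU neuron acts on only one side anyway, according to the sign of $w_r$, so the property you assume does no work.

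More importantly, your mechanism for sign preservation does not work. You argue that per-step updates are small below the threshold $L$ and can be controlled by the a priori bounds of the linear case. But the update is $w_r\mapsto w_r-\frac{\eta}{\sqrt n}\mu^{\pm}(t)a_r s_{i(t+1)}^2$, which is proportional to $a_r$, not to $w_r$. Any neuron with $|w_r|<\eta s_\star^2|\mu^\pm||a_r|/\sqrt n$ flips. For generic initialisations such neurons exist from the first step: the smallest $|w_r|$ among $n$ neurons is typically of order $1/n$, so flips occur at essentially every scale of $\mu$. Bounds on $\mu,\lambda$ alone cannot exclude this. You therefore do not obtain the exact decoupling needed to invoke Theorems \ref{thm:lln moderate} and \ref{thm: LDP spikes} verbatim.

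\textbf{The missing idea is the layer-balance identity of Proposition \ref{prop: change activation}.} Set $c:=\frac{\eta}{\sqrt n}\mu^{\pm}(t)s_{i(t+1)}^2$. On a step where neuron $r$ is active, $w_r(t+1)^2-a_r(t+1)^2=(1-c^2)\bigl(w_r(t)^2-a_r(t)^2\bigr)$; on inactive steps neither weight moves. Hence, under $|w_r(0)|\ge|a_r(0)|$, the inequality $|a_r(t)|\le |w_r(t)|$ persists deterministically while $|\mu^\pm|\le \sqrt n/(\eta s_\star^2)$. Then $|w_r(t+1)|\ge (1-|c|)|w_r(t)|>0$, so no sign ever changes.

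This gives exact coupling of $(\mu^\pm,\lambda^\pm)$ to two copies of \eqref{eq: SGD 2} with data $s_i^\pm$, up to the first time one of $|\mu^\pm|$ reaches order $\sqrt n/\eta$. No probabilistic estimate and no logarithmic loss are involved. The $\log(n/\eta)$ corrections in the statement come from the slow-escape control in the linear theorems (Proposition \ref{prop: slow escape}), not from sign preservation as you suggest.

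With this in place, your Step 2 is essentially the paper's argument for Theorem \ref{thrm: RELU}. You apply the linear results to each coupled copy. Breaking the coupling requires one of $|\mu^\pm|$ to reach order $\sqrt n/\eta$. The paper controls this event through the other copy's hitting estimates, which produces its extra error terms and condition \eqref{eq: additional restriction on M}. In the total-loss formulation, this event already means the loss threshold has been crossed.
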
 \begin{remark} Assumption \ref{hyp: assumption linear model} allows us to isolate the randomness associated with the noise sampling, which is responsible for the structure of the catapult phase. Other sources of randomness may be included as follows.
		\begin{enumerate} \item {\bf Random initialisation.} In the case where the initialisation $\Theta(0)$ is itself random, for example with LeCun initialisation \cite{lecun2002efficient}, the initial values of $\mu, \lambda$ may also be random. In this case, $\PP_{\mu_0, \lambda_0}$ may be taken as regular conditional probability measures, conditional on these values, and the conclusions of Theorems \ref{thrm: informal} - \ref{thrm: informal2} may be extended to holding with high probability in the initial data by arguing as in \cite[Theorems 1-2]{zhu2022quadratic}.  \item {\bf Random Training Data.} The examples below illustrate that Theorems \ref{thrm: informal} - \ref{thrm: informal2} may produce qualitatively different phase diagrams by the division into inflationary and deflationary cases, and one might ask what structure holds for `typical' $\{s_i\}$.  A simple model for `typical' data would be to take the data $\{s_i\}$ to be independent draws from some given compactly supported measure $\nu$ under some auxiliary probability measure $P$. In this case, for a fixed $\lambda_0$, the properties of the dynamics given by Theorems \ref{thrm: informal}-\ref{thrm: informal2} are qualitatively the same, with high $P$-probability, as those when every discrete sum over $s_i$ is replaced by a $\nu$-integral. In particular, neither structure of the phases is `typical' without making assumptions on $\nu$, which do not appear to be canonical. This is also the reason why we allow potentially non-uniform sampling probabilities $p_i$, as it allows a more parsimonious approximation to non-uniform $\nu$. \end{enumerate} 
		\end{remark} In the remainder of this section, we give some preliminary calculations, following \cite{lewkowycz2020large,zhu2022quadratic}, which allow an overview of the proof of Theorem \ref{thrm: informal} and signpost the key difficulties. Taking $\varphi(w)=w$ to be the linear activation, the update rule \eqref{eq: SGD} produces, for the individual parameters $a_r(t), w_r(t)$, \begin{equation} 		\begin{cases} \label{eq: SGD 1}
			w_r(t+1)=w_r(t)-\frac{\eta}{\sqrt{n}}\mu(t)a_r(t)s_{i(t+1)}^2; \\[2ex] a_r({t+1})=a_r(t)-\frac{\eta}{\sqrt{n}}\mu(t)w_r(t)s_{i(t+1)}^2.
		\end{cases}
	\end{equation} A simple computation shows that the evolution equations for $\mu(t), \lambda(t)$ close and involve no individual weights: \begin{equation}\label{eq: SGD 2} \begin{split} &\mu(t+1)=\left(1 -\eta \lambda(t) s_{i(t+1)}^2  + \frac{\eta^2 s_{i(t+1)}^4 \mu(t)^2}{n}\right)\mu(t); \\ & \lambda(t+1) =\lambda(t) +\frac{\mu(t)^2 \eta }{n}(\eta \lambda(t)s_{i(t+1)}^4-4s_{i(t+1)}^2). \end{split}
	\end{equation} Since the total loss is, up to a fixed constant, $\ell(\Theta(t))\sim \mu(t)^2$, a spike in the loss at threshold $L$ is equivalent to a spike in the prediction $|\mu(t)|$ reaching $\sim \sqrt{L}$ and, in the sequel, we will phrase all results in terms of $|\mu(t)|$ becoming large. As a result of the standing hypothesis $\lambda_0<\lambda^{\rm MB}_{\rm max}$,  an easy inductive proof shows that $\lambda(t)$ is monotonically decreasing, and we estimate the decrease above and below by \begin{equation}  0\le a_-\frac{\eta |\mu(t)|^2}{n} \le \lambda(t)-\lambda(t+1) \le a_+\frac{\eta |\mu(t)|^2}{n};\end{equation} \begin{equation}
		 a_-:=\min_i(4s_i^2-\eta \lambda_0s_i^4);\qquad a_+:=\max_i(4s_i^2). \label{eq: decrease in curvature} 
	\end{equation} In the sequel, we reserve $a_\pm$ for these data-dependent constants. For the prediction $\mu(t)$, we neglect the higher-order term and the update to $\lambda(t)$ to find \begin{equation} \label{eq: find the RW approximately} \mu(t)\approx (1-\eta \lambda_0s^2_{i(t)})(1-\eta \lambda_0s^2_{i(t-1)})\dots (1-\eta \lambda_0s^2_{i(1)}) \mu_0\end{equation} again valid until either $\mu(t)\sim \sqrt{n/\eta}$ or when the approximation $\lambda(t)\approx \lambda_0$ breaks down. Taking the logarithm, we find the partial sums \begin{equation}
			\label{eq: partial sum} \log |\mu(t)| \approx \log|\mu_0|+\sum_{u\le t}\log|1-\eta \lambda_0 s^2_{i(u)}|
		\end{equation}  of independent and identically distributed random variables with mean $G(\lambda_0)$ given by \eqref{eq: introduce G}. As a result, the classical convergence theorems of probability produce a {\em tri}chotomy of possible behaviours: \begin{enumerate}[label=\alph*).]
		\item If $G(\lambda_0)>0$, then the law of large numbers applied to \eqref{eq: partial sum} forces $|\mu(t)|$ to eventually become large, so a spike is guaranteed; \item If $G(\lambda)<0$ and $\eta \lambda s_\star^2>2$, then large values of $|\mu(t)|$ correspond to {\em large deviations} of the sum \eqref{eq: partial sum}. As a result, spikes of size $M\lesssim \sqrt{n/\eta}$ occur with probabilities $\sim M^{-\vartheta}$, where the Cram\'er exponent $\vartheta>0$ is given by \eqref{eq: def vartheta informal}; \item If $\eta \lambda s_\star^2<2$, then every term in the sum \eqref{eq: partial sum} is non-positive and $|\mu(t)|$ is monotonically decreasing. No spike is possible. 
	\end{enumerate}  We refer to the cases a) and b), distinguished by the sign of the log-drift $G(\lambda_0)$ as the {\em inflationary} and {\em deflationary} regimes of the catapult phase, and c) as the monotone phase. In case b), the exponent $\vartheta$ has the property that, treating the approximation \eqref{eq: find the RW approximately} as exact, $|\mu(t)|^\vartheta$ is a martingale, which produces the asymptotic hitting probability claimed by closely following the upper and lower bounds of Cram\'er's theorem, see, for example, \cite[Proof of Theorem 2.2.3, pp. 30-33]{dembo2009large} or \cite[Theorem I.3]{hollander2000large}. When the prediction $\mu(t)$ ever reaches the level $\mu(t)\sim \sqrt{n/\eta}$, then the updates \eqref{eq: decrease in curvature} become significant, and the spike allows a reduction in the curvature. \\\\ {\bf Key Challenge of a Rigorous Proof.} The key difficulty extending the argument sketched above to the full update rule \eqref{eq: SGD 2} is to account for the fact that $\lambda(t)$ will vary in time according to \eqref{eq: decrease in curvature}. While the updates to the kernel {\em at each time step} are negligible while $|\mu(t)|\ll \sqrt{n/\eta}$, it is {\em a priori} possible that it could fall significantly without $|\mu(t)|$ ever becoming large, either by a larger number of smaller spikes at height $M\ll \sqrt{n/\eta}$, or over very large timescales $T\gg \log({n/\eta})$. We highlight two further technical ideas which are needed to address this challenge. Firstly, Proposition \ref{prop: slow escape} shows that the probability of the kernel changing by some threshold $\epsilon$ while the prediction remains below $M$ decays like \begin{equation}\label{eq: preview slow escape} \lesssim \exp\left(-\epsilon \gamma \left(\frac{n}{\eta M^2}\right)^\beta\right); \qquad \beta, \gamma>0. \end{equation} This makes the claim 3) in the summary rigorous, and motivates us to restrict to scales \begin{equation} \label{eq: restrict to scales} M\lesssim \frac{1}{\log^{1/\beta}(n/\eta)}\sqrt{\frac{n}{\eta}} \end{equation} in order to ensure that the error probability decays like $(n/\eta)^{\vartheta/2}$. In this way, neglecting the updates to the kernel in \eqref{eq: find the RW approximately} requires us to restrict to scales which are smaller (by a polylogarithmic correction) than the scale $|\mu(t)|\sim \sqrt{n/\eta}$ required to neglect the higher-order term in \eqref{eq: SGD 2}. We therefore distinguish between `moderate spikes', up to such a scale, and `large spikes' on scales which rapidly decrease the curvature.  \\\\ The second idea which is necessary to overcome the challenge is that, in order to maintain the asymptotic $(|\mu_0|/M)^\vartheta$, it is necessary to apply the previous estimate \eqref{eq: preview slow escape} repeatedly: If $|\mu_0|$ is very small, it may happen that $(n/\eta)^{\vartheta/2}\gg (|\mu_0|/M)^\vartheta$. In order to avoid this, we show how the range $[|\mu_0|,M]$ may be divided into a (potentially large) number of scales $$L_0=M\gg L_1\gg \dots\gg L_k=|\mu_0|$$ in such a way that the probabilities decreasing the kernel by $\frac{\epsilon}{2^j}$ between scales $L_{j+1}, L_j$ has the correct asymptotic $\le c_j(|\mu_0|/M)^\vartheta$, with a summable prefactor $\sum_j c_j<\infty$. The error probabilities may therefore be recombined without changing the overall behaviour to produce the asymptotic probability claimed.  \\\\ As a result of the first point, we divide the analysis between `moderate' and `large' spikes. The strategy for making statements about large spikes will be to consider first the probability of reaching the largest scale $M$ \eqref{eq: restrict to scales} allowed, and then considering the probability of a kernel reduction conditional on having reached a moderate spike. For this reason, the theorem statements in Theorems \ref{thrm: LLN large} - \ref{thrm: LDP large spikes} express only the conditional probabilities, and the reassembly to the claims of Theorem \ref{thrm: informal} follows using the strong Markov property and the corresponding results for moderate spikes. \edit{\subsection{Implications for Curriculum Learning} \label{sec: curriculum learning} The above discussion, and its rigorisation in Theorems \ref{thrm: informal}-\ref{thrm: informal2}, produce two views on {\em curriculum learning}, where the way in which different datapoints are sampled is updated in an on-line way over the course of the training. Since the presence of spikes ultimately leads to convergence to minima with lower curvature, it is advantageous to construct a curriculum in such a way as to increase their probability.\\ Firstly, the probability of spikes is increased by using the same draw $i(t)$ repeatedly, rather than resampling. This motivates,  for some $k\in \NN$, using the same data point $k$ times in a row, so that $i(kt+1)=i(kt+2)=\dots=i(kt+k)$ for each $t\ge 0$, while $(i(tk+1): t\ge 0)$ are fresh, independent samples. In this case, the same calculations leading to \eqref{eq: find the RW approximately} now produce $$ \mu(kt)=(1-\eta \lambda_0 s^2_{i(k(t-1)+1)})^k\dots (1-\eta \lambda_0 s^2_{i(1)})^k \mu_0$$ thus replacing each multiplicative factor with its $k^{\rm th}$ power. Viewing all $k$ updates as the single step of a Markov chain, the effective log-drift $G^{\rm eff}_k(\lambda)=kG(\lambda)$ has the same sign as the original SGD, but the exponent in the deflationary case is changed to $$\vartheta^{\rm eff}_k(\lambda)=\sup\left\{\theta\ge 0: \sum_{i=1}^m p_i (|1-\eta \lambda s_i^2|^k)^\theta\le 1\right\} =\frac{\vartheta(\lambda)}{k}. $$ Correspondingly, choosing $k$ of the scale $\sim \log(n/\eta)$ - which in practical applications is not excessively large - produces spike probabilities which do not decay with $n, \eta$, thus improving the probability of converging to a minimum with lower curvature. \\\\ Secondly, the machinery of large deviations also allows us to identify an {\em entropically minimal curriculum with guaranteed spikes}. It may also be advantageous to determine a curriculum which is, in some sense, a minimal alteration of the original sampling law $\{p_i\}$. Measuring the `size' of an alteration through the {\em entropic cost} of a (random, time-dependent) $q=(q_i(t): i\in [m], t\ge 0)$, defined by the expected, time-summed Kullback-Leibler divergence $${\rm Ent}(q|p):=\EE\left[\sum_{t\ge 0} \sum_i q_i(t)\log \frac{q_i(t)}{p_i}\right] $$ it may be shown that the probabilities of spikes in Theorems \ref{thrm: informal}-\ref{thrm: informal2} decay like \begin{equation*}\begin{split} &- \log \PP(\text{Loss reaches } L) \\ & \hspace{2cm}\sim \inf\left\{{\rm Ent}(q|p): \text{ Loss reaches $L$ with high probability under curriculum }q\right\} \end{split}\end{equation*} and similarly for other types of events. Further, the proofs of the large deviations estimates in Section \ref{sec: LDP} identify a (nearly) optimal $q$ in epochs between spikes: Given $\lambda_0 \in (\lambda^{\rm MB}_{\rm crit}, \lambda^{\rm MB}_{\rm max})$, the equation $$ \sum_{i=1}^m p_i |1-\eta \lambda_0 s_i^2|^\theta=1$$ has a unique positive root $\vartheta$, which may be computationally found by interval bisection, and the entropically optimal $q$ is given by $q_i=|1-\eta \lambda_0 s_i^2|^\vartheta p_i$. In practice, if $\lambda_0$ is not known or difficult to calculate, the multiplicative factors may be approximated by proxies $$  |1-\eta \lambda_0 s_i^2|\approx \left(\frac{\ell(\Theta(1): i(1)=i)}{\ell(\Theta(0))}\right)^{1/2}.$$ This therefore produces a computationally tractable way to find a curriculum which ensures convergence to a lower-curvature minimum with close to minimal entropic cost.} \paragraph{Organisation of the Paper} In the remainder of this section, we provide simple examples which illustrate the rich variety of structures derived in Theorem \ref{thrm: informal}. Section \ref{sec:litreview} is a literature review and discussion of connections of the present work to the literature. The remainder of the paper is devoted to the rigorisation of Theorem \ref{thrm: informal}, starting with preliminaries and definitions in constant use in Section \ref{sec: preliminaries}. The precise statements and proofs are divided by the four combinations of moderate or large spikes, and inflationary or deflationary drift: Section \ref{sec: LLN} deals with moderate spikes in the inflationary case, Section \ref{sec: LDP} with moderate spikes in the deflationary case, Section \ref{sec: large LLN} with both cases for large spikes. Finally, Section \ref{sec: RELU} establishes a precise connection between the case of linear activation sketched above, and the case of ReLU activation, which shows how the conclusions may be extended to that case.
	 
	  \subsection{Examples of the Theorem} \label{sec: examples}

In this section, we will present some examples to illustrate the theory derived in the previous section. Even in very simple settings, where the dataset consists of only two or three datapoints chosen with (potentially) unequal probabilities, the structure of $G, \alpha$ shows non-trivial structure. \begin{wrapfigure}[20]{l}{0.4\textwidth}
 \centering
  \begin{tikzpicture}
\pgfmathsetmacro{\eta}{1}
\pgfmathsetmacro{\p}{0.5}
\pgfmathsetmacro{\x}{1.3}

\definecolor{convblue}{RGB}{144,198,233}   
\definecolor{convblue2}{RGB}{173,222,245}  
\definecolor{divred}{RGB}{231,150,150}     
\definecolor{stripegreen}{RGB}{33, 186, 34}

\pgfmathsetmacro{\lamThree}{2/(\eta*1.3*1.3)}        
\pgfmathsetmacro{\lamTwo}{1/\eta}                  
\pgfmathsetmacro{\lamOne}{2/\eta}                  
\pgfmathsetmacro{\lamG}{1.591}                      
\pgfmathsetmacro{\lamF}{2.2}                    

\begin{axis}[
  width=6cm,height=4cm,
  axis lines=middle,
  xmin=0,xmax=2.366,
  ymin=-3,ymax=1,
  xlabel={$ \lambda $},
  ylabel={$(i)$},
  domain=0:2.366,
  samples=400,
  thick,
  xtick={0,\lamThree,\lamTwo,\lamOne},
  xticklabels={},
  ytick={-4,-3,-2,-1,0,1,2,3},
  every tick label/.append style={font=\small},
]

\

\path[fill=convblue, fill opacity=0.35, draw=none]
  (axis cs:0,-4) rectangle (axis cs:\lamThree,3);

\path[fill=convblue2, fill opacity=0.35, draw=none]
  (axis cs:\lamThree,-4) rectangle (axis cs:\lamG,3);

\path[fill=stripegreen, fill opacity=0.25, draw=none]
  (axis cs:\lamG,-4) rectangle (axis cs:2.366,3);

\addplot[very thick,blue]
  { 0.5*ln(abs(1-\eta*1.3*1.3*x))
    + (1-0.5)*ln(abs(1-\eta*x)) };

\addplot[black,thick,dashed]   coordinates {(\lamThree,-4) (\lamThree,1)};
\addplot[black,thick,dashed]   coordinates {(\lamG,-4)  (\lamG,1)};




\pgfmathsetmacro{\yTopOne}{-1.5}  
\pgfmathsetmacro{\yTopTwo}{2.} 

\draw[<->] (axis cs:0,\yTopOne) --
           node[above]{c}
           (axis cs:\lamThree,\yTopOne);

\draw[<->] (axis cs:\lamThree,\yTopOne) --
           node[above]{b}
           (axis cs:\lamG,\yTopOne);

\draw[<->] (axis cs:\lamG,\yTopOne) --
           node[above]{a}
           (axis cs:2.366,\yTopOne);




\end{axis}
\end{tikzpicture}


\begin{tikzpicture}
\pgfmathsetmacro{\eta}{1}
\pgfmathsetmacro{\p}{0.83}
\pgfmathsetmacro{\x}{sqrt(3)}

\pgfmathsetmacro{\lamThree}{2/(\eta*\x*\x)}        
\pgfmathsetmacro{\lamTwo}{1/\eta}                  
\pgfmathsetmacro{\lamOne}{2/\eta}                  
\pgfmathsetmacro{\lamrootone}{0.794}				   
\pgfmathsetmacro{\lamroottwo}{0.950}	               
\pgfmathsetmacro{\lamrootthree}{1.028}              
\pgfmathsetmacro{\lamG}{1.12}                      
\pgfmathsetmacro{\lamC}{2/((\p*\x*\x+1-\p)*\eta)}  
\pgfmathsetmacro{\lamF}{2.2}                    


\definecolor{convblue}{RGB}{144,198,233}   
\definecolor{convblue2}{RGB}{173,222,245}  
\definecolor{divred}{RGB}{231,150,150}     
\definecolor{stripegreen}{RGB}{33, 186, 34}

\begin{axis}[
  width=6cm,height=4cm,
  axis lines=middle,
  xmin=0,xmax=4/(\eta*\x*\x),
  ymin=-2,ymax=1,
  xlabel={$ \lambda $},
  ylabel={${ (ii)}$},
  domain=0:4/3,
  samples=400,
  thick,
  xtick={0,\lamThree,\lamTwo,\lamOne},
  xticklabels={},
  ytick={-4,-3,-2,-1,0,1,2,3},
  every tick label/.append style={font=\small},
]

\addplot[very thick,blue]
  { \p*ln(abs(1-\eta*3*x))+(1-\p)*ln(abs(1-\eta*x)) };

\addplot[black,thick,dashed]   coordinates {(\lamThree,-4) (\lamThree,2.5)};

\addplot[black,thick,dashed]           coordinates {(\lamF,-4) (\lamF,2.5)};
\addplot[black,thick,dashed]           coordinates {(\lamrootone,-4) (\lamrootone,2.5)};
\addplot[black,thick,dashed]           coordinates {(\lamroottwo,-4) (\lamroottwo,2.5)};
\addplot[black,thick,dashed]           coordinates {(\lamrootthree,-4) (\lamrootthree,2.5)};




\path[fill=convblue, fill opacity=0.35, draw=none]
  (axis cs:0,-4) rectangle (axis cs:\lamThree,3);

\path[fill=convblue2, fill opacity=0.35, draw=none]
  (axis cs:\lamThree,-4) rectangle (axis cs:\lamrootone,3);

\path[fill=stripegreen, fill opacity=0.25, draw=none]
  (axis cs:\lamrootone,-4) rectangle (axis cs:\lamroottwo,3);
  
 \path[fill=convblue2, fill opacity=0.35, draw=none]
  (axis cs:\lamroottwo,-4) rectangle (axis cs:\lamrootthree,3);

\path[fill=stripegreen, fill opacity=0.25, draw=none]
  (axis cs:\lamrootthree,-4) rectangle (axis cs:4/3,3); 

\pgfmathsetmacro{\yTopOne}{-1.5}  
\pgfmathsetmacro{\yTopTwo}{-2.} 
\pgfmathsetmacro{\ybotone}{-3}

\draw[<->] (axis cs:0,\yTopOne) --
           node[above]{c}
           (axis cs:\lamThree,\yTopOne);

\draw[<->] (axis cs:\lamThree,\yTopOne) --
           node[above]{b}
           (axis cs:\lamrootone,\yTopOne);  

\draw[<->] (axis cs:\lamrootone,\yTopOne) --
           node[above]{a}
           (axis cs:\lamroottwo,\yTopOne);

\draw[<->] (axis cs:\lamroottwo,\yTopOne) --
           node[above]{b}
           (axis cs:\lamrootthree,\yTopOne); 
           
\draw[<->] (axis cs:\lamrootthree,\yTopOne) --
           node[above]{a}
           (axis cs:4/3,\yTopOne);




\end{axis}
\end{tikzpicture}

\caption{Plots of $G(\lambda)$ for the examples (\ref{eq: example 1} - \ref{eq: example 2})}
  \label{fig:GPlot}
\end{wrapfigure}
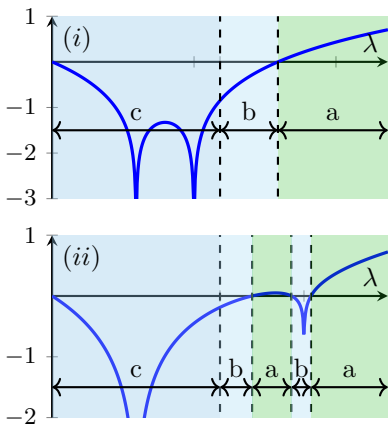  \subsubsection{The Phases with Two Datapoints} \label{ex: two data points}  We begin by discussing the structure of $G$ and the different parameter regimes where the dataset consists only of two points. In Figure \ref{fig:GPlot}, the log-drift function $G(\lambda)$ is plotted for the full range $0\le \lambda \le 4/{\eta s_\star^2}$ with $\eta=1$ for the two illustrative cases \begin{equation} \label{eq: example 1}
	 {\rm i)} \quad \{(s_i, p_i)\}=\{(1, 0.5), (1.3, 0.5)\}
\end{equation}\begin{equation} \label{eq: example 2}
 {\rm ii)} \quad \{(s_i, p_i)\}=\{(1, 0.83), (\sqrt{2}, 0.17)\}.
\end{equation} In each case, the different ranges of $\lambda$ produced by the discussion under \eqref{eq: partial sum} are labelled, and colour-coded consistently with Figure \ref{fig: phases intro}. We see that the function $G$ given by \eqref{eq: introduce G} is never monotone and neither convex nor concave. Moreover, the non-monotonicity can result in multiple sign changes of $G$, illustrated in case ii), meaning that the distinction between the phases a and b is not necessarily monotone as a function of $\lambda$.
		
		
\subsubsection{Comparison of Critical Values to Full-Batch Critical Values} We next compare the parameter ranges in Theorem \ref{thrm: informal} to the critical values $\lambda_{\rm crit}, \lambda_{\rm max}$ for the full-batch dynamics, applied to the loss $$ \ell(\Theta)=\sum_{i=1}^m L(F(\Theta, s_i), y_i)=|\mu(t)|^2\sum_{i=1}^m p_i s_i^2 =:|\mu(t)|^2 \bar{s}^2 $$ where we write $\bar{s}^2:=\EE s_{i(1)}^2$. Following the analysis of \cite{lewkowycz2020large,zhu2022quadratic}, the critical value of $\lambda$ separating the convergent from catapult phases, is \begin{equation}
	\lambda^{\rm FB}_{\rm crit}=\frac{2}{\eta \bar{s}^2}.
\end{equation} By definition, $\bar{s}\le s_\star$, with an equality only in the degenerate case where $|s_i|=s_\star$ for each $i$. As a result, it follows that the lower bound of the SGD catapult phase is \begin{equation}
	\lambda^{\rm MB}_{\rm crit}:=\frac{2}{\eta s_\star^2}\le \lambda^{\rm FB}_{\rm crit}.
\end{equation} From Theorem \ref{thrm: informal}, we conclude that, aside from the degenerate case, large spikes remain possible and are, in the worst case, polynomially (un)likely, in a nonempty region $\lambda\in (\lambda^{\rm MB}_{\rm crit}, \lambda^{\rm FB}_{\rm crit})$ where they are not possible for full-batch gradient descent. \\ \\ It is not generally possible to make a link between the catapult and monotone phases of full-batch gradient descent, and the inflationary and deflationary phases. To see this, we note that the condition for catapults in the full-batch case may be written more suggestively as \begin{equation}
	\log|1-\eta \lambda \bar{s}^2|=\log\left|1-\eta \lambda \mathbb{E}s_{i(1)}^2\right|>0
\end{equation} which closely resembles the condition for the inflationary regime \begin{equation}
	\EE\left[\log|1-\eta\lambda s_{i(1)}^2|\right]>0.
\end{equation} However, neither of the two displays above implies the other. While the function $u\mapsto \log u$ is concave on its domain, the function $\log |u|$ is neither globally convex nor globally concave, and no application of Jensen's inequality is possible. In the next two examples we give explicit examples to show that both possible implications between the above displays are false. Correspondingly, it is possible for the same parameters to be in the inflationary phase for SGD but the monotone phase for GD, or in the catapult phase for GD and the deflationary phase for SGD. Hence, for a given dataset and learning rate $\eta$, the presence of a spike for GD neither implies, nor is implied by, a high-probability spike for SGD.
\subsubsection{The inflationary regime without full-batch catapults} \label{ex: SGD catapults}  Consider the two-point example with $\eta=0.3$, $s_1=1, p_1=0.6, s_2=3, p_2=0.4$, started from initial curvature $\lambda_0=\frac43$. The critical curvature for the full-batch loss is \begin{equation}
	\lambda^{\rm FB}_{\rm crit}=\frac{2}{\eta \sum_{i=1}^2 p_i s_i^2}=\frac{100}{63} > \lambda_0
\end{equation} so the full-batch gradient descent (GD) is in the convergent phase. Meanwhile, the log-drift for SGD is \begin{equation}
	G(\lambda_0)=0.6 \times \log 0.6 +0.4\times \log 2.6 \approx 7.6\times 10^{-2} \end{equation} leading to an eventual large spike of the loss. 
	\subsubsection{The deflationary regime with full-batch catapults} \label{ex: full batch catapults}
	
Consider the dataset \begin{equation}{\rm iii)} \qquad \{(s_i, p_i)\}= \left\{(5.92, 0.4),(3.74, 0.6)\right\} \end{equation} with $\eta=0.1, \lambda_0=1$. The critical curvature for the full-batch loss is \begin{equation}
	\lambda_{\rm crit}=\frac{2}{\eta \sum_{i=1}^2 p_i s_i^2} \approx 0.89 \in \left( \frac{\lambda_0}2, \lambda_0\right)
\end{equation} which implies that the full-batch dynamics is in the catapult phase. On the other hand, the log-drift for SGD is \begin{equation}
	G(\lambda_0)\approx -0.18
\end{equation} and the SGD is in the deflationary regime.

\subsubsection{Non-monotonicity of Spikes as a function of Curvature} \label{ex: non monotone behaviour} The structure of the catapult phase shown in Theorem \ref{thrm: informal} shows that increasing curvature may not lead to an increased propensity of the system to spike. An example where $G$ is not monotone over $(\lambda^{\rm MB}_{\rm crit}, \lambda^{\rm MB}_{\rm max})$ was shown in Section \ref{ex: two data points} above, and which means that increasing $\lambda_0$ may \begin{enumerate}
	\item Decrease $G(\lambda_0)$, remaining in the inflationary regime, so that the large spike occurs later; or \item Move from the inflationary regime to the deflationary regime, making a large spike less likely.
\end{enumerate} Within the deflationary regime, the function $\vartheta(\lambda)$ given by \eqref{eq: def vartheta informal} is the unique non-negative root to the given equation. The implicit function theorem thus shows that it is differentiable away from singularities $\lambda=(\eta s_i^2)^{-1}$, but the resulting closed form need not have a sign. Correspondingly, a third result of increasing $\lambda$ may be \begin{enumerate}\setcounter{enumi}{2}
	\item Increase $\vartheta(\lambda)$, remaining in the deflationary regime, and so make large spikes much less likely.
\end{enumerate} Taking the dataset \begin{equation} {\rm iv)}\qquad 
	\label{eq: dataset nonmonotone alpha} \{(s_i, p_i)\}=\{(1, 0.55), (0.5, 0.34), (0.9, 0.11)\}
\end{equation} produces \begin{equation}
	\label{eq: conclusion nonmonotone alpha}\vartheta(2.60/\eta)\approx 0.380;\qquad  \vartheta(3.14/\eta) \approx 0.102; \qquad \vartheta(3.95/\eta)\approx 0.315.
\end{equation} Consequently, spikes are more likely when $\lambda\approx 3.14/\eta$ than either of the other two values. A plot of $G$ and $\vartheta$ for this dataset is displayed in Figure \ref{fig: NMA}.

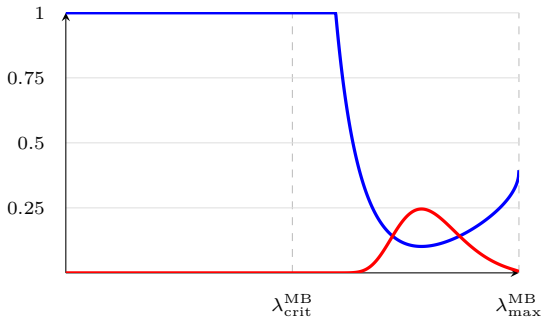
\begin{wrapfigure}{l}{0.54\textwidth}
\centering
\begin{tikzpicture}

\pgfplotstableread[col sep=comma]{ex2.csv}\datatable

\def\xmin{0}
\def\xmax{4}

\pgfplotsset{
  myclean/.style={
    axis lines=middle,
    xmin=\xmin, xmax=\xmax,
    tick align=outside,
    tick style={draw=none},
    xlabel near ticks,
    ylabel near ticks,
    label style={font=\small},
    clip=true,
  }
}

\begin{axis}[
  myclean,
  at={(0,-0.28\textheight)}, anchor=north west,
  width=0.54\textwidth,
  height=0.22\textheight,
  xlabel={},
  xmajorgrids=true,
  ymajorgrids=true,
  xtick={2,4},
  xticklabels={$ \lambda^{\mathrm{MB}}_{\mathrm{crit}}$, $ \lambda^{\mathrm{MB}}_{\mathrm{max}}$},
  x grid style={black!25, dashed},
  ytick={0.25,0.5,0.75,1},
  yticklabels={0.25,0.5,0.75,1},
  y grid style={black!12},
  every tick label/.append style={font=\scriptsize},
]

\addplot[very thick, blue] table[x=lambda, y=vartheta_cap] {\datatable};

\addplot[very thick, red]
  table[
    x=lambda,
    y expr=exp(-0.5*ln(1e12)*\thisrow{vartheta_cap})
  ] {\datatable};

\end{axis}

\end{tikzpicture}

\caption{$\max(1,\vartheta(\lambda))$ (blue) and $n^{-\vartheta(\lambda)/2}$ with $n=10^{12}$ (red) for the dataset \eqref{eq: dataset nonmonotone alpha}.}
\label{fig: NMA}
\end{wrapfigure}

\subsection{Literature Review and Discussion}\label{sec:litreview}

\paragraph{\em Motivation of the Model: High Dimensionality and the Ubiquity of Spikes} We first give a motivation of the models described by Theorem \ref{thrm: informal}, as well as an informal argument that it partially explains the ubiquity of spikes in practice. In addition to the large numbers of parameters, the dimension $d$ of the domain of a learned function may also be in the millions or billions. In various models studied in the literature, the dynamics in the different directions completely or partially decouple and evolve independently of one another over the training process, for example in SGD-related methods in matrix factorisation \cite{gemulla2011large}, see also \cite[Proposition 5]{liang2025gradient} for a similar observation. The same also applies to Mixture-of-Expert type architectures \cite{dai2022stablemoe,roller2021hash,qiu2024layerwise,seo2025mofe,shazeer2017outrageously,fedus2022switch} if the routing coefficients are kept constant after initialisation, as is done in practice in \cite{fei2024scaling,team2025every}. \\ \\ The models in Theorem \ref{thrm: informal} are chosen as a caricature of the per-channel dynamics of such a model. Globally, we imagine that choosing any datapoint ${\bf x}_i$ from the whole training dataset only affects the parameters of one channel $1\le \alpha(i)\le d$, producing a loss through a subnetwork of one of the forms in Theorem \ref{thrm: informal} through a scalar $s_i=\langle e_\alpha, {\bf x}_i\rangle$. Even if the input data ${\bf x}_i$ are normalised, $s_i$ will (in general) not be, and it is possible for multiple datapoints to contribute to each channel. Theorem \ref{thrm: informal} now governs the probability of a spike in each channel $\alpha$, with a decay rate no worse than $(n/\eta)^{-\vartheta_\alpha/2}$ as soon as \eqref{eq: minibatch critical values} is satisfied.  \\\\ We now consider the evolution of the loss of the whole network. If, for some $\theta\ge 0$, a significant fraction of the channels have $\vartheta_\alpha \le \theta$, then the total number of spikes is approximately a Poisson random variable, with mean of the order $\mathcal{O}(d(n/\eta)^{-\theta/2})$. In particular, since $\theta>0$ can be small, a large number of spikes can be caused without imposing any polynomial relation on $d, n$.
\paragraph{\em Convergence of SGD, Catapult Mechanism}
{
The basic analysis of SGD, see for example \cite{ljung2003analysis,lan2020first}, shows that, for sufficiently small $\eta$, all but a vanishing fraction of the time is spent near critical points. While the landscape of $\ell$ may contain many saddle points \cite{dauphin2014identifying}, it is known that SGD avoids, with high probability, spurious critical regions, see for example \cite{mertikopoulos2020almost} and references therein. The appearance of large spikes in the loss of SGD has been attributed empirically to the catapult mechanism \cite{zhu2023catapults}, and as a jumping-off point, we take the models discussed in \cite{lewkowycz2020large,zhu2022quadratic} to understand how the catapult mechanism interacts with SGD noise. In this setting, the results presented in Theorem \ref{thrm: informal} above allow us to rigorously validate many of the results of \cite{zhu2023catapults}, while allowing refinements of others, see Example \ref{ex: non monotone behaviour}. Both the informal analysis given above and the rigorous proofs can also be extended to cover the quadratic networks studied in \cite{zhu2022quadratic} without significantly changing the analysis, replacing \eqref{eq: SGD 2} above by \cite[Equations (11-12)]{zhu2022quadratic}.\\ \paragraph{\em Implicit bias, edge of stability, and sharpness.}
Beyond convergence, recent literature has sought to understand which {\em implicit bias} mechanisms of gradient methods help methods find minima with additional structural properties, see \cite{pesme2021implicit, vasudeva2025rich} and references therein. Blanc et al. \cite{blanc2020implicit}, Li et al. \cite{li2021happens}, Shalova et al. \cite{shalova2024singular} and Marion-Chizat \cite{marion2024deep} showed that, on a sub-manifold of minima, SGD will further select points for smaller curvature, typically on a longer timescale than the original convergence to the minima. Recent works have also identified the {\em edge of stability} effect \cite{andreyev2024edge,arora2022understanding,damian2022self,wu2022alignment}, where  gradient descent algorithms tend to sharpen curvatures below the critical value $2/\eta$, and reduce curvatures above this value. \\ \paragraph{\em Lazy training and the NTK limit.} One of the insights driving recent breakthroughs on proving the convergence of gradient descent algorithms \cite{du2018gradient,li2018learning} is the observation that, when $F$ is a wide neural network, each component $\theta_i, 1\le i\le n$ will hardly vary over the evolution in the limit $n\to \infty$, a phenomenon now called {\em lazy training} \cite{chizat2019lazy}. As a result, the whole network behaves like a purely linear model around its linearisation \cite{jacot2018neural}. Consistently with this, the models analysed here behave, in epochs between spikes, close to purely linear models. The (im)probability of the spikes is determined by the random dynamics of the linear part, while the spikes themselves allow the system to (briefly) leave the lazy regime and show nonlinear effects.\\ \paragraph{\em Diffusion and large-deviation perspectives.}
{
An important step in understanding the typical behaviour of SGD is to view it as the discretisation of a stochastic differential
equation, going back to Li \cite{li2017stochastic}. This insight led to a heuristic explanation for SGD finding flatter minima via large deviation theory by \cite{ibayashi2021quasi}, made precise by Azizian et al. \cite{azizian2024long,azizian2025global}. The viewpoint of the current work is complementary to the works \cite{azizian2024long,azizian2025global}, which show a preference for minimisers of a certain quasi-potential in the limit $\eta\to 0$ with dimension parameter $n$ held fixed, while we consider the opposite sort of limit $n\to \infty$ with $\eta$ held fixed. In both cases, the transitions between the basins of attraction of different minima are driven by large deviations and produces short time intervals where the loss is large, although driven in the two cases by different mechanisms. The models of networks we use for tractable, exact analysis are similar to, and slightly more complicated than, diagonal networks, where the benefit of stochasticity was identified by Pesme \cite{pesme2021implicit,even2023s}.\\ \paragraph{\em The Criterion for Instability \& The Edge of Stochastic Stability} {The criterion derived heuristically above for distinguishing the inflationary and deflationary regimes in terms of $G(\lambda)$ is closely related to the {\em Lyapunov exponents} of the linearised dynamics, which were investigated by Chemnitz-Engel \cite{chemnitz2025characterizing}. We distinguish the positivity of $G$, which drives a large spike of the nonlinear dynamics,} and {\em blowup in expectation}, where the mean of the process diverges. An analysis of the latter condition \cite{agarwala2024high} leads to a condition called the {\em stochastic edge of stability},\edit{ see also \cite[Appendix A]{chemnitz2025characterizing}}. This distinction is not, of course, specific to SGD, and may be illustrated by whether one considers the process $$ Z_t:=\exp\left(B_t-\frac{t}4\right),$$ for $B$ a standard Brownian motion, to be large or small at large times $t$. For the purposes of understanding the ubiquity of spikes, only the almost sure condition is relevant, since measuring largeness in expectation counts potentially large contributions from events only observed on a very small proportion of realisations. Indeed, this naturally poses the question of how unlikely such large values are, in the cases where they are not guaranteed, which is answered by the LDP analysis underlying the deflationary cases of Theorem \ref{thrm: informal}. In fact, the edge of stochastic stability criterion reappears elsewhere in the analysis\footnote{Precisely: The criterion for the edge of stochastic stability is equivalent to $\vartheta(\lambda)\le 2$, which determines whether we may take $\beta=2$ or $\beta<2$ in Proposition \ref{prop: slow escape}.}, in determining the scale of poly-logarithmic adjustments to the scale $\sqrt{n/\eta}$ below which `slow kernel decrease' contributes negligible probabilities, see Proposition \ref{prop: slow escape} and the proof of Theorem \ref{thm: LDP spikes}. 

 \paragraph{\em How Spikes End} A byproduct of the analysis, which is relevant in quantifying the reduction in the curvature before and after a spike, is to identify the ways in which spikes {\em end} after a large threshold has been reached. We show that three distinct effects are possible: gradual curvature reduction beyond a critical threshold; neuron deactivation in models with ReLU activation; or spike collapse, where nonlinear effects end the spike in a single step. 
  \paragraph{\em Phases of Curvature vs Phases of Learning Rate} The distinct phases and their phenomena can be interpreted in two different ways. In \cite{lewkowycz2020large,zhu2022quadratic}, the different phases are described by the learning rate $\eta>0$, whereas we refer to them as depending on curvature $\lambda$. In fact, all of the quantities $G(\lambda)=G(\lambda; \eta, \{\text{data}\})$, $\vartheta(\lambda)=\vartheta(\lambda;\eta,\{\text{data}\})$ depend only on $\lambda, \eta$ through the product $\lambda\eta$, which allows an exact translation of the two viewpoints. We take the viewpoint that $G, \vartheta$ are functions of $\lambda$, which determine how the curvature $\lambda(t)$ will vary over the course of the training. The alternative viewpoint would be to describe Theorems \ref{thm:lln moderate} - \ref{thm: LDP spikes} as describing, for a given initial curvature $\lambda_0$, which learning rates $\eta$ make curvature reductions via a spike likely, or at least not too unlikely.

\section{Preliminaries} \label{sec: preliminaries} For the remainder of the paper, we consider a dataset $\{(s_i, p_i)\}$ fixed forever. We define, for $1\le i\le m$, \begin{equation} \label{eq: def bi}
		b_i(\lambda):=|1-\lambda \eta s_i^2|
	\end{equation} which are the multiplicative factors appearing in the approximation \eqref{eq: find the RW approximately}. For upper and lower bounds we define, for thresholds $\epsilon>0, \kappa>0$, and every $n$,\begin{equation}
		\label{eq: def bi+} b_i^+:=b_i^+(\epsilon,\kappa,\lambda)=b_i+\eta s_\star^2 \epsilon + \eta s_\star^4 \kappa^2; 
	\end{equation} \begin{equation}
		\label{eq: def bi-} b_i^-:=b_i^-(\epsilon,\kappa,\lambda)=\max\left\{b_i-\eta s_\star^2 \epsilon - \eta s_\star^4 \kappa^2, 0\right\}. 
	\end{equation} These are constructed so that, whenever $0\le |\mu|\le \kappa\sqrt{n/\eta}$ and $\lambda_0-\epsilon \le \lambda \le \lambda_0$, we have the two-sided bound on the multiplicative factor appearing in \eqref{eq: SGD 2}: \begin{equation} \label{eq: multiplicative ublb} b_i^-(\epsilon,\kappa,\lambda_0)\le \left|1-\eta \lambda s_i^2+\frac{\eta s_i^4 \mu^2}{n}\right|\le b^+_i(\epsilon,\kappa,\lambda_0).\end{equation}	
The probabilistic analysis rests on the use of tools from martingale theory and the construction of (sub-, super-) martingales. We refer to \cite{williams1991probability} for an exposition. We equip the probability space $(\Omega, \cF, \PP)$ on which the SGD is defined with the filtration \begin{equation}
 	\cF(t):=\sigma\left(a_r(0), w_r(0), i(s): 1\le r\le n, 1\le s\le t\right).
 \end{equation}  
In the sequel, all filtration-dependent properties, such as adaptedness or the martingale property, will implicitly be taken with respect to this filtration, and we will not reiterate the filtration. For thresholds $M$, we define the stopping time $\TPM$ where the prediction first exceeds $M$ in absolute value: \begin{equation}
		\label{eq: define taupred linear model} \TPM:=\inf\{t\ge 0: |\mu(t)|\ge M\}
	\end{equation} and, for $\lambda<\lambda_0$, we define the stopping time $\TK{\lambda}$ when the kernel falls below $\lambda$:\begin{equation}
		\label{eq: define tauK linear model} \TK{\lambda}:=\inf\{t\ge 0: \lambda(t)<\lambda\}.
	\end{equation} Whenever $Z=(Z(t), t\ge 0)$ is an adapted process and $T$ is a stopping time, we write $Z^T$ for the stopped process $Z^T(t)=Z(t\land T), t\ge 0$.

\section{Moderate Spikes in the Log-Inflationary Regime} \label{sec: LLN} We now give the precise result on the appearance of moderate spikes in the inflationary regime $G(\lambda_0)>0$. 
\begin{theorem}\label{thm:lln moderate}
In the setting of Assumption \ref{hyp: assumption linear model}, assume further that $G(\lambda_0)>0$. Then, for every $0<\delta<1$, there exist constants $\vartheta, \kappa, \gamma>0$, depending on $\delta, \lambda_0$ and $\{(s_i, p_i)\}$  such that, for all $n, \eta$ with $n/\eta\ge e$ and for every threshold \begin{equation} \label{eq: threshold choice LLN} M\in\left[\h |\mu_0|,\kappa\sqrt{\frac{n}{\eta}}\h \right]\end{equation} it holds that\[
\mathbb{P}_{\mu_0, \lambda_0}\!\left(\frac{\TPM}{\log(M/|\mu_0|)} \notin \left[\frac{1-\delta}{G(\lambda_0)},\,\frac{1+\delta}{G(\lambda_0)}\right]\right)
\;\le\; 2(M/|\mu_0|)^{-\vartheta} + 2\exp\left(-\gamma \frac{n}{\eta M^2}\right).
\]

\end{theorem}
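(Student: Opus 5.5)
The plan is to compare $\log|\mu(t)|$ with two random walks, one bounding it from above and one from below, for as long as the prediction and kernel stay in a controlled window. The laws of large numbers for these walks, in quantitative Cramér/Chernoff form, give the timing. The nonlinear and kernel corrections are kept under control by a separate bound on the total kernel decrease.

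\textbf{Step 1 (choice of window and constants).} Since $G(\lambda_0)>0$ is finite, every $b_i(\lambda_0)=|1-\eta\lambda_0 s_i^2|$ is strictly positive. Fix $\epsilon,\kappa>0$ so small that:
\begin{itemize}
\item $b_i^-(\epsilon,\kappa,\lambda_0)\ge b_i(\lambda_0)/2>0$ for all $i$;
\item both drifts $G^\pm:=\sum_i p_i\log b_i^\pm$ lie in $\bigl[(1-\delta/3)G(\lambda_0),\,(1+\delta/3)G(\lambda_0)\bigr]$.
\end{itemize}
Define the stopping time $\tau:=\TPM\land \TK{\lambda_0-\epsilon}$. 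By \eqref{eq: multiplicative ublb}, for $t<\tau$ we have the sandwich
$$\log|\mu_0|+S^-(t)\le \log|\mu(t)|\le \log|\mu_0|+S^+(t),\qquad S^\pm(t)=\sum_{u\le t}\log b^\pm_{i(u)}.$$
Here $S^\pm$ are i.i.d.\ sums with bounded increments and means $G^\pm$.

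\textbf{Step 2 (timing on the good event).} Set $\ell:=\log(M/|\mu_0|)$, $T_-:=(1-\delta)\ell/G(\lambda_0)$ and $T_+:=(1+\delta)\ell/G(\lambda_0)$. Suppose the kernel has not dropped by $\epsilon$ before $\TPM$. Then the event in the theorem requires one of two things.
\begin{itemize}
\item $S^+(t)\ge \ell$ for some $t\le T_-$. This is a large deviation of $S^+$ above its mean path. Using the exponential supermartingale $e^{\theta(S^+(t)-tG^+)-t\Lambda(\theta)}$ and Doob's inequality, its probability is at most $e^{-\vartheta_1\ell}=(M/|\mu_0|)^{-\vartheta_1}$.
\item $S^-(T_+)<\ell$. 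This is a Chernoff lower-tail event for $S^-$ at time $T_+$. Since $T_+G^-\ge (1+\delta/2)\ell$, its probability is likewise at most $(M/|\mu_0|)^{-\vartheta_2}$.
\end{itemize}
Taking $\vartheta=\min(\vartheta_1,\vartheta_2)$ produces the term $2(M/|\mu_0|)^{-\vartheta}$.

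\textbf{Step 3 (the kernel cannot drop by $\epsilon$ before the spike; main obstacle).} By \eqref{eq: decrease in curvature}, the event $\TK{\lambda_0-\epsilon}<\TPM$ forces
$$\sum_{t<\tau}|\mu(t)|^2\ge \frac{\epsilon n}{a_+\eta}.$$
Since $|\mu(t)|<M$ for $t<\tau$, this in turn forces the path to spend of order $n/(\eta M^2)$ steps at heights comparable to $M$ without exiting. I would decompose heights into dyadic-in-log bands $|\mu|\in[Me^{-k-1},Me^{-k})$ and bound the expected contribution through the number of visits to each band.

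On the sandwich, $\log|\mu|$ dominates a walk with positive drift $G^-$. From any level, the probability of returning to a lower band before exiting through $M$ is therefore bounded below $1$ uniformly. A geometric-trials/Chernoff bound then makes the occupation time of band $k$ exceed $x$ with probability at most $e^{-cx}$.

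The total requirement is $\sum_k N_k M^2e^{-2k}\ge \epsilon n/(a_+\eta)$. Choosing thresholds $x_k\propto e^{k}\,n/(\eta M^2)$ and taking a union bound over $k$ gives a total bound of order
$$\sum_k \exp\bigl(-c e^{k}n/(\eta M^2)\bigr)\le \exp\bigl(-\gamma n/(\eta M^2)\bigr),$$
using $n/(\eta M^2)\ge \kappa^{-2}$ to absorb the summed prefactor.

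Making this occupation argument rigorous is the delicate step. The walk's drift is only valid before $\tau$, so the argument has to be carried out on the stopped process. Conveniently, the bands are defined relative to $M$, so small $|\mu_0|$ causes no loss. Combining Steps 2 and 3 by a union bound yields the stated estimate.
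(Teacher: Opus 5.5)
Your proposal is correct. Steps 1--2 are essentially the paper's own Steps 1 and 3 in Appendix \ref{appendix: mod inf}: you sandwich $\log|\mu|$ between the walks with bases $b_i^\pm$ up to $\TPM\land\TK{\lambda_0-\epsilon}$, and you use exponential (super)martingale and Chernoff bounds for the early and late tails.

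There is one small slip. With a window of width $\delta/3$ you only get $T_+G^-\ge(1+\delta)(1-\delta/3)\ell$, which is below $(1+\delta/2)\ell$ once $\delta>1/2$. It is still strictly larger than $\ell$, and that is all the Chernoff bound needs. The constant prefactors from $\lfloor T_+\rfloor$ and from the union bounds can be absorbed by shrinking $\vartheta,\gamma,\kappa$.

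The genuine difference is in how you control the kernel.

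\textbf{The paper's route.} It works with the second moment. Before $T$ one has $\EE[|\mu(t+1)|^2\mid\cF(t)]\ge(1+\alpha)|\mu(t)|^2$, with $\alpha=\sum_ip_i(b_i^-)^2-1>0$. Hence $\alpha\sum_{s<t}|\mu(s)|^2-|\mu(t)|^2$, stopped at $T$, is a supermartingale, and $\EE\sum_{s<T}|\mu(s)|^2\le 16M^2/\alpha$ uniformly over starting points in the window. Khas'minskii's lemma upgrades this to an exponential moment of $\frac{\alpha}{32M^2}\sum_{s<T}|\mu(s)|^2$. Markov's inequality then gives the term $2\exp(-\gamma n/(\eta M^2))$.

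\textbf{Your route.} You use the pathwise lower walk and occupation times of log-bands below $M$. Your \emph{geometric trials} sketch is cleanest to make rigorous as follows. Let $\sigma_k$ be the first entrance into band $k$. Every later $t<\tau$ in band $k$ satisfies $S^-(t)-S^-(\sigma_k)<1$. So $N_k$ is dominated by the time a fresh copy of $S^-$ spends below $1$. By Chernoff, that time exceeds $x$ with probability at most $\sum_{j\ge x-1}\PP(S^-(j)<1)\le Ce^{-cx}$. The union bound with $x_k\propto e^k n/(\eta M^2)$ then closes exactly as you describe.

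\textbf{What each buys.} The paper's version is shorter and gives the prefactor $2$ directly, with no sum over scales to absorb. It only needs $\sum_ip_i(b_i^-)^2>1$, which is weaker than $G^->0$ by Jensen. It is also the same mechanism as Proposition \ref{prop: slow escape}, so one tool serves both regimes. Your version uses nothing beyond Chernoff bounds. It also makes transparent why $|\mu_0|$ plays no role: the kernel budget is spent almost entirely in the top $O(1)$ bands below $M$, each occupied for $O(1)$ steps with exponential tails.
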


Since the proof uses ideas which are developed in the course of the large deviations theory for the deflationary regime, it is deferred until Appendix \ref{appendix: mod inf}. \begin{remark}
	The second term arises from considering the probability that the kernel $\lambda(t)$ changes enough to destroy the approximation \eqref{eq: find the RW approximately} before $\TPM$. In order for the corresponding error term to have a similar rate of decay to the first, we must restrict to $$ M\lesssim \sqrt{\frac{n}{\eta \log(n/\eta)}}$$ \findme consistently with \eqref{eq: restrict to scales}. \end{remark}

\section{Moderate Spikes in the Deflationary Regime}\label{sec: LDP}

We now turn to the deflationary regime, where the log-drift in the lazy training regime is negative. We first introduce relevant notation.
Assuming \eqref{eq: minibatch critical values}, we define $\vartheta(\lambda)=\vartheta(\lambda, \eta)$ as the maximal root to
\begin{equation}\label{eq: eq defining alpha}
\theta\mapsto\sum_{i=1}^m p_i\,\big|1-\eta \lambda s_i^2\big|^{\theta}=1.
\end{equation}
The focus of this section is the case $G(\lambda)<0$, in which case the derivative of the function on the left-hand side is negative at $\theta=0$; the function is always convex, and assuming \eqref{eq: minibatch critical values}, diverges to infinity as $\theta\to \infty$, which together guarantees a unique positive root $\vartheta(\lambda)>0$. This root has the property that it makes the approximate model, ignoring higher-order terms as in \eqref{eq: find the RW approximately} $$ Z_t:= (|1-\eta \lambda_0s_{i(t)}^2|\dots |1-\eta \lambda_0s_{i(1)}^2|\mu_0)^{\vartheta(\lambda)} $$ into a non-negative martingale, see Step 1 of Lemma \ref{lemma: new hitting negative}. This allows a derivation of \begin{itemize} \item a large-deviations upper bound on hitting probabilities using optional stopping (Step 2); \item a matching lower bound via a `tilting' argument, using $Z_t$ to define a change of probability measure $\QQ$ under which the log-drift is positive (Steps 3-5).\end{itemize} For an initial prediction $\mu_0$ and target $M$, both upper and lower bounds derived in this way decay like $(|\mu_0|/M)^{\vartheta(\lambda)}$. Lemma \ref{lemma: new hitting negative} presents these arguments, keeping track also of the neglected higher-order terms. The remainder of the section is dedicated to showing that the bounds are not affected by the updates to the kernel, provided that $M$ is chosen according to \eqref{eq: restrict to scales} for some suitably chosen $\beta>0$.  Motivated by these bounds, the theorem which results from this discussion is as follows.

\begin{theorem} \label{thm: LDP spikes}
Under Assumption \ref{hyp: assumption linear model}, assume that $G(\lambda_0)<0$  and that \eqref{eq: minibatch critical values} holds. Then, there exists a constant $C\in(0,\infty)$ depending only on the data $(p_i,s_i)$, $\eta$, and the intervals below, such that the following hold.  Let $\vartheta>0$ be given by \eqref{eq: eq defining alpha} and let $0<\vartheta_-<\vartheta<\vartheta_+$. Then, for some $c=c(\{p_i, s_i\}, \eta, \vartheta_\pm)$ and some $\beta>0$, and all thresholds
\[
|\mu_0|\ \le\ M\ \le\ \frac{c}{\log^{1/\beta}(n/\eta)}\,\sqrt{\frac{n}{\eta}}
\]
we have the two-sided decay rate
\begin{equation}\label{eq: perturbative LDP spike}
C^{-1}\Big(\tfrac{M}{|\mu_0|}\Big)^{-\vartheta_+}
\ \le\
\mathbb{P}_{\mu_0, \lambda_0}\big(\TPM<\infty\big)
\ \le\
C\Big(\tfrac{M}{|\mu_0|}\Big)^{-\vartheta_-}.
\end{equation} 

\end{theorem}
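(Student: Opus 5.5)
The plan is to compare $\log|\mu(t)|$, up to the stopping time $\TPM\wedge\TK{\lambda_0-\epsilon}$, with random walks whose increments are $\log b_i^\pm(\epsilon,\kappa,\lambda_0)$ from \eqref{eq: def bi+}--\eqref{eq: def bi-}, with $\kappa$ tied to $M\le\kappa\sqrt{n/\eta}$. By \eqref{eq: multiplicative ublb}, on this window we have $b^-_{i(t+1)}|\mu(t)|\le|\mu(t+1)|\le b^+_{i(t+1)}|\mu(t)|$.

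Define $\vartheta^\pm(\epsilon,\kappa)$ as the positive roots of $\sum_ip_i(b_i^\pm)^\theta=1$. Since the $b_i$ are not all $\le1$ and $G(\lambda_0)<0$, the implicit function theorem and convexity give $\vartheta^\pm\to\vartheta$ as $\epsilon,\kappa\to0$. The zero-valued case $b_i^-=0$ needs only a truncation. Fix $\epsilon,\kappa$ small enough that $\vartheta_-<\vartheta^+\le\vartheta\le\vartheta^-<\vartheta_+$.

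For the upper bound, set $Z(t)=|\mu(t)|^{\vartheta^+}$. This is a nonnegative supermartingale up to the stopping time, because $\EE[(b^+_{i})^{\vartheta^+}]=1$. Optional stopping gives
\[
\PP(\TPM<\TK{\lambda_0-\epsilon})\le(|\mu_0|/M)^{\vartheta^+}.
\]
For the lower bound, tilt: under $\QQ$ with weights $q_i=p_i(b_i^-)^{\vartheta^-}$, the walk $\sum\log b^-$ has positive drift. The law of large numbers, together with a concentration bound on the overshoot, shows that $M$ is hit in time $\approx\log(M/|\mu_0|)/\EE_\QQ\log b^-$ with $\QQ$-probability near one. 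Changing measure back costs a factor $(|\mu_0|/M)^{\vartheta^-}$, up to the bounded overshoot $\max_i b_i^{\vartheta^-}$, which is absorbed into $C$. This is exactly the content I would place in a lemma of the type Lemma \ref{lemma: new hitting negative}. I would then state both bounds on the event that the kernel has not dropped by $\epsilon$, and set $\TK{\lambda_0-\epsilon}=\infty$ in the lower bound by showing that this drop is unlikely under $\QQ$ over the short time horizon.

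It remains to control $\PP(\TK{\lambda_0-\epsilon}<\TPM)$, the event of a slow kernel decrease, by $o((|\mu_0|/M)^{\vartheta_-})$. This is the main obstacle. By \eqref{eq: decrease in curvature}, a drop of $\epsilon$ with $|\mu|<M$ requires $\sum_t|\mu(t)|^2\gtrsim\epsilon n/\eta$. I would prove an estimate of the form \eqref{eq: preview slow escape} by an exponential supermartingale argument of the form
\[
\exp\Bigl(\gamma\bigl(\tfrac{n}{\eta M^2}\bigr)^{\beta}\cdot\tfrac{\eta}{n}\textstyle\sum_t|\mu(t)|^{2}\Bigr)\times(\text{Lyapunov weight }|\mu(t)|^{\theta}),
\]
with $\theta<\min(\vartheta,2)$. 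The choice $\theta<2$ explains why $\beta$ may need to be less than $2$. Naively, the resulting bound $\exp(-\epsilon\gamma(n/\eta M^2)^\beta)$ beats $(|\mu_0|/M)^{\vartheta_-}$ only when $|\mu_0|$ is not tiny. So I would run the multiscale argument sketched in the introduction. Choose scales $L_k=|\mu_0|<\dots<L_0=M$ and allot a kernel drop of $\epsilon2^{-j}$ to the time spent below $L_{j+1}$ before reaching $L_j$. The strong Markov property then bounds each piece by $c_j(|\mu_0|/M)^{\vartheta_-}$ with $\sum_jc_j<\infty$, using that the per-scale probability is $\exp(-\epsilon2^{-j}\gamma(n/\eta L_{j+1}^2)^\beta)$ and that $L_j$ shrinks geometrically in $j$. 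The constraint $M\le c\log^{-1/\beta}(n/\eta)\sqrt{n/\eta}$ handles the top scale $j=0$, where one needs $\exp(-\gamma' \log(n/\eta))\le(|\mu_0|/M)^{\vartheta_-}$ whenever $|\mu_0|\gtrsim (n/\eta)^{-1}$. Smaller $|\mu_0|$ is handled by the deeper scales.

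Combining the pieces, the probability of hitting $M$ is at most the supermartingale bound plus the slow-decrease error, and at least the tilted lower bound minus that error. Adjusting the constants in $C$ yields \eqref{eq: perturbative LDP spike}.
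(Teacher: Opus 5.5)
\paragraph{Upper bound.} Your upper bound is essentially the paper's argument. The stopped supermartingale $|\mu|^{\vartheta^+}$ is Lemma~\ref{lemma: new hitting negative}(i). Combining the slow-escape estimate with a geometric scale decomposition, kernel budgets $\epsilon 2^{-j}$ and the strong Markov property is Step~4 of the paper's proof.

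You replace the polynomial moment \eqref{eq: infinite time moment est} plus Khas'minskii's lemma by a direct exponential supermartingale. That is fine, but the weight cannot literally be $|\mu(t)|^\theta$. With that weight, optional stopping only controls the exponential functional multiplied by $|\mu(T)|^\theta$, and $|\mu(T)|$ has no lower bound on the kernel-drop event. The weight $1+A(|\mu(t)|/M)^\theta$, with $0<\theta<\vartheta^+$, $\theta\le 2$ and a small rate, does work, using $(|\mu|/M)^2\le(|\mu|/M)^\theta$ for $|\mu|<M$.

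\paragraph{Lower bound: the gap.} You tilt once from $|\mu_0|$ all the way to $M$, and then dismiss the kernel drop under $\QQ$ ``over the short time horizon''. The theorem imposes no lower bound on $|\mu_0|$, so this horizon, of order $\log(M/|\mu_0|)/\EE_{\QQ}\log b^-$, is unbounded. The bound $a_+\eta M^2t/n$ on the kernel decrease by time $t$ is therefore not below $\epsilon$ uniformly in $|\mu_0|$. For $M$ at the top of the allowed range, it fails once $\log(M/|\mu_0|)\gg\log^{2/\beta}(n/\eta)$.

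This is exactly why Lemma~\ref{lemma: new hitting negative}(ii) carries the condition \eqref{eq: consistency condition hitting LB}. It is also why Step~3 of the paper runs the lower bound inductively over the scales $L_j=(n/\eta)^{-j/2}M$. At scale $j$ the kernel budget is $\epsilon/2^{j+1}$ and the time window is of order $(n/2\eta)^j\log(n/\eta)$; the smallness of $L_j^2$ pays for the long window.

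\paragraph{How to close it.} You need either that induction or a horizon-free estimate. Your fixed i.i.d.\ tilt, which is simpler than the paper's state-dependent one, gives the latter.

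\begin{itemize}
\item Under $\QQ$, $S_t=\sum_{s\le t}\log b^-_{i(s)}$ is a random walk with positive drift.
\item Set $T:=\TPM\wedge\TK{\lambda_0-\epsilon}$. Then $\log|\mu(t)|-\log|\mu(s)|\ge S_t-S_s$ for $s<t<T$.
\item By the strong Markov property at the first visit, the number of $t<T$ with $|\mu(t)|\in(Me^{-x-1},Me^{-x}]$ has $\QQ$-mean at most $C_1:=1+\sum_{u\ge 1}\QQ(S_u<1)<\infty$.
\item Hence $\EE_{\QQ}\sum_{t<T}|\mu(t)|^2\le 2C_1M^2$, uniformly in $|\mu_0|$.
\item Since $T<\infty$ $\QQ$-a.s., Markov's inequality gives $\QQ(\TK{\lambda_0-\epsilon}\le\TPM)\le 2C_1a_+\eta M^2/(\epsilon n)$. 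This is small under the constraint on $M$.
\end{itemize}

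With this step added, your single-tilt lower bound is valid and shorter than the paper's inductive one.
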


We begin with a lemma, which deals with the parts of the proof which can be done perturbatively.  \begin{lemma}[Hitting Probabilities, Deflationary Case]\label{lemma: new hitting negative} Assume, in the notation of Assumption \ref{hyp: assumption linear model}, that $G(\lambda_0)<0$, and that $\lambda_0\eta s_\star^2>2$. For $\epsilon, \kappa>0$, let $\vartheta_{\mp}(\epsilon, \kappa)$ be given by \eqref{eq: eq defining alpha} for the shifted indexes $b_i^\pm$ given by (\ref{eq: def bi+} - \ref{eq: def bi-}): \begin{equation}
		\label{eq: choose e k from vartheta new} \sum_{i=1}^m p_i (b_i^+(\epsilon, \kappa, \lambda))^{\vartheta_-}=1; \qquad \sum_{i=1}^m p_i (b_i^-(\epsilon, \kappa, \lambda))^{\vartheta_+}=1
	\end{equation} and assume that $\epsilon, \kappa$ are small enough that \begin{equation}
		\label{eq: mild epsilon kappa} \max_i b^+_i(\epsilon,\kappa, \lambda_0)<4.
	\end{equation} Then, for all $|\mu_0|\le M\le \kappa\sqrt{n/\eta}$, \begin{enumerate}[label=\roman*).]
		\item (Upper Bound)  It holds that \begin{equation}
			\label{eq: main hitting probability conclusion UB} \PP_{\mu_0, \lambda_0}(\TPM<\TK{\lambda_0-\epsilon}, \TPM<\infty) \le \left(\frac{|\mu_0|}{M}\right)^{\vartheta_-}.
		\end{equation} \item (Lower Bound, with hitting time) There exists $v=v(\lambda_0)>0$ such that, for all $k\in[1,\infty)$ such that \begin{equation}
			\label{eq: consistency condition hitting LB} M^2\log(4M/|\mu_0|) < \frac{\epsilon n}{\eta a_+ vk}
		\end{equation} it holds that \begin{equation}\label{eq: main hitting probability conclusion LB} \EE_{\mu_0, \lambda_0}\left[\left(\frac{|\mu(\TPM)|}{M}\right)^{\vartheta_+}\indiq(\TPM\le vk\log (4M/|\mu_0|)\right] \ge \left(1-\frac1{2k}\right) \left(\frac{|\mu_0|}{M}\right)^{\vartheta_+}.\end{equation} 
\end{enumerate}
	 \end{lemma}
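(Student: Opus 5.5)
The plan is the one sketched before the statement: work with the exact multiplicative update $\mu(t+1)=m_{i(t+1)}(t)\mu(t)$, where $m_i(t):=1-\eta\lambda(t)s_i^2+\eta s_i^4\mu(t)^2/n$. Before the stopping time $T:=\TPM\wedge\TK{\lambda_0-\epsilon}$ we have $|\mu(t)|<M\le\kappa\sqrt{n/\eta}$ and $\lambda_0-\epsilon\le\lambda(t)\le\lambda_0$. By \eqref{eq: multiplicative ublb} this gives $b_i^-\le|m_i(t)|\le b_i^+$ for $t<T$, with all $b_i^\pm$ evaluated at $(\epsilon,\kappa,\lambda_0)$. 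We then compare $|\mu|^\theta$ with the martingales built from the shifted factors.

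\textbf{Upper bound.} Set $Y(t):=|\mu(t\wedge T)|^{\vartheta_-}$. On $\{t<T\}$,
\[
\EE\bigl[Y(t+1)\mid\cF(t)\bigr]\le|\mu(t)|^{\vartheta_-}\sum_ip_i(b_i^+)^{\vartheta_-}=Y(t)
\]
by \eqref{eq: choose e k from vartheta new}, so $Y$ is a nonnegative supermartingale. Optional stopping at $T\wedge t$, followed by Fatou as $t\to\infty$, gives $\EE[Y(T)\indiq(T<\infty)]\le|\mu_0|^{\vartheta_-}$. On the event $\{\TPM<\TK{\lambda_0-\epsilon},\TPM<\infty\}$ we have $T=\TPM$ and $Y(T)\ge M^{\vartheta_-}$, which yields \eqref{eq: main hitting probability conclusion UB}. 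Condition \eqref{eq: mild epsilon kappa} is used only to ensure $|m_i|<4$, so nothing pathological, such as a sign change of the $\mu^2$ term, enters.

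\textbf{Lower bound.} Now $Z(t):=|\mu(t\wedge T)|^{\vartheta_+}$ is a submartingale, because $\sum_ip_i(b_i^-)^{\vartheta_+}=1$. It is bounded on $[0,T]$ except for the overshoot at $\TPM$, which is at most a factor $4$, so $Z\le(4M)^{\vartheta_+}$. Optional stopping at $T_k:=T\wedge vk\log(4M/|\mu_0|)$ gives
\[
|\mu_0|^{\vartheta_+}\le\EE[Z(T_k)].
\]
We split the right-hand side according to which event occurs first:
\begin{enumerate}[label=(\alph*)]
\item $\TPM\le vk\log(4M/|\mu_0|)$, which is the desired contribution;
\item $T=\TK{\lambda_0-\epsilon}$ occurs first;
\item the time cutoff occurs first, with $|\mu|<M$.
\end{enumerate}

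\textbf{Event (b).} This is excluded deterministically by \eqref{eq: consistency condition hitting LB}. By \eqref{eq: decrease in curvature}, before $\TPM$ the kernel drops by at most $a_+\eta M^2/n$ per step. Hence in $vk\log(4M/|\mu_0|)$ steps it drops by less than $\epsilon$.

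\textbf{Event (c).} This requires the tilting step. Define $\QQ$ via the density
\[
\prod_{u\le t}\frac{(b^-_{i(u)})^{\vartheta_+}}{1}.
\]
Under $\QQ$ the indices are i.i.d. with law $q_i=p_i(b_i^-)^{\vartheta_+}$. Convexity of $\theta\mapsto\sum p_i(b_i^-)^\theta$ together with $G<0$ forces the derivative at the positive root to be positive, i.e. $\sum_iq_i\log b_i^->0$. Set $v:=2/\sum_iq_i\log b_i^-$ (so $v>0$, with a little room absorbed by monotonicity in $\epsilon,\kappa$), and write $\ell_u:=\log b^-_{i(u)}$. Up to $T$, we have $\log|\mu(t)|\ge\log|\mu_0|+\sum_{u\le t}\ell_u$, a random walk with positive drift under $\QQ$. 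The contribution $\EE_\PP[Z(T_k);\text{(c)}]$ is bounded by $|\mu_0|^{\vartheta_+}\QQ(\text{walk stays below }\log(M/|\mu_0|)\text{ up to time }vk\log(4M/|\mu_0|))$, because on $\{t<T\}$ the ratio $Z(t)/|\mu_0|^{\vartheta_+}$ dominates the density. By Chebyshev (with variance bounded via $\max|\ell_u|$; the case $b_i^-=0$ carries $q_i=0$) this probability is at most $O(1/k)$, and choosing $v$ large enough makes it at most $1/(2k)$.

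\textbf{Conclusion and main obstacle.} Rearranging and dividing by $M^{\vartheta_+}$ gives \eqref{eq: main hitting probability conclusion LB}. I expect the main obstacle to be this last step. It requires making the change-of-measure comparison precise while the true factors $|m_i(t)|$ only dominate $b_i^-$, and obtaining the clean constant $1-\tfrac1{2k}$ uniformly in $M/|\mu_0|$. If Chebyshev turns out too weak when $\log(M/|\mu_0|)$ is small, I would instead bound the $\QQ$-expected hitting time of the level by Wald's identity plus the bounded overshoot, and then apply Markov's inequality.
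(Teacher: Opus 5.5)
\textbf{Gap: the comparison in event (c) runs in the wrong direction.} Your upper bound is correct and matches the paper. Your deterministic exclusion of event (b) via \eqref{eq: consistency condition hitting LB} is also correct. The gap is in event (c). Write $N:=vk\log(4M/|\mu_0|)$, $D(t):=\prod_{u\le t}(b^-_{i(u)})^{\vartheta_+}$ for your density, and $m_i(t)$ for the true factor. On $\{N<T\}$ you have $Z(N)=|\mu_0|^{\vartheta_+}D(N)R_N$, where $R_N:=\prod_{u\le N}\bigl(|m_{i(u)}(u-1)|/b^-_{i(u)}\bigr)^{\vartheta_+}\ge 1$. Hence
\[
\EE_{\PP}[Z(N);N<T]=|\mu_0|^{\vartheta_+}\EE_{\QQ}[R_N;N<T]\ge|\mu_0|^{\vartheta_+}\QQ(N<T).
\]
So the fact that $Z/|\mu_0|^{\vartheta_+}$ dominates the density gives a \emph{lower} bound on the contribution of (c). You need an \emph{upper} bound, to subtract it from $|\mu_0|^{\vartheta_+}\le\EE[Z(T_k)]$. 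The factor $R_N$ can be as large as $\prod_u(b^+_{i(u)}/b^-_{i(u)})^{\vartheta_+}$, which is exponential in $N$. No $O(1/k)$ bound on a $\QQ$-probability can absorb that. Your Wald-plus-Markov fallback does not help either, since it only changes how the $\QQ$-probability is estimated, not the comparison.

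\textbf{How to repair it.} Apply the comparison on the good event, where it points the right way; this is what the paper does. Because (b) is excluded, every step $u\le\TPM\wedge N$ has factor at least $b^-_{i(u)}$. So on $\{\TPM\le N\}$ you get $|\mu(\TPM)|^{\vartheta_+}\ge|\mu_0|^{\vartheta_+}D(\TPM)$. Optional stopping for the martingale $D$ then gives $\EE_{\PP}[|\mu(\TPM)|^{\vartheta_+}\indiq(\TPM\le N)]\ge|\mu_0|^{\vartheta_+}\QQ(\TPM\le N)$ directly, and the submartingale decomposition becomes unnecessary.

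Next, $\QQ(\TPM\le N)\ge\QQ(\sigma\le N)$, where $\sigma$ is the first time your walk $\sum_{u\le t}\ell_u$ reaches $\log(M/|\mu_0|)$. Wald's identity, with overshoot at most $\log 4$, gives $\EE_{\QQ}[\sigma\wedge N]\le\log(4M/|\mu_0|)/g$, where $g:=\sum_iq_i\log b_i^->0$. Markov's inequality then gives $\QQ(\sigma>N)\le 1/(gvk)$, so $v:=2/g$ works uniformly in $M/|\mu_0|$.

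\textbf{Comparison with the paper.} The paper instead tilts by the true factors, normalised by $\rho(t)=\sum_ip_i|m_i(t)|^{\vartheta_+}\ge 1$, and uses $\prod\rho\ge 1$ in this same favourable direction. That costs a time-dependent tilted law and a convexity bound on its drift. Your i.i.d.\ tilt by $b_i^-$ avoids both, once the comparison is placed on the good event.

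\textbf{Alternative: keep your decomposition.} You can instead bound (c) by a Chernoff estimate rather than a tilt. For $\theta=\vartheta_-/2$ one has $c_\theta:=\sum_ip_i(b_i^+)^\theta<1$, and $|\mu(t\wedge T)|^\theta c_\theta^{-(t\wedge T)}$ is a supermartingale. Together with $|\mu(N)|<M$ on (c), this yields $\EE_{\PP}[Z(N);N<T]\le|\mu_0|^{\vartheta_+}(4M/|\mu_0|)^{\vartheta_+-vk|\log c_\theta|}$. This is at most $|\mu_0|^{\vartheta_+}/(2k)$ once $v|\log c_\theta|\ge\vartheta_++1$.
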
\begin{proof}
	 	We deal with the two cases separately. \step{1. Identification of Super- and Submartingales} We first use the relation \eqref{eq: choose e k from vartheta new} to show that, with stopping at time $$T:=\tau^{\rm Pred}_{\kappa\sqrt{n/\eta}}\land \TK{\lambda_0-\epsilon},$$ the process $|\mu(t\land T)|^{\vartheta_-}$ is a (non-negative) supermartingale. The same argument applies, {\em mutatis mutandis}, to show that $|\mu(t\land T)|^{\vartheta_+}$ is a submartingale. \medskip \\  Fix $t\ge 0$. On the event $\{T\le t\}$, we have $\mu((t+1)\land T)=\mu(t\land T)$, producing \begin{equation}\label{eq: submartingale case 1} \mathbb{E}_{\mu_0, \lambda_0}[|\mu((t+1)\land T)|^{\vartheta_-}|\mathcal{F}(t)]\indiq(T\le t)= |\mu(t\land T)|^{\vartheta_-}\indiq(T\le t). \end{equation} Meanwhile, on the event $\{T>t\}$, the exact update rule \eqref{eq: SGD 2} shows that \begin{equation}
	 		\label{eq: one step of coupling} |\mu((t+1)\land T)|^{\vartheta_-}=|\mu(t\land T)|^{\vartheta_-}\left|1-\eta \lambda(t)s^2_{i(t+1)}+\frac{\eta^2 s_{i(t+1)}^4 \mu(t)^2}{n}\right|^{\vartheta_-}
	 	\end{equation} for the index $i(t+1)$, which is sampled independently of $\mathcal{F}(t)$. By the Definitions \eqref{eq: def bi}, \eqref{eq: def bi+}, and because $t<T$ by assumption, the absolute value on the right-hand side is at most $b^+_{i(t+1)}$, and taking out what is known, \begin{equation}
	 		\label{eq: submartingale case 2}  \mathbb{E}_{\mu_0, \lambda_0}[|\mu((t+1)\land T)|^{\vartheta_-}|\mathcal{F}(t)]\indiq(T> t)\le |\mu(t\land T)|^{\vartheta_-}\indiq(T>t)\mathbb{E}\left[(b^+_{i(t+1)})^{\vartheta_-}|\mathcal{F}(t)\right].
	 	\end{equation} The final conditional expectation is the same as the (unconditional) expectation $\sum_i p_i (b_i^+)^{\vartheta_-}=1$, because $i(t+1)$ is independent of $\mathcal{F}(t)$ and using the Definition \eqref{eq: choose e k from vartheta new}. Gathering the two cases (\ref{eq: submartingale case 1}, \ref{eq: submartingale case 2}), we conclude that $|\mu(t\land T)|^{\vartheta_-}$ is a supermartingale, as claimed. \step{2. Upper Bound} We next use the supermartingale identified in Step 1 to prove \eqref{eq: main hitting probability conclusion UB}. Writing $Z^-(t)$ for the supermartingale constructed in Step 1, we observe that $Z^-(t\land \TPM)\le (M\max_i b^+_i)^{\vartheta_-} \le (4M)^{\vartheta_-}$ is uniformly bounded thanks to \eqref{eq: mild epsilon kappa}. Since we chose $M\le \kappa \sqrt{n/\eta}$, the only way the stopping $T$ can occur before $\TPM$ is if $\{\TK{\lambda_0-\epsilon}<\TPM\}$. Consequently, on the event $\{\TPM<\TK{\lambda_0-\epsilon}\}$, we have the lower bound $$Z^-(\TPM)\ge M^{\vartheta_-}.$$ The optional stopping theorem gives \begin{equation}
	 		\label{eq: opt stop UB} |\mu_0|^{\vartheta_-}=\EE_{\mu_0, \lambda_0}[Z^-(0)] \ge \EE_{\mu_0, \lambda_0}[Z^-(\TPM)] \ge M^{\vartheta_-}\PP_{\mu_0,\lambda_0}(\TPM<\TK{\lambda_0-\epsilon})
	 	\end{equation} which rearranges to the claim \eqref{eq: main hitting probability conclusion UB}. \step{3. Lower Bound: Change of Measure} We divide the proof of  \eqref{eq: main hitting probability conclusion LB} into further steps. For the first step, we construct a `tilting' (the Cram\'er transform) and define a probability measure $\mathbb{Q}$ which is absolutely continuous with respect to $\PP_{\mu_0, \lambda_0}$. In subsequent steps, we will see that the construction makes the event $\{\TPM<vk\log(4M/|\mu_0|)\}$ an event of high $\QQ$-probability, provided that $v$ is chosen suitably, which will ultimately lead to the desired estimate \eqref{eq: main hitting probability conclusion LB}. We refer to \cite[Proof of Theorem 2.2.3, p.32]{dembo2009large} for an exposition of the general strategy based on the Cram\'er transform.\\\\ Writing $Z^+(t)$ for the submartingale constructed in Step 1, we define, for $t\ge 0$, random variables\begin{equation}
	 		\rho(t):=\mathbb{E}_{\mu_0,\lambda_0}\left[\frac{Z^+(t+1)}{Z^+(t)}|\cF(t)\right].
	 	\end{equation} From Step 1, it follows that $\rho(t)\ge 1$ almost surely, and the process $$ Z(t):=\frac{Z^+(t)}{|\mu_0|^{\vartheta_+}\rho(0)\dots \rho(t-1)} $$ is a mean-1, non-negative martingale. Reasoning as in Step 2 above, the stopped martingale $Z^{\TPM}$ is additionally uniformly bounded. We may therefore define a change of measure $\QQ\ll\PP_{\mu_0,\lambda_0}$ by setting \begin{equation}
	 		\frac{d\QQ}{d\PP_{\mu_0, \lambda_0}}:=Z(\TPM).
	 	\end{equation} \step{4. Lower Bound: Uniformly Positive Drift under $\QQ$} In order to show that the events appearing in \eqref{eq: main hitting probability conclusion LB} becomes typical under $\QQ$, for $v$ to be chosen, we now consider the $\log$-drift, under this new measure, for $|\mu(t)|$. In this step, we identify the sampling probabilities $p^{\QQ,t}_i$ under the change of measure. This will allow us to choose some $v=v(\lambda_0)<\infty$ such that, up to time $\TPM\land T$, the $\log$-drift is uniformly bounded below by $\frac1{2v}$. This will allow us, in the final step, to show that the hitting time $\TPM$ occurs with high-probability on timescales $\sim v\log(4M/|\mu_0|)$. \\ \\ For $v$ to be chosen, we assume that $k$ satisfies \eqref{eq: consistency condition hitting LB} and consider $t< vk\log (M/|\mu_0|) \land \TPM$. By the assumption on $k$ and using the explicit change \eqref{eq: SGD 2}, the total change in the kernel up to time $t$ is at most $a_+ M^2 t \eta/n <\epsilon$, so $$ \{t< vk\log (M/|\mu_0|) \land \TPM\}\subset \{t<T\land \TPM\}. $$ Fixing $A\in \cF(t)$ and $1\le i\le m$, \begin{equation}
	 		\begin{split}
	 			\QQ(A,i(t+1)=i, t< \TPM) &=\EE_{{\mu_0,\lambda_0}}\left[Z(\TPM)\indiq(A, i(t+1)=i,t< \TPM)\right] \\ & =\EE_{{\mu_0,\lambda_0}}\left[Z(t+1)\indiq(A, i(t+1)=i,t< \TPM)\right]
	 		\end{split}
	 	\end{equation} where, in the second line, we use that $Z^{\TPM}$ is a martingale. Using the definition, we get \begin{equation}\begin{split} 
	 		\dots &=\EE_{{\mu_0, \lambda_0}}\left[Z(t)\indiq(A, t<\TPM)\EE_{{\mu_0, \lambda_0}}\left[\frac{Z^+(t+1) \indiq(i(t+1)=i)}{Z^+(t)\rho(t)}|\cF(t)\right]\right] \\ & \hspace{3cm}=\EE_{\QQ}\left[\indiq(A, t<\TPM)p^{\QQ,t}_i\right]  \end{split}	\end{equation} where $p^{\QQ,t}_i$ are the tilted probabilities, given while $t<\TPM$ by \begin{equation}\label{eq: def pqi} p^{\QQ,t}_i=\frac{1}{\rho(t)}\left|1-\eta \lambda(t)s_i^2 + \frac{\eta^2 s_i^4 \mu(t)^2}{n}\right|^{\vartheta_+}p_i\end{equation} which determines $\rho(t)$ as the normalising factor. Consequently, the conditional mean is bounded below by \begin{equation}\begin{split}
	 			& \EE_{\QQ}\left[\log \left(\frac{|\mu(t+1)|}{|\mu(t)|}\right)|\cF(t), t<\TPM\right] \\ & \hs \hs=\frac1{\rho(t)}\sum_{i=1}^m p_i\left|1-\eta \lambda(t)s_i^2 + \frac{\eta^2 s_i^4 \mu(t)^2}{n}\right|^{\vartheta_+} \log \left|1-\eta \lambda(t)s_i^2 + \frac{\eta^2 s_i^4 \mu(t)^2}{n}\right| \\[1ex] &\hs\hs =\left.\frac{d}{d\theta}\right|_{\theta=\vartheta_+}\underbrace{\log \sum_{i=1}^m p_i \left|1-\eta \lambda(t)s_i^2 + \frac{\eta^2 s_i^4 \mu(t)^2}{n}\right|^\theta}_{=:\psi_t(\theta)}. \end{split}
	 		\end{equation} The cumulant generating function on the right-hand side is a convex function of $\theta$, and so the gradient may be estimated from below by $(\psi_t(\vartheta_+)-\psi_t(\varphi))/(\vartheta_+-\varphi)$, for any $\varphi<\vartheta_+$. Choosing $\varphi=\vartheta_-/2$, and noticing that $\psi_t(\vartheta_+)\ge 0$, we obtain an almost sure and uniform lower bound \begin{equation}\begin{split} \label{eq: drift LB}
	 			& \EE_{\QQ}\left[\log \left(\frac{|\mu(t+1)|}{|\mu(t)|}\right)|\cF(t), t<\TPM\right] \ge -\frac{2}{2\vartheta_+-\vartheta_-}\log \sum_{i=1}^m (b^+_i)^{\vartheta_-/2}=:\frac{1}{2v} \end{split} \end{equation} where the final equality defines $v>0$.  \step{5. Lower Bound: Reassembly}  We are now able to assemble the previous two steps, together with a further martingale calculation, to prove the claim \eqref{eq: main hitting probability conclusion LB} for $v$ constructed in Step 4. We first use a martingale calculation, based on \eqref{eq: drift LB} to show that the event appearing in \eqref{eq: main hitting probability conclusion LB} has $\QQ$-probability at least $1-\frac1{2k}$, from which we deduce the claim \eqref{eq: main hitting probability conclusion LB} by using the definition of $\QQ$ in Step 3.\\\\ From Step 4, it follows that $S^{\TPM\land vk\log (4M/|\mu_0|)}$ is a $\QQ$-submartingale, where we define \begin{equation} S(t):=\log \frac{|\mu(t)|}{|\mu_0|}-\frac{t}{2v}. \label{eq: S Q submg} \end{equation} From \eqref{eq: mild epsilon kappa}, it follows that $S^{\TPM}$ is bounded above by $\log(4M/|\mu_0|)$ and we may use the optional stopping theorem to find \begin{equation}\begin{split}
	 				\label{eq: expected hitting time} &0=\mathbb{E}_{\QQ}S(0) \le \mathbb{E}_{\QQ}S(\TPM\land vk\log (4M/|\mu_0|)) \\& \hs \hs \le \log(4M/|\mu_0|)- \frac1{2v} \mathbb{E}_{\QQ}[\TPM\land vk\log (4M/|\mu_0|)].
	 			\end{split}\end{equation} It follows from a Chebyshev inequality that\begin{equation}
	 				\label{eq: Q prob} \QQ(\TPM\le  vk\log(4M/|\mu_0|)) \ge 1- \frac1{2k}.
	 			\end{equation} To conclude, we rewrite the expectation in \eqref{eq: main hitting probability conclusion LB} as \begin{equation}
	 				\begin{split}&\EE_{\mu_0, \lambda_0}\left[\left(\frac{|\mu(\TPM)|}{M}\right)^{\vartheta_+}\indiq(\TPM\le vk\log (4M/|\mu_0|))\right] \\ & \hs =\left(\frac{|\mu_0|}{M}\right)^{\vartheta_+}\mathbb{E}_{\QQ}\left[\rho(0)\rho(1)\dots \rho(\TPM-1)\indiq(\TPM\le vk\log (4M/|\mu_0|))\right] \\ & \hs \ge \left(\frac{|\mu_0|}{M}\right)^{\vartheta_+}\QQ(\TPM\le vk\log (4M/|\mu_0|)) \end{split} 
	 			\end{equation} where the first line uses the definitions of $Z$ and $\QQ$, and in the second line we recall that $\rho(t)\ge 1$ almost surely. The conclusion \eqref{eq: main hitting probability conclusion LB} now follows from \eqref{eq: Q prob}. \end{proof}

Before turning to the proof of Theorem \ref{thm: LDP spikes}, we discuss and prove an additional proposition. In the upper bound of \eqref{eq: perturbative LDP spike}, we seek to understand the probability that the prediction error $|\mu(t)|$ {\em ever} reaches the threshold, even on timescales much larger than $\log n$. As discussed above \eqref{eq: restrict to scales}, it is in principle possible for a kernel decrease to occur first via a `slow escape', while \eqref{eq: main hitting probability conclusion UB} only allows shows bounds of the desired form if we assume that $\TPM<\TK{\lambda_0-\epsilon}$. To bridge this gap, we give the following proposition, which justifies the claim in the introduction that large spikes are the most probable way for the dynamics to escape the lazy training regime on any timescale. \begin{proposition}[Slow Escape is Exponentially Improbable] \label{prop: slow escape} Under the same hypotheses and with the same notation as Theorem \ref{thm: LDP spikes}, let $\epsilon, \kappa>0$ be such that the exponent $\vartheta_-'$ given by \eqref{eq: eq defining alpha} for the shifted indexes $b^+_i(\lambda_0, \epsilon,\kappa)$ is positive. Then, for any $M\le \min(1,\kappa)\sqrt{n/\eta}$ and $\beta\in (0,\vartheta'_-)\cap(0,2]$, there exists $\gamma=\gamma(\beta)>0$ such that \begin{equation} \label{eq: slow escape conclusion 1}
		\mathbb{P}_{\mu_0,\lambda_0}(\TK{\lambda_0-\epsilon}\le \TPM)\le 2\exp\left(-\epsilon \gamma \left(\frac{n}{\eta M^2}\right)^{\beta/2}\right).\end{equation}	
	
\end{proposition}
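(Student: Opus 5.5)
The plan is to convert the event $\{\TK{\lambda_0-\epsilon}\le \TPM\}$ into a large-deviation event for an additive functional of the prediction, and then bound that functional using an exponential supermartingale built from $|\mu|^\beta$. Throughout I stop at $T:=\TPM\land\TK{\lambda_0-\epsilon}$. Before $T$ we have $|\mu(t)|\le M\le\kappa\sqrt{n/\eta}$ and $\lambda(t)\ge\lambda_0-\epsilon$, so \eqref{eq: multiplicative ublb} applies. I assume $|\mu_0|\le M$; otherwise $\TPM=0$ and there is nothing to prove.

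\emph{Step 1 (reduction).} By \eqref{eq: decrease in curvature}, each step lowers the kernel by at most $a_+\eta\mu(t)^2/n$. Hence on $\{\TK{\lambda_0-\epsilon}\le\TPM\}$ we have
\[
\sum_{t<T}\mu(t)^2\ \ge\ \frac{\epsilon n}{a_+\eta}.
\]
Since $|\mu(t)|\le M$ for $t<T$ and $\beta\le 2$, we get $\mu(t)^2\le M^{2}X(t)$ with $X(t):=(|\mu(t)|/M)^\beta\in[0,1]$. The event is therefore contained in
\[
\Big\{\textstyle\sum_{t<T}X(t)\ge A\Big\},\qquad A:=\frac{\epsilon n}{a_+\eta M^2}.
\]

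\emph{Step 2 (strict contraction).} The function $\theta\mapsto\sum_ip_i(b_i^+)^\theta$ is convex, equals $1$ at $\theta=0$ and at $\theta=\vartheta_-'$. So for $\beta\in(0,\vartheta'_-)$ it equals $1-c_\beta$ for some $c_\beta>0$. Repeating Step 1 of Lemma \ref{lemma: new hitting negative} gives, on $\{t<T\}$,
\[
\EE[X(t+1)\mid\cF(t)]\le(1-c_\beta)X(t).
\]
By \eqref{eq: mild epsilon kappa} we also have the pathwise bound $X(t+1)\le 4^\beta X(t)$.

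\emph{Step 3 (exponential supermartingale).} Set $K:=2/c_\beta$ and
\[
V(t):=\exp\Big(\theta\big(KX(t\land T)+\textstyle\sum_{s<t\land T}X(s)\big)\Big).
\]
For $t<T$, use $e^y\le 1+y+\tfrac12y^2e^{|y|}$ together with $X(t+1)\le 4^\beta X(t)\le 4^\beta$ and $X^2\le X$. This gives
\[
\EE[e^{\theta KX(t+1)}\mid\cF(t)]\le 1+\theta K(1-c_\beta)X(t)+C\theta^2X(t)=1+\theta(K-2)X(t)+C\theta^2X(t),
\]
with $C$ depending only on $K,\beta$. For $\theta=\theta_0(\beta,\{s_i,p_i\})$ small, this is at most $e^{\theta(K-1)X(t)}$, and that is exactly the supermartingale inequality for $V$. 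Since $V(0)\le e^{\theta K}$, Fatou and Markov give
\[
\PP\Big(\textstyle\sum_{t<T}X(t)\ge A\Big)\le e^{\theta K}e^{-\theta A}.
\]

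\emph{Step 4 (conclusion).} This yields a bound of the form $e^{\theta K}\exp(-\theta\epsilon n/(a_+\eta M^2))$. There are two cases.
\begin{itemize}
\item If $\theta\epsilon n/(a_+\eta M^2)\ge 2\theta K$, absorb the prefactor by halving the rate. Since $n/(\eta M^2)\ge1$ and $\beta/2\le1$, the quantity $(n/(\eta M^2))^{\beta/2}$ is dominated by $n/(\eta M^2)$, so \eqref{eq: slow escape conclusion 1} follows with $\gamma=\theta/(2a_+)$.
\item Otherwise $\epsilon(n/(\eta M^2))^{\beta/2}$ is bounded by a constant. Shrinking $\gamma$ then makes the right-hand side of \eqref{eq: slow escape conclusion 1} at least $1$, which is trivial; the factor $2$ leaves room here.
\end{itemize}

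The main obstacle I expect is Step 3. One must check carefully that the quadratic error $C\theta^2X$ can be absorbed uniformly, which uses the boundedness $X\le1$ before $T$ and the bounded multiplicative jumps. One must also handle correctly the single overshoot step at $T$ itself. If the argument turns out to need the weaker $\beta/2$ power, for example because the overshoot only controls $X$ up to $4^\beta$, I expect that loss is harmless, since the weaker power is exactly what is claimed.
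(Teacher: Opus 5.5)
Your proof is correct. It differs from the paper's mainly in how the exponential moment is obtained, and your Step 1 is actually sharper than the paper's.

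\textbf{How the exponential moment is obtained.} The paper proves only a first-moment bound. With $c_\beta=\sum_ip_i(b_i^+)^\beta$ (your $1-c_\beta$), the supermartingale $|\mu(t\land T)|^\beta+(1-c_\beta)\sum_{s<t\land T}|\mu(s)|^\beta$ gives $\EE_{\mu,\lambda}\sum_{s<T}|\mu(s)|^\beta\le|\mu|^\beta/(1-c_\beta)$. The paper notes this holds uniformly over starting points before exit from the region defining $T$. It then quotes Khas'minskii's lemma to get $\EE\exp\bigl(\frac{1-c_\beta}{2M^\beta}\sum_{s<T}|\mu(s)|^\beta\bigr)\le 2$.

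Your supermartingale $\exp(\theta(KX+\sum X))$ is in effect a hands-on proof of such a bound. It is self-contained, but it has some costs:
\begin{itemize}
\item It needs extra pathwise inputs: $X\le 1$ before $T$, and a bounded jump factor. Any finite bound on $\max_i b_i^+$ will do, so you do not need \eqref{eq: mild epsilon kappa}, which the proposition does not explicitly assume.
\item It forces $\theta$ to be small in terms of $K$ and that jump bound.
\item It leaves a prefactor $e^{\theta K}$, which you then remove with the case split in Step 4.
\end{itemize}
The paper's route uses only the one-step contraction and the Markov property. It gives the clean prefactor $2$ and the explicit rate $\gamma=(1-c_\beta)/(2a_+)$.

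\textbf{Your sharper reduction.} In Step 1 you bound $\mu(t)^2\le M^{2-\beta}|\mu(t)|^\beta$ using $|\mu(t)|<M$, instead of using $\eta\mu(t)^2/n\le 1$. This gives the threshold $\epsilon n/(a_+\eta M^2)$ rather than $\frac{\epsilon}{a_+}(n/(\eta M^2))^{\beta/2}$. Combined with either exponential-moment bound, it shows the proposition holds with exponent $1$ in place of $\beta/2$, for every admissible $\beta$.

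\textbf{A caveat on the event.} Like the paper's proof, your Step 1 tacitly reads the event as including $\TK{\lambda_0-\epsilon}<\infty$. When both stopping times are infinite, which is typical in the deflationary regime, the reduction gives nothing. Under the literal reading $\infty\le\infty$, the stated bound would fail.
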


\begin{proof} For $M, \beta$ as given, let $T$ denote the stopping time $T:=\TK{\lambda_0-\epsilon}\land \TPM$. We first find a polynomial moment of $|\mu(t)|$, summed up to time $T$, which is then upgraded to an exponential moment by the Khas'minskii lemma. \step{1. Polynomial Moment} First, consider the process $Z_t:=|\mu(t\land T)|^{\beta}$. Arguing as in Step 1 of Lemma \ref{lemma: new hitting negative} and using that $\beta<\vartheta'_-$, for any $|\mu|\in [|\mu_0|, M], \lambda\in [\lambda_0-\epsilon, \lambda_0]$, \begin{equation}
	\mathbb{E}_{\mu, \lambda}[Z_{t+1}|\cF_t] \le c_\beta |\mu(t)|^{\beta}\indiq(T>t)+Z_t\indiq(T\le t)
\end{equation} for $c_{\beta}:=\sum p_i (b^+_i)^{\beta}<1$. Defining \begin{equation}
	M_t:=Z_t+ (1-c_\beta)\sum_{s<t\land T}|\mu(s)|^{\beta}
\end{equation} we obtain \begin{equation} \label{eq: M is for martingale} \begin{split}\EE_{\mu, \lambda}[M_{t+1}-M_t|\cF_t] & = \EE_{\mu, \lambda}[Z_{t+1}|\cF_t]-|\mu(t\land T)|^\beta +(1-c_\beta)|\mu(t)|^\beta \indiq(T>t) \\ & \le 0\end{split} \end{equation} by considering the events $\{T\le t\}, \{T>t\}$ separately. $M_t$ is thus a non-negative supermartingale, and optional stopping produces, for the same range of $\mu, \lambda$,  \begin{equation} \label{eq: infinite time moment est}
	\mathbb{E}_{\mu, \lambda}\sum_{s<T}|\mu(s)|^{\beta} \le \frac{|\mu_0|^{\beta}}{1-c_\beta}.
\end{equation}  \step{2. Exponential Moment by Khas'minskii's Lemma} We next upgrade the estimate (\ref{eq: infinite time moment est}) to exponential moments. The time $T$ is the first time that the Markov chain $(\mu_t, \lambda_t)$ leaves the rectangle $[-M,M]\times (\lambda_0-\epsilon, \lambda_0)$, and, uniformly over all initial conditions in this rectangle, \eqref{eq: infinite time moment est} produces \begin{equation}
	\frac{1-c_\beta}{2M^\beta}\mathbb{E}_{\mu, \lambda}\sum_{s<T}|\mu(s)|^\beta \le \frac12.
\end{equation} This is the setting of Khas'minskii's lemma (see, for example, \cite{stummer2000exponentials} and references therein) which upgrades the expectation to the exponential moment \begin{equation} \label{eq: khasminskii unified}
	\mathbb{E}_{\mu_0, \lambda_0}\left[\exp\left(\frac{1-c_\beta}{2M^\beta}\sum_{s<T}|\mu(s)|^\beta\right)\right] \le 2. 
\end{equation} \step{3. Markov Inequality} We now identify events to apply Markov's inequality. By considering the maximal kernel decrease at each step, we have \begin{equation}\label{eq: contain event slow escape}\{\TK{\lambda_0-\epsilon}<\TPM\}\subset \left\{ \frac{\eta}{n} \sum_{s<T}  |\mu(s)|^2 \ge \frac{\epsilon}{a_+}\right\} \end{equation}  where $a_+$ is given by \eqref{eq: decrease in curvature} and depends only on $\lambda_0, \{s_i\}$. Since $\eta |\mu(s)|^2/n \le 1$ for every $s<T$, and $\beta\le 2$, we get \begin{equation}
	\left\{\TK{\lambda_0-\epsilon}<\TPM\right\}\subset \left\{\sum_{s<T} \left(\frac{\eta}{n}\right)^{\beta/2}|\mu(s)|^{\beta} \ge \frac{\epsilon}{a_+}\right\} 
\end{equation} which, by Markov's inequality and \eqref{eq: khasminskii unified}, has probability at most \begin{equation}
	\PP_{\mu_0, \lambda_0}(\TK{\lambda_0-\epsilon}<\TPM)\le  2\exp\left(-\frac{\epsilon (1-c_\beta)}{2a_+}\left(\frac{n}{\eta M^2}\right)^{\beta/2}\right).
\end{equation} which is a bound of the desired form, taking $\gamma(\beta):=\frac{1-c_\beta}{2a_+}$.	\end{proof}

 We are now able to give the
\begin{proof}[Proof of Theorem \ref{thm: LDP spikes}]
	We divide the proof to deal with the two cases. \step{1. Coupling \& Parameter Choice} For $\vartheta_\pm$ as given, we observe that the map $\{b_i, p_i\}\mapsto \vartheta$ defined implicitly by \eqref{eq: eq defining alpha} is continuous and monotone in the bases $b_i$. We may therefore choose $\epsilon, \kappa>0$, depending on the data and $\vartheta_\pm$, and satisfying \eqref{eq: mild epsilon kappa} such that the bases $b^{\pm}_i$ given in (\ref{eq: def bi+} - \ref{eq: def bi-}) satisfy \begin{equation}
		\label{eq: choose e k from vartheta} \sum_{i=1}^m p_i (b_i^+(\epsilon, \kappa, \lambda))^{\vartheta_-}<1; \qquad \sum_{i=1}^m p_i (b_i^-(\epsilon, \kappa, \lambda))^{\vartheta_+}>1
	\end{equation} from which it follows that $\vartheta_-':=\vartheta(\{b_i^+, p_i\}), \vartheta'_+:=\vartheta(\{b_i^-, p_i\})$ satisfy \begin{equation}
		\label{eq: compare various exponents} 0<\vartheta_-<\vartheta'_-<\vartheta<\vartheta'_+<\vartheta_+.
	\end{equation}  We recall the definition of $a$ from \eqref{eq: decrease in curvature}, let $v$ be as in Lemma \ref{lemma: new hitting negative}ii), let $\beta$ be any value allowed by Proposition \ref{prop: slow escape}, and let $\gamma=\gamma(\beta)$ be the corresponding value produced by the proposition. With these fixed, we choose \begin{equation} \label{eq: choice of c LDP theorem} c:=\min\left(1, \sqrt{\frac{\epsilon}{a_+v}},\kappa, \left(\frac{\epsilon \gamma}{\vartheta_-}\right)^{1/\beta}\right) \end{equation} and restrict, throughout, to $n, \eta$ satisfying \begin{equation}
		\frac{n}{\eta}\ge e;\qquad \frac12\left(\frac{n}{\eta}\right)^{\beta/2}\ge 2.
	\end{equation} 
	
	 \step{2. Scale Decomposition \& Setup} In order to set up the proofs of both the upper and the lower bound, we now set up a scale decomposition. This allows to identify, in both proofs, a dominant term which can be bounded by Lemma \ref{lemma: new hitting negative}, and error terms which must be estimated. As in Step 1, we have $\beta>0$ and associated constant $\gamma=\gamma_\beta>0$ given by Proposition \ref{prop: slow escape}. We now define \begin{equation}
		\label{eq: def Li} L_j:=\left(\frac{n}{\eta}\right)^{-j/2}M; \qquad 0\le j<k:=\left\lceil\frac{2\log(M/|\mu_0|)}{\log(n/\eta)}\right\rceil; 
	\end{equation} \begin{equation}
		L_k:=|\mu_0|
	\end{equation} which produces a strictly decreasing scale decomposition $M=L_0>L_1>\dots >L_k=|\mu_0|$, with additionally $L_{j+1}/L_j\ge (n/\eta)^{-1/2}$. For $j=0,\dots,k-1$, we define a decreasing sequence of stopping times by \begin{equation}
		T_j:=\tau^{\rm Pred}_{L_j}\land \TK{\lambda_0-\epsilon/ 2^{j}}.
	\end{equation} We treat the event $\{\TK{\lambda_0-\epsilon}<\TPM\}$ as the error event where the approximation via geometric random walk breaks down, and the events $\{T_j=\TK{\lambda_0-\epsilon/2^j}<\infty\}$ as a scale-wise decomposition of this event. The main work of the proof rests on showing that they have probabilities much smaller than the events $\{T_j=\tau^{\rm Pred}_{L_j}<\infty\}$. For the lower bound, we use the containment \begin{equation}
		\label{eq: tree decomposition LB} \{\TPM<\infty, \TPM<\TK{\lambda_0-\epsilon}\}\supset \bigcap_{j=0}^{k-1} \underbrace{\{T_j=\tau^{\rm Pred}_{L_j}<\infty, T_j<\TK{\lambda_0-\epsilon/2^j}\}}_{:=A^-_j}.
	\end{equation} For the upper bound, on the event $\{\TK{\lambda_0-\epsilon}<\TPM\}$, there exists a unique maximal index $J$ with $T_J=\TK{\lambda_0-\epsilon/2^J}$, producing the containment \begin{equation}
		\label{eq: tree decomposition UB} \{\TK{\lambda_0-\epsilon}<\TPM\}\subset \bigcup_{j=0}^{k-1} \underbrace{\{T_\ell=\tau^{\rm Pred}_{L_\ell}\text{ for all }j<\ell<k, T_j=\TK{\lambda_0-\epsilon/2^j}\}}_{:=A^+_j}.
	\end{equation} 
	
	\step{3. Lower Bound} For the lower bound, we use the coupling and Lemma \ref{lemma: new hitting negative} to estimate the probability of intersections of events $A^-_j$ in \eqref{eq: tree decomposition LB} below. We prove, by induction on $0\le j\le k$, \begin{equation} \begin{split}
		\label{eq: LB inductive claim} & \inf\left\{\left(\frac{\mu'}{L_j}\right)^{-\vartheta'_+}\mathbb{P}_{\mu', \lambda'}\left(\bigcap_{0\le \ell<j} A^-_\ell\right): \mu'\in [L_j, 4L_j], \lambda'\in \left[\lambda_0-\frac{\epsilon}{2^{j+1}},\lambda_0\right] \right\} \\ & \hs \ge \left(\frac{L_j}{4M}\right)^{\vartheta'_+}\prod_{0\le \ell <j}\left(1-\frac12\left(\frac{n}{2\eta}\right)^{-\ell}\right). \end{split}
	\end{equation} Once this is established, the special case $j=k, \mu'=\mu_0=L_k$ produces the lower bound asserted in the theorem. \\ \\ We now prove the inductive assertion. The base case $j=0$ is trivial, since the intersection is empty. Given case $j<k$, fix $\mu'\in [L_{j+1}, 4L_{j+1}], \lambda'\in [\lambda_0-\epsilon/2^{j+2},\lambda_0]$, and consider the dynamics started at $\mu', \lambda'$. By the strong Markov property, on the event $A^-_j$, it holds that $\tau^{\rm Pred}_{L_j} \in [L_j, 4L_j]$ thanks to \eqref{eq: mild epsilon kappa}, and $\lambda(T_j)\ge \lambda_0-\epsilon/2^j$. We therefore get \begin{equation}
		\label{eq: inductive LB}  \begin{split} \mathbb{P}_{\mu', \lambda'}\left(\bigcap_{0\le \ell\le j} A^-_\ell \right) & \ge \EE_{\mu',\lambda'}\left[\indiq(A^-_j)\mathbb{P}_{\mu(T_j),\lambda(T_j)}\left(\bigcap_{0\le \ell<j}A^-_\ell \hspace{0.1cm} \bigg|\hspace{0.1cm} A^-_j \right)\right] \\ & \ge \left(\frac{L_j}{4M}\right)^{\vartheta'_+} \left(\prod_{0\le \ell<j}\left(1-\frac12\left(\frac{n}{2\eta}\right)^{-\ell}\right)\right)\EE_{\mu', \lambda'}\left[\indiq(A^-_j)\left(\frac{\mu(T_j)}{L_j}\right)^{\vartheta'_+}\right] \end{split}
	\end{equation} where the second line follows using the inductive hypothesis. Since $\lambda'\ge \lambda_0-\frac{\epsilon}{2^{j+1}}$, for the given starting point we have $$ A_j^-\supset \left\{T_j \le \left(\frac{n}{\eta}\right)^j\log\left(\frac{n}{\eta}\right)\frac{\epsilon}{2^{j+1}c^2 a_+}\right\} \supset \left\{T_j \le \left(\frac{n}{2\eta}\right)^jv\log\left(\frac{L_j}{\mu'}\right)\right\}$$ thanks to the choice \eqref{eq: choice of c LDP theorem}, and where we recall $v$ is given by Lemma \ref{lemma: new hitting negative}. Thanks to \eqref{eq: main hitting probability conclusion LB}, we thus find \begin{equation} \begin{split}\label{eq: chain inductive step} \EE_{\mu', \lambda'}\left[\indiq(A^-_j)\left(\frac{\mu(T_j)}{L_j}\right)^{\vartheta'_+}\right] & \ge \EE_{\mu', \lambda'}\left[\indiq\left\{\tau^{\rm Pred}_{L_j}\le \left(\frac{n}{2\eta}\right)^jv\log\left(\frac{L_j}{\mu'}\right)\right\}\left(\frac{\mu(\tau^{\rm Pred}_{L_j})}{L_j}\right)^{\vartheta'_+}\right] \\ & \ge \left(1-\frac12\left(\frac{n}{2\eta}\right)^{-j}\right)\left(\frac{\mu'}{L_j}\right)^{\vartheta'_+}.\end{split}\end{equation} Substituting this bound back into \eqref{eq: inductive LB} rearranges to the inductive claim \eqref{eq: LB inductive claim}, now proven for case $j+1$.    \step{4. Upper Bound}  We start from the decomposition   \begin{equation} \label{eq: UB LDP decomposition}
		\{\TPM<\infty\}\subset \{\TPM<\TK{\lambda_0-\epsilon}, \TPM<\infty\}\cup\{\TK{\lambda_0-\epsilon}<\TPM\}.
	\end{equation} The first term can be handled immediately by Lemma \ref{lemma: new hitting negative}, and is bounded above by $(|\mu_0|/M)^{\vartheta_-}$ thanks to \eqref{eq: compare various exponents}. In order to complete the proof of the upper bound, we now show a bound of the same order $(|\mu_0|/M)^{\vartheta_-}$ for the probability of the error event in \eqref{eq: UB LDP decomposition}, which we recall from \eqref{eq: tree decomposition UB} is partitioned by the error events $A^+_j, 0\le j\le k-1$. We estimate \begin{equation} \label{eq: SMP}
		\PP_{\mu_0, \lambda_0}(A_j)\le \mathbb{P}_{\mu_0, \lambda_0}(\tau^{\rm Pred}_{L_{j+1}}<\TK{\lambda_0-\epsilon/2^{j+1}})\mathbb{P} 
		_{\mu_0, \lambda_0}(A_j|\tau^{\rm Pred}_{L_{j+1}}<\TK{\lambda_0-\epsilon/2^{j+1}}).
	\end{equation} For $j\le k-1$, Lemma \ref{lemma: new hitting negative} bounds the probability of the conditioning event above by $(|\mu_0|/L_{j+1})^{\vartheta'_-}$.  The remaining conditional probability is estimated using the strong Markov property: The process $$(\hat{\mu}(t), \hat{\lambda}(t))_{t\ge 0}:=(\mu(T_j+t), \lambda(T_j+t))_{t\ge 0}$$ has, on the event $\{T_j<\infty\}$, the same dynamics as the original process, and $A_j$ requires that the restarted chain achieve a curvature reduction of $\epsilon/2^{j+1}$ while keeping the prediction below $L_j$. This is the setting of Proposition \ref{prop: slow escape}, and we obtain \begin{equation}
		\PP_{\mu_0, \lambda_0}(A_j|\tau^{\rm Pred}_{L_{j+1}}<\TK{\lambda_0-\epsilon/2^{j+1}})\le 2\exp\left(-\frac{\epsilon \gamma}{2^{j+1}}\left(\frac{n}{\eta L_j^2}\right)^{\beta/2}\right)=2\exp\left(-\frac{\epsilon \gamma}{2^{j+1} M^\beta}\left(\frac{n}{\eta}\right)^{\beta (j+1)/2}\right).
	\end{equation} By the choice of $M$ in \eqref{eq: choice of c LDP theorem}, we may simplify the exponent to \begin{equation}
		\dots \le 2\exp\left(-\frac{\vartheta_-}{2}(\log (n/\eta))\left(\frac12\left(\frac{n}{\eta}\right)^{\beta/2}\right)^{j}\right).
	\end{equation} \findme As above, we restrict to $n, \eta$ with $(n/\eta)^{\beta/2}>4$, for which we can estimate the exponential in brackets as $2^j\ge 2j$, to find overall, for $j\ge 1$, \begin{equation}
		\label{eq: penultimate bound Ak} \mathbb{P}_{\mu_0, \lambda_0}(A_j|\tau^{\rm Pred}_{L_{j+1}}<\TK{\lambda_0-\epsilon/2^{j+1}}) \le 2\left(\frac{n}{\eta}\right)^{-\vartheta'_-j} \le 2\left(\frac{L_{j+1}}{M}\right)^{\vartheta'_-}\left(\frac{n}{\eta}\right)^{-\vartheta'_-(j-1)/2}.	\end{equation} The same holds for $j=0$ with the final factor replaced by 1. Recombining using \eqref{eq: SMP} and summing over all $k$, and recalling that we assume $n/\eta\ge 2$, we get \begin{equation}
			\label{eq: bad event} \mathbb{P}_{\mu_0, \lambda_0}(\TK{\lambda_0-\epsilon}<\TPM)\le 2\left(\frac{|\mu_0|}{M}\right)^{\vartheta_-}\left(1+\sum_{k\ge 0}2^{-\vartheta'_- j/2}\right)
		\end{equation} and since the final sum is convergent, we may absorb it into the prefactor. Combining with the conclusion of Step 3 and returning to \eqref{eq: UB LDP decomposition}, we have the upper bound asserted in \eqref{eq: perturbative LDP spike}.\end{proof}

\section{Large Spikes}  \label{sec: large LLN} We now turn to the question of how large spikes can become {\em after} they have reached the scale $$M\sim \sqrt{n/(\eta \log(n/\eta))}$$ \findme allowed by Theorems \ref{thm:lln moderate} - \ref{thm: LDP spikes}. In this case, the kernel $\lambda(t)$ can no longer be treated as approximately constant, since it decreases by $\mathcal{O}(1)$ at each time step. The goal is to bound the typical reduction of the curvature due to the spike below, and to show that it happens on a timescale much shorter than the spikes. We begin with the inflationary regime and consider $\lambda_0$ such that $G(\lambda_0)>0$, and define  \begin{equation}
	\label{eq: def lambdastar} \lambda_\star:=\inf\{\zeta<\lambda_0: G(\zeta)=0\}
\end{equation} as the left-most endpoint of the inflationary range containing $\lambda_0$. The theorem is as follows.  \begin{theorem}\label{thrm: LLN large} 
Under the same hypotheses as Theorem \ref{thm:lln moderate}, let $c$ be given by Theorem \ref{thm:lln moderate} and $M:=c\sqrt{n/\eta \log (n/\eta)}$. Define $\lambda_{\rm coll}<\lambda_0$ by \begin{equation}
	\label{eq: def lambda coll} \lambda_{\rm coll}:=\sup\left\{\zeta - \frac{(4-\eta \zeta s_i^2)(\eta \zeta s_i^2 -1)}{\eta s_i^2}(1-e^{-G(\zeta)/2}): \zeta \in [\lambda_\star, \lambda_0]; 1\le i\le m, \eta \zeta s_i^2>1\right\}
\end{equation}  For every $\lambda\in (\lambda_\star, \lambda_0)$, write $\lambda^+:=\max(\lambda, \lambda_{\rm coll})$. Then, there exists $C=C(c,\{(s_i, p_i)\})$ and $\alpha=\alpha(\lambda)>0$ so that  \begin{equation}
	\label{eq: nonperturbative LLN 1} \PP_{\mu_0, \lambda_0}(\TK{\lambda^+}>\TPM+k|\TPM<\infty ) \le  C (\lambda_0-\lambda)\log(n/\eta) e^{-\alpha k}\end{equation} Moreover, for fixed $M_-$, set \begin{equation} \label{eq: nonperturbative LLN 2}
	\tau^{\rm Pred}_{\downarrow, M_-}:=\inf\{t\ge \TPM: |\mu(t)| \le M_-\};
\end{equation} and \begin{equation} \label{eq: quantities in downwards hitting} q_-:=\min(G(\zeta): \zeta \in [\lambda^+, \lambda_0]);\qquad \delta:=\eta \min(|\zeta-(\eta s_i^2)^{-1}|: 1\le i\le m, \zeta\in [\lambda^+,\lambda_0])  \end{equation} which are both strictly positive. Then there exists $\vartheta=\vartheta(q_-,\delta)>0$ such that \begin{equation} \label{eq: dont come down} 
	\mathbb{P}_{\mu_0, \lambda_0}\left(\tau^{\rm Pred}_{\downarrow, M_-} < \TK{\lambda^+}\right) \le (M_-/M)^{\vartheta}.
\end{equation}
\end{theorem}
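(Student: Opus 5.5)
The plan is to work after the strong Markov restart at $\TPM$, so that we start from $|\mu|\ge M$ with $\lambda$ still close to $\lambda_0$. Up to $\TK{\lambda^+}$ we compare $\log|\mu(t)|$ with a random walk whose log-drift stays uniformly positive. For $\zeta\in[\lambda^+,\lambda_0]\subset(\lambda_\star,\lambda_0]$ we have $G(\zeta)\ge q_->0$, by definition of $\lambda_\star$ and compactness. The factor $\delta>0$ keeps every base $|1-\eta\zeta s_i^2|$ bounded away from $0$, so the increments $\log|1-\eta\lambda(t)s_i^2+\eta^2s_i^4\mu(t)^2/n|$ are bounded below uniformly while they remain in a neighbourhood of the linearised factor.

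The delicate point is the nonlinear term once $|\mu|^2\sim n/\eta$. Here I would use the collapse threshold $\lambda_{\rm coll}$. The idea is as follows. If at some step $\eta\mu^2/n$ is so large that the factor $1-\eta\lambda s_i^2+\eta^2s_i^4\mu^2/n$ deviates from $1-\eta\lambda s_i^2$ by more than a factor $e^{-G/2}$, then the single-step kernel decrease $\frac{\eta\mu^2}{n}(4s_i^2-\eta\lambda s_i^4)$ from \eqref{eq: SGD 2} already exceeds $\frac{(4-\eta\zeta s_i^2)(\eta\zeta s_i^2-1)}{\eta s_i^2}(1-e^{-G(\zeta)/2})$. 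In that case $\lambda$ jumps below $\lambda_{\rm coll}\le\lambda^+$ and $\TK{\lambda^+}$ occurs at once.

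Otherwise every step before $\TK{\lambda^+}$ has conditional log-drift at least $G(\lambda(t))/2\ge q_-/2$, uniformly. This is the analogue of Step 4 of Lemma \ref{lemma: new hitting negative}, but under $\PP$ itself and with no tilting needed.

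For \eqref{eq: nonperturbative LLN 1}, I would argue that while $t<\TK{\lambda^+}$ the process $\log|\mu(t)|-tq_-/2$ is a submartingale with bounded increments. Azuma/Cramér (exponential Chebyshev with a small $\theta$) then gives $|\mu(\TPM+t)|\ge Me^{q_-t/4}$, except on an event of probability $e^{-\alpha t}$. On the good event, the kernel decrease over $[\TPM,\TPM+k]$ is at least $a_-\frac{\eta}{n}\sum_t M^2e^{q_-t/2}\gtrsim \frac{e^{q_-k/2}}{\log(n/\eta)}$. This exceeds $\lambda_0-\lambda$ once $k\ge \frac{2}{q_-}\log\big((\lambda_0-\lambda)\log(n/\eta)\big)+O(1)$. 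Summing the failure probability over the $k$ steps and rewriting $e^{-\alpha(k-k_0)}$ with $e^{\alpha k_0}\asymp(\lambda_0-\lambda)\log(n/\eta)$, after adjusting $\alpha$, yields the bound $C(\lambda_0-\lambda)\log(n/\eta)e^{-\alpha k}$.

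For \eqref{eq: dont come down} I would mirror Step 1–2 of Lemma \ref{lemma: new hitting negative} with the opposite sign. Since the drift is positive and the bases lie in a compact set bounded away from $0$ (by $\delta$) and above by $4$, there is $\vartheta=\vartheta(q_-,\delta)>0$ with $\sum_ip_i b_i'^{-\vartheta}\le1$ for all admissible perturbed bases $b_i'$. This is the negative-exponent Cramér root; it exists because the derivative at $0$ is $-G\le -q_-<0$, and it can be chosen uniformly. Then $|\mu(t)|^{-\vartheta}$, stopped at $\TK{\lambda^+}$, is a supermartingale. Optional stopping at $\tau^{\rm Pred}_{\downarrow,M_-}\wedge\TK{\lambda^+}$ gives $M_-^{-\vartheta}\,\PP(\tau^{\rm Pred}_{\downarrow,M_-}<\TK{\lambda^+})\le M^{-\vartheta}$.

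The main obstacle is the collapse step. One must verify that, whenever the nonlinear correction spoils the uniform lower bound on the multiplicative factor, the same step forces $\lambda$ below $\lambda_{\rm coll}$. This requires a careful case split over $i$ with $\eta\zeta s_i^2>1$ versus $\le1$, since only the former can flip sign and collapse. I would also need to check that the uniformity of $\vartheta$ and $\alpha$ over $\zeta\in[\lambda^+,\lambda_0]$ survives the perturbations.
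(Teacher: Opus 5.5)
Your architecture matches the paper's: collapse indices and $\lambda_{\rm coll}$, a log-drift of at least $G(\lambda(t))/2\ge q_-/2$ from the non-collapse indices, a negative-power supermartingale with exponent $\vartheta(q_-,\delta)$ for \eqref{eq: dont come down}, and growth of $|\mu|$ forcing one large kernel step. However, the martingale statements you actually make are false. The missing ingredient is the paper's comparison process.

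\textbf{Where the martingale claims fail.} You run the arguments on $|\mu|$ itself, stopped at $\TK{\lambda^+}$. At any $t<\TK{\lambda^+}$, the conditional law of the next step still puts mass $\sum_{i\in\cA(t)}p_i$ on collapse indices. For those indices the factor $|1-\eta\lambda(t)s_i^2+\eta^2s_i^4\mu(t)^2/n|$ can be arbitrarily small, or exactly $0$. The constant $\delta$ only keeps the \emph{linear} factors $|1-\eta\zeta s_i^2|$ away from $0$, so your claim that the bases lie in a compact set away from $0$ fails exactly there. The collapse step is the step at which $\TK{\lambda^+}$ occurs, so the stopped process keeps the collapsed value. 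Consequently:
\begin{itemize}
\item $\log|\mu(t)|-tq_-/2$ is not a submartingale with bounded increments, since its conditional drift can be $-\infty$;
\item $|\mu(t\wedge\TK{\lambda^+})|^{-\vartheta}$ is not a supermartingale.
\end{itemize}
You flag as the main obstacle the deterministic implication that a collapse factor forces $\lambda$ below $\lambda_{\rm coll}$. Checking it does not repair this. That implication controls the realised path, whereas your drift computations are conditional expectations over all indices, including those in $\cA(t)$.

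\textbf{The paper's fix.} The paper decouples these two things.
\begin{itemize}
\item For $i\notin\cA(t)$ it takes $\overline b_i(t)=\min\bigl(|1-\eta\lambda(t)s_i^2|,\,|1-\eta\lambda(t)s_i^2+\eta^2s_i^4\mu(t)^2/n|\bigr)$.
\item For $i\in\cA(t)$ it takes the linear factor.
\item It sets $\widetilde\mu(t)=|\mu(\TPM)|\prod_{\TPM<s\le t}e^{-q_-/4}\overline b_{i(s)}(s)$.
\end{itemize}
This product has bases bounded above and away from $0$ uniformly in terms of $\delta$, and log-drift at least $q_-/4$ at \emph{every} time. Moreover, $|\mu(t)|\ge\widetilde\mu(t)e^{q_-(t-\TPM)/4}$ holds up to $T=\min(\tau_{\rm coll},\TK{\lambda^+},\tau^{\rm Pred}_{M^+}+1)$, because before $T$ no collapse index has been sampled. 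All martingale estimates are then run on $\widetilde\mu$. On $\widetilde\mu$, your Azuma or exponential-Chebyshev argument is a valid, essentially equivalent, substitute for the paper's downward hitting bound.

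For \eqref{eq: dont come down} alone, you could instead kill the process and use $|\mu(t)|^{-\vartheta}\indiq(t<\TK{\lambda^+})$. A collapse step ends the run at $\TK{\lambda^+}$, so the indicator removes exactly the bad contributions.

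\textbf{A smaller point on the first bound.} Use the single-time estimate at time $\TPM+k-1$. One step with $|\mu|\ge M^+=\sqrt{n(\lambda_0-\lambda)/(\eta a_-)}$ already drives the kernel below the target level $\lambda$. Summing failure probabilities over all $t\le k$ instead gives an $O(1)$ bound that does not decay in $k$.
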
 \begin{remark}\label{rmk: why LLN large formulated} The display \eqref{eq: nonperturbative LLN 1} justifies the claim in Theorem \ref{thrm: informal} that the kernel reduction occurs in a time $\log((\lambda-\lambda_0)\log(n/\eta))+\mathcal{O}_{\mathbb{P}}(1)$. In the case either where $\lambda\approx \lambda_\star$ is close to the endpoint, or where $\delta\ll 1$ and the interval is close to a singularity, the exponent $\vartheta(q_-, \delta)$ may be very small due to some indexes with $|1-\eta \lambda s_i^2|\ll 1$, sampled with a low probability.  \end{remark} 

Before moving to the proof, we first explain the formulation, which corresponds to the key difficulty in the proof. We consider the behaviour of the process after $\TPM$, for $M=c\sqrt{n/(\eta \log(n/\eta))}$. Once $|\mu(t)|$ reaches the scale $\sim \sqrt{n/\eta}$, the dynamics \eqref{eq: SGD 2} show that the kernel $\lambda(t)$ falls by order $\mathcal{O}(1)$ at each time step. On the other hand, if the distance to $\lambda_\star<\lambda_0$ given by \eqref{eq: def lambdastar} is of the same scale $\mathcal{O}(1/\eta)$ as $\lambda$ itself, the prediction $\mu$ will continue to increase in absolute value with the kernel still above $\lambda_\star$. Once $|\mu|$ reaches the slightly larger scale $\mathcal{O}(\sqrt{n}/\eta)$, two different effects become important. The first is that every step decreases $\lambda(t)$ by $\mathcal{O}(1/\eta)$, comparable to $\lambda_0$. Secondly, the neglected quadratic term in the prediction updates in \eqref{eq: SGD 2} also becomes important. For indexes where $1-\lambda \eta s_i^2<0$, the quadratic increase, which we previously neglected in \eqref{eq: find the RW approximately} can lead to \begin{equation} \label{eq: collapse motivation} \left|1-\eta \lambda s_i^2 +\frac{\eta^2 s_i^4\mu(t)^2}{n}\right| \ll |1-\eta \lambda s_i^2|.\end{equation} Indeed, if this factor is $\ll 1$, it is possible for a single sample to terminate the spike, without necessarily reducing $\lambda(t)$ beyond $\lambda_\star$. We term this phenomenon {\em spike collapse}; an explicit configuration is given below. In the case where the spike is ended but remains in the inflationary regime, we can apply Theorem \ref{thm:lln moderate} to see that the process begins an additional spike and reduces the kernel further.    \paragraph{Example: Spike Collapse within the Inflationary Regime} \label{ex: spike collapse} Consider a one-point data set $s=100, p=1$ with hyperparameters $n=10^8, \eta=10^{-2}$, initialised at $\mu_0=0.682, \lambda_0=0.039$. In this case, there is no stochasticity, and the dynamics are simply the gradient descent. The inflationary regime is simply the catapult regime in \cite{lewkowycz2020large,zhu2022quadratic}, producing large spikes before stability when $\lambda\in (\lambda^{\rm MB}_{\rm crit}, \lambda^{\rm MB}_{\rm max})=(\frac1{50}, \frac1{25})$. Iterating the (now deterministic) relations \eqref{eq: SGD 2} produces a large spike at $\mu(6)=159.39, \lambda(6)=3.66 \times 10^{-2}$ before falling to $\mu(7)=-20.31$. However, this spike only reduces the kernel to $\lambda(7)=2.82 \times 10^{-2} > \lambda^{\rm MB}_{\rm crit}$, remaining in the inflationary regime. Another spike begins, reaching $\mu(10)=75.10$, before the curvature falls to $\lambda(10)=2.1\times 10^{-2}<\lambda^{\rm MB}_{\rm crit}$.  \\  We now give the
\begin{proof}[Proof of Theorem \ref{thrm: LLN large}] We first explain the strategy. The key observation is that, at any one time step, not all indexes can cause a spike collapse. We therefore define a time-dependent set of indexes $\cA(t)\subset\{1,2,\dots,m\}$ where an inequality like \eqref{eq: collapse motivation} holds. We introduce a stopping time $\tau_{\rm coll}$, which is the first time that such an index is sampled, and see that sampling such an index immediately drives the kernel below $\lambda_{\rm coll}\le \lambda^+$. These observations motivate us to define an auxiliary process $\widetilde{\mu}$ by modifying the process only when $i(t)\in \mathcal{A}(t)$, constructed in such a way that the log-drift remains positive, and that $\widetilde{\mu}$ provides a control on $\mu$ from below until $\tau_{\rm coll}$ occur, the kernel falls below $\lambda^+$, or an upper threshold $M^+$ is reached. The positive log-drift forces $\widetilde{\mu}$ to hit the upper threshold $M^+$ with high probability in a short time window, and this is sufficient to prove the probable decrease \eqref{eq: nonperturbative LLN 1}. Finally, optional stopping arguments show that $\widetilde{\mu}$ is unlikely to ever become small, which allows us to prove the claim \eqref{eq: dont come down}.  \step{1. Excluded Indexes and Collapse Time $\tau_{\rm coll}$} For $\mu\in \RR, \zeta\in [\lambda_\star, \lambda_0]$, we define a set $\cA(\mu, \zeta)$ of those datapoints close to collapse by \begin{equation} \label{eq: def excluded indexes} \cA(\mu, \zeta):=\left\{i: 1-\zeta \eta s_i^2<0, \frac{\mu^2 \eta^2 s_i^4}{n} \ge (1-e^{-G(\zeta)/2})(\eta \zeta s_i^2-1)\right\} \end{equation} and write, for shorthand, $\cA(t):=\cA(\mu(t), \lambda(t))$. Define further a stopping time $\tau_{\rm coll}$ for the first time a sample close to collapse is drawn: \begin{equation} \label{eq: def tcoll} \tau_{\rm coll}:=\inf\left\{t\in [\TPM, \TK{\lambda_\star}): i(t)\in \cA(t-1)\right\}. \end{equation} On the event $\{\tau_{\rm coll}<\infty\}$, we have $\lambda(\tau_{\rm coll}-1)\in [\lambda_\star, \lambda_0]$, and from \eqref{eq: SGD 2} and the definition of $\cA(t)$, the next update produces\begin{equation} \begin{split}
	\lambda(\tau_{\rm coll})& =\lambda(\tau_{\rm coll}-1)-\frac{\mu(\tau_{\rm coll}-1)^2 \eta}{n} s_{i(\tau_{\rm coll})}^2(4-\eta \lambda(\tau_{\rm coll}-1)s_{i(\tau_{\rm coll})}^2) \\[1ex] & \hspace{-1cm} \le \lambda(\tau_{\rm coll}-1)-\frac{(\eta \lambda(\tau_{\rm coll}-1)s_{i(\tau_{\rm coll})}^2-1)(4-\eta \lambda(\tau_{\rm coll}-1)s_{i(\tau_{\rm coll})}^2)}{\eta s_{i(\tau_{\rm coll})}^2}(1-e^{-G(\lambda(\tau_{\rm coll}-1)}) \\ & \le \lambda_{\rm coll} \end{split}
\end{equation} where the second line follows from the definition of $\cA(\mu, \zeta)$, and the final line follows because $\lambda(\tau_{\rm coll}-1)\in [\lambda_\star, \lambda_0]$ and using the definition of $\lambda_{\rm coll}$. Consequently, almost surely, \begin{equation}
	\label{eq: win by collapse} \TK{\lambda^+}\le \TK{\lambda_{\rm coll}} \le \tau_{\rm coll}.
\end{equation} \step{2. Coupling with Excluded Indexes.} Rather than the direct construction of supermartingales in the same way as in Lemma \ref{lemma: new hitting negative}, we first introduce a coupled process $\widetilde{\mu}(t)$, which is stochastically dominated by $\mu$ up to a stopping time $T$, and from which we can construct supermartingales. First, set an upper threshold \begin{equation} \label{eq: win by upper threshold}
	M^+:=\sqrt{\frac{n(\lambda_0-\lambda_\star)}{\eta a_-}}
\end{equation} for $a_-$ as defined in \eqref{eq: decrease in curvature}, and define the stopping time \begin{equation}
	\label{eq: end of big coupling} T:=\min\left(\tau_{\rm coll}, \TK{\lambda^+}, \tau^{\rm Pred}_{M^+}+1\right).
\end{equation}  For $q_-$ as defined in \eqref{eq: quantities in downwards hitting} we set, for each $t\ge \TPM$ and each index $i$, set \begin{equation} \label{eq: def bitilde}
	\overline{b}_i(t):=\begin{cases}
		\min\left(\left|1-\eta \lambda(t) s_i^2\right|,\left|1-\eta \lambda(t) s_i^2 + \frac{\mu(t)^2\eta^2 s_i^4}{n}\right|\right), & i\not \in \cA(t); \\ \left|1-\eta \lambda(t) s_i^2\right|, & i\in \cA(t).
	\end{cases}
\end{equation} and \begin{equation} \label{eq: def bibar} \widetilde{b}_i(t):=e^{-q_-/4}\overline{b}_i(t\land(T-1)).\end{equation} Recalling the Definition \eqref{eq: def excluded indexes} of $\cA(t)$, for every index $i$ it holds that \begin{equation}
	e^{-G(\lambda(t))/2} |1-\eta \lambda(t)s_i^2|\le \widetilde{b}_i(t)\le   |1-\eta \lambda(t)s_i^2|
\end{equation} which implies a uniform log-drift condition \begin{equation} \label{eq: lower bound moment nu} \sum_{i=1}^m p_i \log \widetilde{b}_i \ge \frac12 G(\lambda(t\land (T-1)))-\frac{q_-}4 \ge \frac12 \inf_{\zeta \in [\lambda, \lambda_0]} G(\zeta)-\frac{q_-}{4}=:\frac{q_-}4>0. \end{equation} Further, by the definition of $\delta$ in \eqref{eq: quantities in downwards hitting} and considering separately the cases $s_i^2\ge (1+\delta \eta \lambda_0)^{-1}$, $s_i^2\le (1+\delta \eta \lambda_0)^{-1}$, we find a bound, uniformly in $i, \zeta \in [\lambda^+, \lambda_0]$,\begin{equation}\label{eq: base lower bound 1} |1-\eta \zeta s_i^2| \ge \frac{\delta}{1+\delta \eta \lambda_0}=: c_1(\delta) \end{equation} which, together with the standing hypothesis Assumption \ref{hyp: assumption linear model}, implies almost sure uniform upper and lower bounds \begin{equation}\label{eq: upper lower btilde}
	c_1(\delta) \le \widetilde{b}_i(t) \le 3; \qquad 1\le i\le m, \qquad  t\ge \TPM. 
\end{equation} We now define a process $\widetilde{\mu}(t), t\ge \TPM$, by setting \begin{equation}
	\label{eq: def Xbar} \widetilde{\mu}(t):=|\mu(\TPM)|\prod_{\TPM< s\le t} \widetilde{b}_{i(s)}(s).
\end{equation}The Definition \eqref{eq: def bitilde} and the update rule \eqref{eq: SGD 2} together imply the bound \begin{equation} \label{eq: coupling 2}
|\mu(t)|\ge \widetilde{\mu}(t)e^{q_-(t-\TPM)/4}
\end{equation} valid for all $\TPM\le t< T$ with probability 1. \step{3. Supermartingales constructed from $\widetilde{\mu}$} We now show that there exists $\vartheta$, with {\em only} the dependencies advertised in the theorem, such that $\widetilde{\mu}^{-\vartheta}$ is a supermartingale, in order to later use optional stopping arguments similar to \eqref{eq: opt stop UB}. Following the same logic as (\ref{eq: submartingale case 1} -  \ref{eq: submartingale case 2}), it is sufficient to find $\vartheta>0$ such that the (random, time-dependent) function \begin{equation} \label{eq: def F}
	F(t,\theta):= \EE_{\mu_0, \lambda_0}[\widetilde{b}_{i(t+1)}(t)^{-\theta}|\cF_t, \TPM\le t]
\end{equation} satisfies $F(t,\vartheta)\le 1$ almost surely for all $t\ge \TPM$. The claim then follows from the Definition \eqref{eq: def Xbar}. We note that $\tilde{b}_i(t)$ are all $\cF_t$-measurable because $T$ is a stopping time, which also ensures the adaptedness of $\widetilde{\mu}(t)^{-\theta}$ for any $\theta\in \RR$. In the remainder of this step, all statements are almost sure, simultaneously in all $t\ge \TPM$.\\ \\ By definition, $F(t,0)=1$, and due to the upper and lower bounds \eqref{eq: upper lower btilde} imply that $F(t,\cdot)$ is smooth in $\theta\in \RR$. Differentiating under the integral sign, the first derivative at zero is \begin{equation} \label{eq: first derivative bound}
\left.	\frac{d}{d\theta}\right|_{\theta=0} F(t,\theta)=-\sum_{i=1}^m p_i \log \widetilde{b}_i(t) \le -\frac{q_-}{4}
\end{equation} thanks to \eqref{eq: lower bound moment nu}, while the second derivative is bounded above on $[0,1]$ by \begin{equation}
	\label{eq: second derivative bound} \frac{d^2}{d\theta^2} F(t,\theta)=\sum_{i=1}^m p_i(\log \widetilde{b}_i)^2 \widetilde{b}_i^{-\theta}\le c_1(\delta)^{-1} \max(|\log c_1(\delta)|, 3)^2
\end{equation} for $c_1(\delta)$ given by \eqref{eq: base lower bound 1}. Consequently, we obtain $F(t,\vartheta)\le 1$ for $$ \vartheta(q_-,\delta)=\min\left(1, \frac{q_-c_1(\delta)}{4\max(|\log c_1(\delta)|,3)^2 }\right).$$ \step{4. Hitting estimates via Supermartingales} We now use the supermartingales constructed in the previous step to prove, for all $0<h\le M$, \begin{equation}
	\label{eq: hitting downwards} \PP_{\mu_0, \lambda_0}\left(\inf_{t\ge \TPM} \widetilde{\mu}(t)\le h\hspace{0.1cm}\bigg|\hspace{0.1cm}\TPM<\infty\right) \le \left(\frac{h}{M}\right)^\vartheta
\end{equation} where the exponent $\vartheta$ is given by Step 3. This follows at once by applying optional stopping to the non-negative supermartingale $\widetilde{\mu}(t)^{-\vartheta}$ and the stopping time $T_h:=\inf\{t\ge \TPM: \widetilde{\mu}(t)\le h\}$, and arguing as in \eqref{eq: opt stop UB}.  \step{5. High-Probability Estimate for $\TK{\lambda^+}$.} We now deduce the estimate \eqref{eq: nonperturbative LLN 1} for the time to achieve the kernel decrease from the coupling in Step 2 and the hitting probability in Step 4. From the constructions \eqref{eq: win by collapse} and \eqref{eq: win by upper threshold}, it follows that \begin{equation}
	\label{eq: coupling only ends one way} \TK{\lambda^+}\le 1+\min(\tau^{\rm Pred}_{M^+}, \tau_{\rm coll}).
\end{equation}As a result, by considering separately the cases where the coupling \eqref{eq: coupling 2} remains intact and has failed, it follows that \begin{equation} \label{eq: key coupling fact} \TK{\lambda^+} \le 1+ \inf\left\{t\ge \TPM: |\widetilde{\mu}(t)|e^{q_-(t-\TPM)/4}\ge M^+\right\}. \end{equation} Consequently, for any $k\ge 0$, \eqref{eq: hitting downwards} shows that, for the exponent $\vartheta=\vartheta(q_-, \delta)>0$ produced by Step 3, \begin{equation}\begin{split} \PP\left(\TK{\lambda^+}> \TPM +k +1  |\TPM<\infty\right) &\le \mathbb{P}\left(\widetilde{\mu}(\TPM+k) \le M_+ e^{-q_- k/4}|\TPM<\infty \right) \\& \le e^{-q_- k\vartheta/4}\left(\frac{M_+}{M}\right)^{\vartheta} \\& = e^{-q_- k \vartheta/4} \left(\frac{(\lambda_0-\lambda_\star)\log (n/\eta)}{ c a_-}\right)^\vartheta. \end{split} \end{equation} The claim \eqref{eq: nonperturbative LLN 1} follows by absorbing the data-dependent constants, using that $\vartheta\le 1$ for the prefactors, and shifting $q$. \medskip   \step{6. No Descent to $M_-$ without kernel decrease.} We finally prove \eqref{eq: dont come down}. As a result of \eqref{eq: end of big coupling} and \eqref{eq: coupling only ends one way}, it follows that, for all $t<\TK{\lambda^+}$, the coupling \eqref{eq: coupling 2} remains intact, which implies the containment of events \begin{equation}
	\label{eq: dont come down 2} \left\{\tau^{\rm Pred}_{\downarrow, M_-}<\TK{\lambda^+}\right\} \subset \left\{\inf_{t\ge \TPM} \widetilde{\mu}(t)\le M_-\right\}.
\end{equation} The claimed bound now follows immediately from \eqref{eq: hitting downwards}, up to changing $\vartheta$ to $\vartheta/4$.   \end{proof}

We finally turn to the deflationary case. \begin{theorem} \label{thrm: LDP large spikes} Let $\lambda^{\rm MB}_{\rm crit}<\lambda_0<\lambda^{\rm MB}_{\rm max}$ and, for $\delta>0$, let \begin{equation}
	\label{eq: Idelta} I_\delta:=\{i: |1-\eta \lambda_0 s_i^2|>1+\delta\};\qquad p_\delta:=\sum_{i\in I_\delta}p_i.
\end{equation} Then, for any $\delta>0$, for all $\lambda$ in the range \begin{equation}\label{eq: range of lambda}
	\lambda_0-\delta \lambda^{\rm MB}_{\rm max} \min\left(\frac14, \frac{a_-}{a_+}\right)\le \lambda\le \lambda_0
\end{equation} and any $M\ge |\mu_0|$, $M_-<M$, it holds that \begin{equation}\label{eq: final LDP LB large spikes}
	\PP_{\mu_0,\lambda_0}\left(\TK{\lambda}\le \tau^{\rm Pred}_{\downarrow, M_-}|\TPM<\infty\right)\ge p_{3\delta}\left(\frac{n(\lambda_0-\lambda)}{\eta a_- M^2}\right)^{-\alpha(\delta)} \end{equation} where $\tau^{\rm Pred}_{\downarrow, M_-}$ is as in \eqref{eq: nonperturbative LLN 2}, and the exponent is given explicitly by \begin{equation} \label{eq: thetadelta} \alpha(\delta):=-\frac{\log p_{3\delta}}{2\log(1+\delta)}.
\end{equation} \begin{remark}
	Together with Theorem \ref{thrm: LDP large spikes} and the strong Markov property, this justifies the claim \eqref{eq: large spike informal}. We have not attempted to optimise the proof strategy or the exponent achieved, preferring to give a final bound only in terms of quantities which can be computed and interpreted directly from $\lambda_0, \eta, \{(s_i, p_i)\}$. Moreover, in combining \eqref{eq: final LDP LB large spikes} and Theorem \ref{thrm: LDP large spikes}, we may choose the largest scale \eqref{eq: restrict to scales} allowed by Theorem \ref{thm: LDP spikes}: $$M\sim \frac{1}{\log^{1/\beta}(n/\eta)}\sqrt{\frac{n}{\eta}}.$$  With this choice, the sup-optimal exponent $\alpha$ only enters the $n, \eta$ dependence through a logarithm $\log^\alpha(n/\eta)$. \end{remark}	
\end{theorem}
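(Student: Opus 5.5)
The idea is to exhibit one explicit, not-too-unlikely path after $\TPM$. Along it, the indexes in $I_{3\delta}$ are sampled repeatedly. Each such step multiplies $|\mu|$ by at least $1+\delta$, and the steps continue until $|\mu|^2$ reaches the scale $n(\lambda_0-\lambda)/(\eta a_-)$. At that scale a single further step, or the accumulated steps, must push the kernel below $\lambda$ by \eqref{eq: decrease in curvature}, and $|\mu|$ never drops below $M_-$ along the way. By the strong Markov property at $\TPM$ we may restart the chain at $(\mu(\TPM),\lambda(\TPM))$ with $|\mu(\TPM)|\ge M$ and $\lambda(\TPM)\le\lambda_0$. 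If already $\lambda(\TPM)<\lambda$, then $\TK{\lambda}\le\TPM<\tau^{\rm Pred}_{\downarrow,M_-}$ and there is nothing to prove. So assume $\lambda(\TPM)\in[\lambda,\lambda_0]$.

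First I check the multiplicative factor. For $i\in I_{3\delta}$, the unperturbed factor satisfies $|1-\eta\lambda_0 s_i^2|>1+3\delta$. Since $\lambda<\lambda_{\rm max}^{\rm MB}$, a factor exceeding $1$ forces $1-\eta\zeta s_i^2<-1$. Hence the quadratic term $\eta^2s_i^4\mu^2/n$ only \emph{reduces} the modulus, and this is the delicate point. For $\zeta\in[\lambda,\lambda_0]$, the range \eqref{eq: range of lambda} gives $\eta s_i^2(\lambda_0-\zeta)\le \eta s_\star^2\delta\lambda^{\rm MB}_{\rm max}/4=\delta$. Thus the linear part stays above $1+2\delta$ in modulus. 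Along the path, as long as $\lambda(t)\ge\lambda$, the sum $\frac{\eta}{n}\sum|\mu(s)|^2$ is at most $(\lambda_0-\lambda)/a_-$. In particular each individual $\eta\mu(t)^2/n\le(\lambda_0-\lambda)/a_-\le \delta\lambda^{\rm MB}_{\rm max}/a_+$. Therefore $\eta^2s_i^4\mu^2/n\le \eta s_\star^2 \cdot s_\star^2\delta\lambda^{\rm MB}_{\rm max}/a_+\le\delta$, using $a_+=4s_\star^2$, and this is where the factor $a_-/a_+$ in \eqref{eq: range of lambda} is used. The full factor is therefore at least $1+\delta$. This shows that, as long as only indexes in $I_{3\delta}$ are drawn and $\lambda(t)\ge\lambda$, we have $|\mu(t+1)|\ge(1+\delta)|\mu(t)|$. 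In particular $|\mu|$ is increasing, so $\tau^{\rm Pred}_{\downarrow,M_-}$ cannot occur before $\TK{\lambda}$ on this event.

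Next, let $K$ be the number of such consecutive draws needed. Set $R:=n(\lambda_0-\lambda)/(\eta a_-M^2)$. If $|\mu(t)|^2\ge n(\lambda_0-\lambda)/(\eta a_-)$ while $\lambda(t)\ge\lambda$, then by \eqref{eq: decrease in curvature} the next step gives $\lambda(t+1)\le\lambda(t)-(\lambda_0-\lambda)\le\lambda$. The case of equality at $\lambda$ is a boundary matter: I will use strictness from $|1-\eta\lambda_0 s_i^2|>1+3\delta$, or shrink slightly. Starting from $|\mu(\TPM)|\ge M$, after $j$ draws from $I_{3\delta}$ we have $|\mu|\ge M(1+\delta)^j$. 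It therefore suffices to take $K=\lceil \log R/(2\log(1+\delta))\rceil$ such draws plus one more to realise the kernel drop. That final step may use any index: we only need $\TK{\lambda}$ before $|\mu|$ decreases below $M_-$, and $\TK{\lambda}$ occurs at that step itself. By independence of the draws, the probability of this path is at least $p_{3\delta}^{K}\ge p_{3\delta}\,p_{3\delta}^{\log R/(2\log(1+\delta))}=p_{3\delta}R^{-\alpha(\delta)}$, which is exactly \eqref{eq: final LDP LB large spikes}. If $R\le1$ the bound reduces to $K=0$ after the final step, and probability $1\ge p_{3\delta}$ suffices.

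The main obstacle is the first step. We must control the destructive quadratic term uniformly along the path, so that the factor genuinely stays above $1+\delta$ up to the moment the kernel has dropped. This requires bounding $\eta\mu^2/n$ by the remaining kernel budget before the final step, using that any larger $\mu$ would already have triggered $\TK{\lambda}$. We must also check the sign conventions ensuring $1-\eta\zeta s_i^2<-1$ on $I_{3\delta}$. Bookkeeping the off-by-one in $K$ and the ceiling is routine.
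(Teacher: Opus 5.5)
Your proposal is correct and is essentially the paper's proof. You condition on $\TPM<\infty$ and force $k=\lceil \log(M^+/M)/\log(1+\delta)\rceil$ consecutive draws from $I_{3\delta}$, where $M^+=\sqrt{n(\lambda_0-\lambda)/(\eta a_-)}$. You bound the multiplicative factor below by $1+\delta$ using exactly the two estimates supplied by \eqref{eq: range of lambda}. You then use \eqref{eq: decrease in curvature} to see that reaching $M^+$ forces $\TK{\lambda}$ at the next step, which yields $p_{3\delta}^k\ge p_{3\delta}R^{-\alpha(\delta)}$. Your explicit unconstrained final step handles the off-by-one more cleanly than the paper's assertion $\TK{\lambda}<\tau^{\rm Pred}_{M^+}$.

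The only slip is your aside on $R\le 1$, with $R=n(\lambda_0-\lambda)/(\eta a_- M^2)$. There the target is $p_{3\delta}R^{-\alpha(\delta)}\ge p_{3\delta}$, so ``$1\ge p_{3\delta}$ suffices'' is not the right check. Indeed, the right-hand side exceeds $1$ once $R<(1+\delta)^{-2}$ and $p_{3\delta}<1$, so the statement is false in that regime. The statement, like the paper's proof (which needs $k\ge 0$), should therefore be read with $M\le M^+$, i.e.\ $R\ge 1$.
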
 \begin{proof} Fix $\delta>0$ and let $p_\delta, I_\delta, \lambda$ be as given. Consider, for $k$ to be chosen later, the event \begin{equation}
	A_{k, 3\delta}:=\left\{\TPM<\infty; i(\TPM+1),\dots, i(\TPM+k)\in I_{3\delta}\right\}
\end{equation} which has probability, conditional on $\{\TPM<\infty\}$, equal to $p_{3\delta}^k$. We now set an upper threshold as in \eqref{eq: win by upper threshold} \begin{equation}
	M^+:=\sqrt{\frac{n(\lambda_0-\lambda)}{a_-\eta}}.
\end{equation}  On the event $A_{k, 3\delta}$, for $\TPM+1\le t< \min(\TPM+k+1, \TK{\lambda},\tau^{\rm Pred}_{M^+})$, we estimate \begin{equation} \eta(\lambda_0-\lambda)s_{i(t)}^2 \le \frac{\delta \lambda^{\rm MB}_{\rm max}}{4} (\eta s_\star^2)=\delta \end{equation} by the definition of $\lambda^{\rm MB}_{\max}$ and the first part of \eqref{eq: range of lambda}, and similarly \begin{equation}
	\frac{\eta^2 s_{i(t)}^4 \mu(t)^2}{n} \le \frac{(M^+)^2 \eta^2 s_\star^4}{n} = \frac{\eta s_\star^4(\lambda_0-\lambda)}{a_-} \le \delta
\end{equation} using the second part of the lower bound in \eqref{eq: range of lambda}, and the definitions of $\lambda^{\rm MB}_{\rm max}, a_\pm$. Gathering the previous two displays and using the definition of $I_{3\delta}$, for the same range of $t$ we have \begin{equation}
	\label{eq: bound multiplicative factor below} \begin{split} & \left|1-\eta \lambda(t)s_{i(t)}^2 + \frac{\eta^2 s_{i(t)}^4 \mu(t)^2}{n}\right| \\ & \hspace{2cm} \ge \left|1-\eta \lambda_0 s_{i(t)}^2\right| - |\eta (\lambda_0 - \lambda) s_{i(t)}^2| - \frac{\eta^2 s_{i(t)}^4 \mu(t)^2}{n}  \\ &  \hspace{2cm} \ge (1+3\delta)-\delta - \delta = 1+\delta. \end{split}
\end{equation} Choosing now $k:=\lceil \log(M^+/M) / \log(1+\delta)\rceil$, we see that on the event $A_{k, 3\delta}$, either $\tau^{\rm Pred}_{M_+}$ or $\TK{\lambda}$ must occur no later than $\TPM+k$, and that $|\mu(t)|$ is increasing until the first of these occurs. Arguing as in \eqref{eq: win by upper threshold}, it holds almost surely that $\TK{\lambda}<\tau^{\rm Pred}_{M^+}$ by definition of $M^+$, and so in either case we achieve $\TK{\lambda}\le \tau^{\rm Pred}_{\downarrow, M_-}$ for any $M_-<M$. With this choice of $k$, we use $$ k\le 1+\frac{\log(M^+/M)}{\log(1+\delta)}$$ to find a bound \begin{equation}\begin{split}
	\label{eq: tada} \PP_{\mu_0, \lambda_0}(A_{k, 3\delta}|\TPM<\infty) & \ge p_{3\delta} \exp\left(\log p_{3\delta} \frac{\log(M^+/M)}{\log(1+\delta)}\right)  = p_{3\delta} \left(\frac{M_+}{M}\right)^{-2\alpha(\delta)}\end{split} 
\end{equation} for $\alpha(\delta)$ given by \eqref{eq: thetadelta}. Substituting in the definition of $M^+$ produces the claim \eqref{eq: final LDP LB large spikes}, and the proof is complete. 
\end{proof}

\section{Extension to ReLU Activation} \label{sec: RELU} Finally, we show how all of the arguments of the rest of the paper may be modified if the activation function $\varphi$ of the first layer in \eqref{eq: toy model} is taken as the rectified linear unit $\varphi(w)=\sigma(w)=\max(0,w)$, giving the network \begin{equation}
		F(\Theta, s):=\frac{1}{\sqrt{n}} \sum_{r=1}^n a_r\sigma(w_r s).
	\end{equation} We first note that there are now two sets of relevant predictions and kernels, given by \begin{equation}\label{eq: relu pred and kernel}
	\mu^{\pm}(\Theta)=F(\Theta, \pm 1); \qquad \lambda^{\pm}(\Theta)=\frac1n\sum_{r=1}^n (\sigma( \pm w_r)^2 + \indiq(\pm w_r>0) a_r^2). 
\end{equation} The corresponding NTK is \begin{equation}\label{eq: relu NTK} K(\Theta;x,y)=\lambda^+(\Theta)\indiq(x>0, y>0)+\lambda^-(\Theta)\indiq(x<0, y<0). \end{equation} Here, and in the sequel, we follow the convention \cite{du2018gradient} that the derivative of the ReLU function $\sigma'(w)$ refers to any element of the subdifferential, which allows us to make an arbitrary choice at $w=0$. Repeating the calculations leading to \eqref{eq: SGD 2}, we now find that, on time steps where $s_{i(t+1)}>0$, \begin{equation} \label{eq: SGD Relu update}
		\begin{cases}
			w_r(t+1)=w_r(t)-\frac{\eta}{\sqrt{n}}\mu^+(t)a_r(t)\indiq(w_r(t)>0)s_{i(t+1)}^2; \\[2ex] a_r({t+1})=a_r(t)-\frac{\eta}{\sqrt{n}}\mu^+(t)\sigma(w_r(t))s_{i(t+1)}^2.
		\end{cases}
	\end{equation} Correspondingly, still for steps where $s_{i(t+1)}>0$, the prediction and kernels are updated to  \begin{equation} \label{eq: update ReLU model prediction}
		\begin{split} & \mu^+({t+1})=\left(1-2\eta \lambda^+(t) s_{i(t+1)}^2 + \delta^{\rm Pred}(t+1) + \frac{\eta^2 (\mu^+(t))^2 s_{i(t+1)}^4}{n}  \right) \mu^+(t); \\ & \lambda^+({t+1}) = \lambda^+(t) +\frac{(\mu^+(t))^2\eta}{n}(\eta \lambda(t)s_{i(t+1)}^4 - 4s_{i(t+1)}^2) -\delta^{\rm Ker}(t+1). \end{split}
	\end{equation} Relative to \eqref{eq: SGD 2}, error terms $\delta^{\{{\rm Pred, Ker}\}}$ appear in the updates to the prediction and kernel respectively, which result from the contribution of neurons $w_r$ where $w_r(t+1)<0<w_r(t)$:\begin{equation}\begin{split}
			\label{eq: error update prediction} \hspace{-1cm}\delta^{\rm Pred}(t):=\frac{\mu^+(t)^{-1}}{\sqrt{n}} &\sum_{r=1}^n \left(\sigma(w_r(t+1))-\sigma(w_r(t))-\indiq(w_r(t)>0)(w_r(t+1)-w_r(t)\right) a_r(t+1); \end{split}
		\end{equation} \begin{equation}
			\label{eq: kernel update prediction} \delta^{\rm Ker}(t)=\frac1n\sum_{r=1}^n[(a_r(t+1))^2+(w_r(t+1))^2]\indiq(w_r(t)>0\ge w_r(t+1)) \ge 0.
		\end{equation}At the same time, the change in the activation pattern produces a change in $\mu^-, \lambda^-$ by \begin{equation}\begin{split}
			\label{eq: SGD relu update other} & \mu^-(t+1)=\mu^-(t)-\delta^{\rm Pred}(t+1)\mu^+(t); \\ & \lambda^-(t+1)=\lambda^-(t)+\delta^{\rm Ker}(t+1).
		\end{split}\end{equation} Identical formul{\ae} apply if $s_{i(t+1)}<0$, reversing every instance of the labelling $\pm$. \\ \\ The key insight, formulated in Proposition \ref{prop: change activation} below, is that, if a certain asymmetric initialisation is imposed on $w_r(0), a_r(0)$, then no activations $w_r$ flip sign until at least one of $\mu^{\pm}$ has become large. In particular, the errors $\delta^{\rm Pred, Ker}(t)$ are identically zero until a spike has occurred in at least one of $\mu^\pm$. Up to this time, $\mu^+(t), \lambda^+(t)$ follow the dynamics \eqref{eq: SGD 2}, with $s_i$ replaced by $$ s^+_i:=\max(0, s_i).$$  The same also holds for $(\mu^-, \lambda^-)$ with $s^-_i:=\max(-s_i, 0)$, and the two parts are further independent, aside from the condition that only one of them is updated at each step.  The assertions of Theorem \ref{thrm: informal2}, which we state rigorously in Theorem \ref{thrm: RELU} below, now follow immediately from Theorems \ref{thm:lln moderate}, \ref{thm: LDP spikes}. \\ \\ In order to make this rigorous, we define \begin{definition}[$w$-biased initialisation] \label{def: w biased} We say that a probability measure $\PP$ is $w$-biased if, with probability 1, $|w_r(0)|\ge |a_r(0)|$ for every $1\le r\le n$.\end{definition} We refer also to \cite[Assumption 1]{dana2025convergence}, \cite{boursier2024simplicity,boursier2025early} for similar assumptions. As discussed above, the result claimed is only valid until (at least) one of $|\mu^{\pm}|$ becomes large. We therefore define $\tau^{\rm Pred, \pm}_M$ in the same way as \eqref{eq: define taupred linear model} for $\mu^{\pm}$ in place of $\mu$. With this fixed, the proposition is as follows. \begin{proposition}\label{prop: change activation} Let $\PP$ be any $w$-biased probability measure under which are defined the parameters $\Theta(t)=(w_r(t), a_r(t))$, updated according to \eqref{eq: SGD Relu update}. Then there exists $c>0$ such that \begin{equation}
			\label{eq: change activation conclusion} \PP\left(\sgn(w_r(t))=\sgn(w_r(0))\text{ for all }1\le r\le n, 0\le t\le \tau^{\rm Pred, +}_{c\sqrt{n}/\eta}\land \tau^{\rm Pred, -}_{c\sqrt{n}/\eta} \right)=1.
		\end{equation} \end{proposition}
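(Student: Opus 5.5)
The plan is to prove by induction on $t$ that, for each neuron $r$, the $w$-bias $|w_r(t)|\ge |a_r(t)|$ and the sign of $w_r(t)$ are both preserved, as long as the prediction driving the update is below the threshold. The key algebraic fact is the exact transformation of a single neuron under one step of \eqref{eq: SGD Relu update}. Suppose step $t+1$ uses a sample with $s_{i(t+1)}>0$; the case $s_{i(t+1)}<0$ is symmetric, with $\pm$ exchanged. Write
\[ h:=\frac{\eta}{\sqrt n}\mu^+(t)s_{i(t+1)}^2 . \]
There are two kinds of neuron.

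\emph{Inactive neurons} ($w_r(t)\le 0$). The gradient factors $\indiq(w_r(t)>0)$ and $\sigma(w_r(t))$ vanish, so $(w_r,a_r)$ is unchanged. At $w_r(t)=0$ the subdifferential convention allows an arbitrary choice of $\sigma'(0)$. I will handle this by observing that $w$-bias forces $a_r(t)=0$ whenever $w_r(t)=0$. Then any choice of $\sigma'(0)$ produces a zero update, and such a neuron stays at $(0,0)$ forever.

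\emph{Active neurons} ($w_r(t)>0$). The update is the linear map $(w,a)\mapsto(w-ha,\,a-hw)$, which gives the identity
\[ w_r(t+1)^2-a_r(t+1)^2=(1-h^2)\bigl(w_r(t)^2-a_r(t)^2\bigr). \]
Moreover, if $|h|<1$ and $|a_r(t)|\le |w_r(t)|$, then $|h a_r(t)|<|w_r(t)|$. Hence $w_r(t+1)=w_r(t)-ha_r(t)$ has the same (strict) sign as $w_r(t)$, and the identity shows that $w_r(t+1)^2-a_r(t+1)^2\ge 0$, so $w$-bias persists.

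So the whole argument reduces to ensuring $|h|<1$ at every step before the stopping time. I would choose $c:=\frac12 s_\star^{-2}$. For $t<\tau^{\rm Pred,+}_{c\sqrt n/\eta}\land\tau^{\rm Pred,-}_{c\sqrt n/\eta}$ we have $|\mu^\pm(t)|<c\sqrt n/\eta$, and therefore $|h|\le c\,s_\star^2=\tfrac12<1$. The update producing $\Theta(t+1)$ uses only $\mu^\pm(t)$, so this control covers every transition into times $t+1\le \tau^{\rm Pred,+}\land\tau^{\rm Pred,-}$. That is exactly the range in \eqref{eq: change activation conclusion}.

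The induction then runs as follows. The base case is the $w$-bias of $\PP$, which holds almost surely. The inductive step is the dichotomy above, applied neuron by neuron and pathwise. The conclusion therefore holds on the full-measure event where $\Theta(0)$ is $w$-biased, giving probability $1$.

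I expect the only delicate points to be bookkeeping rather than analysis. First, one must check that the gradient of $\tfrac12F(\Theta,s)^2$ for $s>0$ really has the form \eqref{eq: SGD Relu update}, using $\sigma(w s)=s\sigma(w)$, so that $F(\Theta,s)=s\mu^+$ and the factor is $s^2\mu^+$. Second, one must treat the degenerate neurons with $w_r=0$ correctly. Third, one must confirm the off-by-one alignment between the stopping time and the step whose $h$ is controlled.
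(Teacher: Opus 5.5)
Your proposal is correct and follows essentially the same route as the paper. The paper likewise propagates the $w$-bias through the identity $|w_r(t+1)|^2-|a_r(t+1)|^2=\bigl(1-h^2\bigr)\bigl(|w_r(t)|^2-|a_r(t)|^2\bigr)$ below the threshold $\sqrt n/(\eta s_\star^2)$, and then notes that a sign flip would require $|h\,a_r(t)|\ge |w_r(t)|$, which cannot happen there; your version is slightly more careful (strict inequality via $c=\tfrac12 s_\star^{-2}$, and explicit treatment of neurons with $w_r=0$ and of the off-by-one alignment with the stopping time), but the argument is the same.
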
 The key idea, see \eqref{eq: balance layers} below, is not new, and is essentially due to \cite{wojtowytsch2020convergence}. From this and the preceding discussion, we obtain the following result. \begin{theorem}\label{thrm: RELU} In the notation of Assumption \ref{hyp: assumption linear model} (2), suppose further that $\PP_{\mu_0^{\pm}, \lambda^{\pm}_0}$ is $w$-biased, and let $s^{\pm}_i, \lambda^{\rm MB, \pm}_{\rm crit}, \lambda^{\rm MB, \pm}_{\rm max}, G^{\pm}, \vartheta^\pm$ be as in Theorem \ref{thrm: informal}. Assume, throughout, that $\lambda_0^\pm < \lambda^{\rm MB, \pm}_{\rm max}$. Then \begin{enumerate}[label=\alph*).]
			\item {\em [At least one component inflationary:]} If  $G^+(\lambda_0^+)>0$, then there exists $\beta>0$ and, for every $0<\delta<1$, there exists $\kappa$ such that, for all $|\mu_0^+|\le M\le \kappa\sqrt{n/\eta}$ which additionally satisfy \begin{equation} \label{eq: additional restriction on M}
				\frac{G^-(\lambda_0^-)(1+\delta)\log(M/|\mu_0^+|)}{G^+(\lambda_0^+)} \le \log\left(\frac{\kappa}{|\mu_0^-|\log^{1/\beta}(n/\eta)}\sqrt{\frac{n}{\eta}}\right)	\end{equation} and $\vartheta>0$ such that \begin{equation}
					\label{eq: RELU inflationary} \begin{split} &\PP_{\mu_0^{\pm},\lambda_0^{\pm}}\left(\tau^{\rm Pred, +}_M\not \in \left[\frac{(1-\delta)\log(M/|\mu_0^+|)}{G^+(\lambda_0^+)},\frac{(1+\delta)\log(M/|\mu_0^+|)}{G^+(\lambda_0^+)}\right]\right) \\ & \hspace{3cm}\le \left(\frac{|\mu_0^+|}{M}\right)^\vartheta+\left(\frac{\kappa}{|\mu_0^-|\log^{1/\beta}(n/\eta)}\sqrt{\frac{n}{\eta}}\right)^{-\vartheta}+\exp\left(-\beta\frac{n}{\eta M^2}\right). \end{split}
				\end{equation} The same holds for $\mu^-(t)$ if $G^-(\lambda^-_0)>0$, with every $\pm$ reversed. \item {\em [Both components deflationary:]} Suppose instead that $G^\mathfrak{s}(\lambda^{\mathfrak{s}}_0)<0$ for both $\mathfrak{s}\in \{\pm\}$, and that at least one of $\lambda^\mathfrak{s}_0>\lambda^{\rm MB, \mathfrak{s}}_{\rm crit}$. Then, for any $\vartheta^\mathfrak{s}_-<\vartheta^\mathfrak{s}(\lambda^\mathfrak{s}_0)<\vartheta^\mathfrak{s}_+$, there exist $c, \beta>0$, $C<\infty$ such that, for any $$ |\mu^\mathfrak{s}_0|\le M \le c \sqrt{\frac{n}{\eta \log^\beta(n/\eta)}}$$ it holds that \begin{equation} \label{eq: RELU deflationary}C^{-1}\left(\frac{|\mu_0^-|}{M}\right)^{\vartheta^+_+}\le \PP_{\mu_0^{\pm}, \lambda_0^\pm}\left(\tau^{\rm Pred,+}_M<\infty\right) \le C \left(\frac{|\mu_0^-|}{M}\right)^{\vartheta^+_-}+C\left(\frac{c}{|\mu_0^-|\log^{1/\beta}(n/\eta)}\sqrt{\frac{n}{\eta}}\right)^{\vartheta^-_-}\end{equation} and the same holds when every instance of the signs $\pm$ are reversed.
		\end{enumerate}
			
		\end{theorem}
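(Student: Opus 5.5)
The plan is to reduce Theorem \ref{thrm: RELU} to the linear-activation results via Proposition \ref{prop: change activation}. Set $\tau^\star:=\tau^{\rm Pred,+}_{c\sqrt n/\eta}\land\tau^{\rm Pred,-}_{c\sqrt n/\eta}$. By the proposition, no activation changes sign before $\tau^\star$, so $\delta^{\rm Pred}\equiv\delta^{\rm Ker}\equiv0$ on $[0,\tau^\star]$.

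On steps with $s_{i(t+1)}>0$, the pair $(\mu^+,\lambda^+)$ then follows exactly the closed recursion \eqref{eq: SGD 2} with datum $s_i^+$, and $(\mu^-,\lambda^-)$ is frozen. Steps with $s_{i(t+1)}<0$ act symmetrically. Steps with $s_i=0$, if any, leave both pairs unchanged. Hence, up to $\tau^\star$, the $+$ component is an instance of the linear model with dataset $\{(s_i^+,p_i)\}$. In this dataset, indices with $s_i\le0$ contribute the factor $b_i=1$. This factor enters $G^+$ and $\vartheta^+$ consistently, since $\log 1=0$ and $1^\theta=1$. The same holds for the $-$ component with $\{(s_i^-,p_i)\}$.

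Any threshold $M\le\kappa\sqrt{n/\eta}$ is below $c\sqrt n/\eta$ once $\kappa$ is small, since $\eta$ is fixed. So every event in the statement concerning $\mu^+$ before $\tau^{\rm Pred,+}_M$ is a statement about the linear model, except on the event that the \emph{other} component spikes first, $\tau^{\rm Pred,-}_{c\sqrt n/\eta}<\tau^{\rm Pred,+}_M$. After that time the coupling may break.

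For part a), I apply Theorem \ref{thm:lln moderate} to the $+$ chain. This yields the window $[(1\mp\delta)\log(M/|\mu_0^+|)/G^+]$ with error $2(M/|\mu_0^+|)^{-\vartheta}+2e^{-\gamma n/(\eta M^2)}$.

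It remains to show that $\mu^-$ does not reach $M':=\kappa|\mu_0^-|^{-1}\log^{-1/\beta}(n/\eta)\sqrt{n/\eta}\cdot|\mu_0^-|$ before time $(1+\delta)\log(M/|\mu_0^+|)/G^+$. I split this into two cases.

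\emph{Case $G^-(\lambda_0^-)\le0$ (or $\lambda_0^-\le\lambda^{{\rm MB},-}_{\rm crit}$).} Theorem \ref{thm: LDP spikes}, or monotonicity in the monotone phase, bounds the probability that $\mu^-$ ever reaches $M'$ by $C(|\mu_0^-|/M')^{\vartheta}$. This is the second error term.

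\emph{Case $G^-(\lambda_0^-)>0$.} The restriction \eqref{eq: additional restriction on M} says exactly that the typical time for $\mu^-$ to reach $M'$, namely $\log(M'/|\mu_0^-|)/G^-$, exceeds $(1+\delta)\log(M/|\mu_0^+|)/G^+$. The lower half of Theorem \ref{thm:lln moderate}, applied to the $-$ chain with a slightly smaller $\delta$, then shows that $\mu^-$ does not hit $M'$ early, except with probability $(M'/|\mu_0^-|)^{-\vartheta}$ plus an exponentially small term.

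One subtlety is that each component only advances on its own steps. The number of $+$ steps by time $t$ is a Binomial$(t,p^+)$ variable with $p^+:=\sum_{s_i>0}p_i$. I handle this by noting that the Markov chain of each component, indexed by real time with the other component's steps counted as identity factors, is exactly the linear model with the augmented dataset. No time change is therefore needed.

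A union bound over these events gives \eqref{eq: RELU inflationary}, after adjusting constants and taking $\beta$ as in Proposition \ref{prop: slow escape}.

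For part b), the lower bound in \eqref{eq: RELU deflationary} follows from the lower bound of Theorem \ref{thm: LDP spikes} for the $+$ chain. I intersect with the event that the $-$ chain does not spike first. Since that event has probability $1-O((|\mu_0^-|/M')^{\vartheta^-_-})$, and hitting of $M$ occurs on timescales $O(\log)$ in the tilted measure, I handle it via the same lower-bound induction in Step 3 of that proof, restricting to short times. The upper bound is the union of $\{\mu^+$ reaches $M$ before $\tau^\star\}$, bounded by Theorem \ref{thm: LDP spikes}, and $\{\mu^-$ ever reaches $M'\}$, bounded by Theorem \ref{thm: LDP spikes} for the $-$ chain.

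The main obstacle is the lower bound in b). It requires that the good event for $+$ is not destroyed by an early $-$ spike. The error term $(|\mu_0^-|/M')^{\vartheta^-_-}$ might not be smaller than $(|\mu_0^+|/M)^{\vartheta^+_+}$ in general. If so, I would first try using the lemma's time-restricted lower bound \eqref{eq: main hitting probability conclusion LB} together with independence of the $-$ chain's increments over that short window, instead of a crude subtraction.
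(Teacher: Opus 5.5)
Your reduction is the paper's. Proposition \ref{prop: change activation} makes $(\mu^\pm,\lambda^\pm)$ exact copies of \eqref{eq: SGD 2} on $\{(s_i^\pm,p_i)\}$ up to $T=\tau^{{\rm Pred},+}_{c\sqrt n/\eta}\land\tau^{{\rm Pred},-}_{c\sqrt n/\eta}$, and your remark that the other component's steps act as identity factors is exactly why no time change is needed. Part a) and the upper bound in b) then come from the same union bounds with Theorems \ref{thm:lln moderate} and \ref{thm: LDP spikes}.

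\textbf{Main gap: the lower bound in b).} This is the step you flag yourself, and the proposal does not prove it. Subtracting $\PP(\mu^-\text{ spikes})\lesssim(|\mu_0^-|/M')^{\vartheta^-_-}$ from $C^{-1}(|\mu_0^+|/M)^{\vartheta^+_+}$ can give a negative number. Your fallback relies on an independence that fails under $\PP$ as stated: each step updates exactly one component, so the two chains compete for the same draws $i(t)$.

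The missing idea, which the paper uses, is to perform the intersection under the tilted measure $\QQ$ of Lemma \ref{lemma: new hitting negative}, built from $|\mu^+|^{\vartheta^+_+}$ alone.
\begin{itemize}
\item For $s_i\le 0$ the $+$-factor is exactly $1$, so by \eqref{eq: def pqi} $p^{\QQ,t}_i=p_i/\rho(t)$ on those indices.
\item The tilt therefore makes $-$ steps rarer but leaves their conditional law $p_i/p^-$ unchanged, where $p^-:=\sum_{s_i<0}p_i$.
\item The one-step $\QQ$-mean of $(|\mu^-(t+1)|/|\mu^-(t)|)^{\vartheta^-_-}$ equals $1+\rho(t)^{-1}(m-1)\le 1$, where $m\le 1$ is the corresponding $\PP$-mean.
\end{itemize}
So $\mu^-$ stays deflationary under $\QQ$. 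The good event then has $\QQ$-probability at least $1-\frac1{2k}-C(|\mu_0^-|/M')^{\vartheta^-_-}$. Undoing the change of measure as in Step 5 of the proof of Lemma \ref{lemma: new hitting negative} turns this into a multiplicative constant in front of $(|\mu_0^+|/M)^{\vartheta^+_+}$, rather than a subtraction.

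\textbf{Your independence idea, made precise.} It does work once you condition properly. Let $\widetilde\mu^\pm$ be the linear-model copies driven by $s^\pm_{i(t)}$, and let $\widetilde\tau^{+}_M$ be the hitting time of $M$ by $\widetilde\mu^+$. Conditionally on the sign sequence $(\sgn s_{i(t)})_{t\ge1}$, the copies $\widetilde\mu^\pm$ are independent. Given that sequence, $\widetilde\mu^-$ on its own steps is the linear model on $\{(s_i,p_i/p^-):s_i<0\}$, which has the same exponent $\vartheta^-$. Hence
$\PP(\widetilde\tau^{+}_M<\infty,\ \widetilde\mu^-\text{ never reaches }M')\ge\bigl(1-C(|\mu_0^-|/M')^{\vartheta^-_-}\bigr)\PP(\widetilde\tau^{+}_M<\infty)$.

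The two routes compare as follows.
\begin{itemize}
\item This route uses Theorem \ref{thm: LDP spikes} as a black box for both components and needs no time restriction.
\item The paper's route requires re-running the multiscale induction of Step 3 of the proof of Theorem \ref{thm: LDP spikes} while tracking $\mu^-$ under each tilt.
\item Both need $(|\mu_0^-|/M')^{\vartheta^-_-}$ bounded away from $1$, which is implicit in the statement.
\end{itemize}

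\textbf{Two smaller points in a).}
\begin{itemize}
\item If \eqref{eq: additional restriction on M} holds with equality, $\mu^-$ reaches $M'$ before $(1+\delta)\log(M/|\mu_0^+|)/G^+$ with probability that is not small (about $1/2$ for a non-degenerate walk). Shrinking the $-$ chain's window alone creates no slack. Instead, run the $+$ chain with some $\delta'<\delta$, compare $\mu^-$ against the time $(1+\delta')\log(M/|\mu_0^+|)/G^+$, and use the window $(\delta-\delta')/(1+\delta)$ for the $-$ chain.
\item Theorem \ref{thm: LDP spikes} needs $G^-<0$ strictly, so your case $G^-\le 0$ does not cover $G^-=0$. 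There the probability of $\mu^-$ ever reaching $M'$ need not be small, and a time-restricted estimate is required.
\end{itemize}
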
 \begin{proof}[Sketch Proof of Theorem \ref{thrm: RELU}] In item a), where $\mu^+$ is assumed to be in the inflationary regime, we can distinguish the cases depending on whether $G^-(\lambda^-_0)$ is non-negative or negative: The additional restriction \eqref{eq: additional restriction on M} is vacuous in the latter case. Let $\kappa=c$ be given by Proposition \ref{prop: change activation}. In either case, there exists $\widetilde{\mu}^+, \widetilde{\lambda}^+$, governed (for all $t\ge 0)$ by the dynamics \eqref{eq: SGD 2}, which agrees with $\mu^+, \lambda^+$ up to the time \begin{equation}\label{eq: stopping relu}T:=\tau^{\rm Pred, +}_{c\sqrt{n}/\eta}\land \tau^{\rm Pred, -}_{c\sqrt{n}/\eta}. \end{equation} Let $\widetilde{\TPM}$ be the hitting time \eqref{eq: define taupred linear model} for the coupled process $\widetilde{\mu}^+$. Writing $I_\delta$ for the interval appearing in \eqref{eq: RELU inflationary}, we have the containment \begin{equation}
			\left\{\tau^{\rm Pred, +}_M\not \in I_\delta \right\} \subset \{\widetilde{\TPM}\not \in I_\delta\} \cup \left\{T=\tau^{\rm Pred,-}_{c\sqrt{n}/\eta}<\frac{(1+\delta)\log(M/|\mu_0^+|)}{G^+(\lambda_0^+)}.\right\} 
		\end{equation} The probability of the first event is estimated by Theorem \ref{thm:lln moderate}, producing the first and third terms of \eqref{eq: RELU inflationary}, while on the second event, the coupling of $\mu^-$ to a copy of \eqref{eq: SGD 2} on the bases $s^-_i$ remains exact until $T$. The probability of the second event may therefore either be estimated by Theorem \ref{thm:lln moderate} if $G^-(\lambda_0^-)>0$, or Theorem \ref{thm: LDP spikes} if $G^-(\lambda_0^-)<0$, using in either case a threshold, possibly changing $\beta$, $$ M^-=\frac{\kappa}{\log^{1/\beta}(n/\eta)}\sqrt{\frac{n}{\eta}} \le \kappa \frac{\sqrt{n}}{\eta}$$ and using, if additionally $G^-(\lambda_0^-)>0$, the extra condition \eqref{eq: additional restriction on M} to ensure that the hitting time for the process $\widetilde{\mu}^-$ coupled to $\mu^-$ to reach $M^-$ is, with high probability, outside $I_\delta$. In either case, we produce the second term of \eqref{eq: RELU inflationary}, and the claim is proven. \\ \\ In item b), where both components are in the deflationary regime, we use the same coupling argument and Theorem \ref{thm: LDP spikes} to see that, for $\mathfrak{s}\in \{\pm\}$ and any $\vartheta^\mathfrak{s}_-<\vartheta^\mathfrak{s}$, there exists $c\le \kappa,C$ such that, for \begin{equation} \label{eq: deflationary range of scales RELU}M\le \frac{c}{\log^{1/\beta}(n/\eta)}\sqrt{\frac{n}{\eta}},\end{equation} it holds that \begin{equation} \label{eq: couple deflationary}
			\PP_{\mu_0^{\pm}, \lambda_0^{\pm}}\left(\tau^{{\rm Pred}, \mathfrak{s}}_M < T\right) \le C\left(\frac{|\mu^\mathfrak{s}_0|}{M}\right)^{\vartheta^{\mathfrak{s}}_-}.
		\end{equation} As in the previous part, let $\widetilde{\mu}^+$ be governed by \eqref{eq: SGD 2} for $\lambda_0^+, s_i^+$, so that it agrees with $\mu^+$ up to time $T$. For the upper bound of \eqref{eq: RELU deflationary}, we have the containment \begin{equation}\begin{split}
			\{\tau^{\rm Pred,+}_M<\infty\}&\subset \{\widetilde{\TPM}<\infty\} \cup\{\tau^{\rm Pred,-}_{\kappa\sqrt{n}/\eta}<T\} \\ &\subset \{\widetilde{\TPM}<\infty\} \cup\{\tau^{\rm Pred,-}_{M^-}<T\}. \end{split}
		\end{equation} Choosing $M^-$ to be the maximum value allowed by \eqref{eq: deflationary range of scales RELU}, the probabilities of both events may be bounded by \eqref{eq: couple deflationary}. For the lower bound, we can repeat the arguments leading to the lower bound of Theorem \ref{thm: LDP spikes} directly, using martingales depending only on $\mu^+$ and noting that, under the tilted measure, $\mu^-$ has the same (deflationary) dynamics and, with high probability, does not become large.
			
		\end{proof} We conclude with the \begin{proof}[Proof of Proposition \ref{prop: change activation}] From \eqref{eq: SGD Relu update}, on draws where $s_{i(t+1)}>0$, we compute \begin{equation}\begin{split} \label{eq: balance layers}
			|w_r(t+1)|^2-|a_r(t+1)|^2&=|w_r(t)|^2-|a_r(t)|^2 \\ & + \frac{\eta^2}{n}(\mu^+(t))^2s_{i(t+1)}^4(|a_r(t)|^2-|w_r(t)|^2)\indiq(w_r(t)s_{i(t+1)}>0) \\ & \hspace{-2cm}=\left(1-\frac{\eta^2}{n}(\mu^+(t))^2s_{i(t+1)}^4 \indiq(w_r(t)s_{i(t+1)}>0)\right)(|w_r(t)|^2-|a_r(t)|^2)
		\end{split}\end{equation} with an equivalent formula for when $s_{i(t+1)}<0$. By hypothesis, $|w_r(0)|^2-|a_r(0)|^2\ge 0$ for all neurons. The multiplicative factor in the update rule above can only become negative when $$ \frac{(\mu^{\pm}(t))^2s^4_\star\eta}{n}>1 $$ which is only possible when one of $\mu^\pm(t)$ exceeds $\sqrt{n}/(\eta s^2_\star)$, implying that $$t\ge \tau^{\rm Pred,+}_{\sqrt{n}/(\eta s^2_\star)}\land \tau^{\rm Pred, -}_{\sqrt{n}/(\eta s^2_\star)}.   $$  At all times up to and including this final time, repeated uses of \eqref{eq: balance layers}, and its counterpart for  draws with $s_{i(t+1)}<0$, write $|w_r(s)|^2-|a_r(s)|^2$ as a product of non-negative factors for all $1\le r\le n$, which implies that $|a_r(s)|\le |w_r(s)|$ up to this time. Returning to \eqref{eq: SGD Relu update}, the only way in which $w_r$ can change sign is when $$ \left|\frac{\eta}{\sqrt{n}}\mu^{\pm}(t) a_r(t) s_{i(t+1)}^2\right|\ge |w_r(t)|$$ which is only possible if either $|a_r(t)|\ge |w_r(t)|$ or $|\mu^{\pm}(t)|\ge \sqrt{n}/(\eta s_\star^2)$, and the proof is complete. \buildingsite
			
		\end{proof}

\appendix
\section{Proof of Theorem \ref{thm:lln moderate}}\label{appendix: mod inf} We now give the proof of Theorem \ref{thm:lln moderate}, which was deferred in the main text. The key arguments are similar to those used in the proof of Theorem \ref{thm: LDP spikes}, to which we will refer back as needed.
\begin{proof}For clarity, we divide into steps. \step{1. Parameter Choices.} First, given $\lambda_0$ and a window parameter $\delta>0$, we choose $\epsilon, \kappa>0$ such that the upper and lower drifts \begin{equation} \label{eq: upper lower drifts def}
	G^{\pm}(\lambda_0; \epsilon, \kappa):=\sum_{i=1}^m p_i \log b^{\pm}_i(\epsilon, \kappa, \lambda_0)
\end{equation} for the shifted logarithmic bases $b^{\pm}_i$ given by (\ref{eq: def bi+} - \ref{eq: def bi-}) satisfy \begin{equation}
	\label{eq: upper lower drifts} \frac{1}{\sqrt{1+\delta}}G(\lambda_0)<G^-(\lambda_0)<G(\lambda)<G^+(\lambda)<\frac1{\sqrt{1-\delta}}G(\lambda)
\end{equation} and such that $\max_i b^+_i\le 4$. We let $\alpha>0$ be given by \begin{equation} \label{eq: def gamma appendix proof} \alpha:=\sum_{i=1}^m p_i (b^-_i)^2-1 >0 \end{equation} which is strictly positive as a result of \eqref{eq: upper lower drifts} and Jensen's inequality. With this choice of $\epsilon, \kappa$, we choose \begin{equation} \label{eq: clever choice of c} c=\min\left(1, \kappa, \sqrt{\frac{\epsilon G(\lambda_0)}{2s_\star^2}}\right)\end{equation} Throughout, we write $T$ for the stopping time \begin{equation}
	\label{eq: introduce T} T:=\TPM \land \TK{\lambda_0-\epsilon}.
\end{equation} \step{2. Controlling Decrease of the Kernel} In this step, we show how the arguments of Proposition \ref{prop: slow escape} may be repeated and, together with the choice \eqref{eq: clever choice of c}, lead to an estimate \begin{equation} \label{eq: khasminskii conclusion 2}
	\PP_{\mu_0, \lambda_0}(\TK{\lambda_0-\epsilon}<\TPM)\le 2 \exp\left(-\gamma \frac{n}{\eta M^2}\right).
\end{equation} As in Lemma \ref{lemma: new hitting negative}, for $t<T$ the prediction satisfies \begin{equation}
	\label{eq: repeat khasminskii 1} \EE_{\mu_0, \lambda_0}[|\mu(t+1)|^2|\cF_t, t<T]>(1+\alpha) |\mu(t)|^2
\end{equation} where $\alpha>0$ is given by \eqref{eq: def gamma appendix proof}. It follows that \begin{equation}
	\label{eq: supermg repeat khasminskii} Z_t:=\delta \sum_{s< t} |\mu(s)|^2 - |\mu(t)|^2
\end{equation} makes $Z^T$ into a supermartingale. Since $Z_0\le 0$, it follows that $\EE_{\lambda_0, \mu_0} Z_T\le 0$ by optional stopping, producing \begin{equation}
	\label{eq: pre khasminski 2} \frac{\alpha}{32 M^2}\EE_{\mu_0, \lambda_0}\sum_{s<T} |\mu(s)|^2 \le \frac12.
\end{equation} Using the Khas'minskii lemma \cite{stummer2000exponentials} as in Proposition \ref{prop: slow escape}, and that $|\mu(T)|\le 4M$, we find \begin{equation}
	\mathbb{E}_{\mu_0, \lambda_0}\left[\exp\left(\frac{\alpha}{32 M^2}\sum_{s<T} |\mu(s)|^2\right)\right]\le 2
\end{equation} whence \begin{equation}
	\PP_{\lambda_0, \mu_0}\left(\TK{\lambda_0-\epsilon}<\TPM\right)\le 2\exp\left(-\frac{\alpha \epsilon n}{32 a_+ \eta M^2}\right)
\end{equation} which is of the form desired \eqref{eq: khasminskii conclusion 2}, with $$ \gamma:=\frac{\alpha \epsilon}{32 a_+ \eta}.$$\\ \step{3. Tail Estimates}  Finally, we show that $\vartheta>0$ may be chosen so that \begin{equation} \label{eq: upper tail}
	\PP_{\mu_0, \lambda_0}\left(\TPM<(1-\delta)\frac{\log(M/|\mu_0)}{G(\lambda_0)}, \TK{\lambda_0-\epsilon}\ge \TPM\right) \le (|\mu_0|/M)^{\vartheta};
\end{equation}\begin{equation} \label{eq: lower tail}
	\PP_{\mu_0, \lambda_0}\left(\TPM>(1+\delta)\frac{\log(M/|\mu_0)}{G(\lambda_0)}, \TK{\lambda_0-\epsilon}\ge \TPM\right) \le (|\mu_0|/M)^{\vartheta}
\end{equation} which, together with \eqref{eq: khasminskii conclusion 2}, produce the claim in the theorem. We prove \eqref{eq: upper tail}; the claim \eqref{eq: lower tail} follows by the same reasoning. Letting $t_-$ be the time $$ t_-:=(1-\delta)\frac{\log(M/|\mu_0|)}{G(\lambda_0)}$$ appearing in \eqref{eq: upper tail}, we start by observing that, on the event in \eqref{eq: upper tail}, we have for any $\zeta \in (1, (1-\delta)^{-1/2})$, \begin{equation}\label{eq: premg estimate} \sup_{t\le t_-}\left(\log\frac{|\mu(t)|}{|\mu_0|}-t\zeta G^+(\lambda_0)\right)\ge \log(M/|\mu_0|)\left(1-\zeta(1-\delta)\frac{G^+(\lambda_0)}{G(\lambda_0)}\right) \end{equation} and, writing $\alpha(\zeta)$ for the final parenthesis, $\alpha(\zeta)>0$ for $\zeta<(1-\delta)^{-1/2}$ thanks to \eqref{eq: upper lower drifts}. As soon as $\zeta>1$, there exists positive $\theta=\theta(\zeta)>0$ such that $$ \sum_{i=1}^m p_i(b^+_i)^\theta e^{-\theta \zeta G^+(\lambda_0)}=1$$ which implies, as in Lemma \ref{lemma: new hitting negative}, that $$ Z_t:=\left(\frac{|\mu(t\land T)|}{|\mu_0|}\right)^{\theta(\zeta)} \exp\left(-\zeta\theta(\zeta)(t\land T)G^+(\lambda_0)\right)$$ is a non-negative supermartingale, where we recall the notation $T:=\TPM\land \TK{\lambda_0-\epsilon}$. Since this is the exponential of the process in \eqref{eq: premg estimate}, we have the containment $$ \{\TPM\le t_-, \TK{\lambda_0-\epsilon}\ge \TPM\}\subset \left\{ Z_T\ge \left(\frac{M}{|\mu_0|}\right)^{\alpha(\zeta)\theta(\zeta)}\right\} $$ which has, by optional stopping, probability at most \begin{equation}
	\label{eq: mg est} \PP_{\mu_0, \lambda_0}\left(Z_T\ge \left(\frac{M}{|\mu_0|}\right)^{\alpha(\zeta)\theta(\zeta)}\right) \le \left(\frac{|\mu_0|}{M}\right)^{\alpha(\zeta)\theta(\zeta)}\EE_{\mu_0, \lambda_0}[Z_0]=\left(\frac{|\mu_0|}{M}\right)^{\alpha(\zeta)\theta(\zeta)}.
\end{equation} The conclusion \eqref{eq: upper tail} follows by optimising in $\zeta$ to find the exponent $$\vartheta:=\sup_{\zeta\in (1, (1-\delta)^{-1/2})} \alpha(\zeta)\theta(\zeta).$$ The bound \eqref{eq: lower tail}, possibly with a different exponent, follows by applying the same reasoning to supermartingales of the form $$ Z_t:=\left(\frac{|\mu(t\land T)|}{|\mu_0|}\right)^{-\theta}\exp\left(\zeta \theta(t\land T)G^-(\lambda_0)\right).$$ \end{proof}

\paragraph{Acknowledgements} The first author acknowledges support by the Max Planck Society through the Research Group ``Stochastic Analysis in the Sciences (SAiS)'' and the DFG CRC/TRR 388 ``Rough Analysis, Stochastic Dynamics and Related Fields'', Project A11. The second author is supported by the Royal Commission for the Exhibition of 1851 and acknowledges support by the European Union (ERC, FluCo, grant agreement No. 101088488), as well as the hospitality of the group ``Pattern Formation, Energy Landscapes and Scaling Laws'' of the Max Planck Institute for Mathematics in the Sciences. Views and opinions expressed are however those of the author(s) only and do not necessarily reflect those of the European Union or of the European Research Council. Neither the European Union nor the granting authority can be held responsible for them.

  \bibliographystyle{plain}
\bibliography{literature}

\end{document}